\definecolor{mydarkblue}{rgb}{0,0.08,0.45}
\setlist{nolistsep,leftmargin=*}
\newcommand{\customlabel}[2]{%
   \protected@write \@auxout {}{\string \newlabel {#1}{{#2}{\thepage}{#2}{#1}{}} }%
   \hypertarget{#1}{}
}
\DeclarePairedDelimiterX{\infdivx}[2]{(}{)}{%
  #1\;\delimsize\|\;#2%
}
\newtheorem{thm}{Theorem}
\newtheorem{prop}{Proposition}
\newtheorem{lem}{Lemma}
\newtheorem{cor}{Corollary}
\newtheorem{rem}{Remark}[section]
\def\Id{\boldsymbol{I}}
\newcommand{\calT}{\mathcal{T}}
\newcommand{\calH}{\mathcal{H}}
\newcommand{\calS}{\mathcal{S}}
\newcommand{\calC}{\mathcal{C}}
\newcommand{\calU}{\mathcal{U}}
\newcommand{\calN}{\mathcal{N}}
\newcommand{\calO}{\mathcal{O}}
\newcommand{\calL}{\mathcal{L}}
\newcommand{\calX}{\mathcal{X}}
\newcommand{\calY}{\mathcal{Y}}
\newcommand{\calD}{\mathcal{D}}
\newcommand{\R}{\mathbb{R}}
\newcommand{\N}{\mathbb{N}}
\newcommand{\Ub}{\mathbb{U}}
\newcommand{\Pb}{\mathbb{P}}
\newcommand{\Qb}{\mathbb{Q}}
\newcommand{\closer}[3]{{\kern-#1ex{#2}\kern-#3ex}}
\DeclareMathOperator*{\argmin}{arg\,min}
\mathchardef\mhyphen="2D
\DeclareMathOperator{\E}{\mathbb{E}}
\newcommand\reallywidehat[1]{\arraycolsep=0pt\relax%
\begin{array}{c}
\stretchto{
  \scaleto{
    \scalerel*[\widthof{\ensuremath{#1}}]{\kern-.5pt\bigwedge\kern-.5pt}
    {\rule[-\textheight/2]{1ex}{\textheight}} %
  }{\textheight} %
}{0.5ex}\\           %
#1\\                 %
\rule{-1ex}{0ex}
\end{array}
}
\newcommand{\nkq}{\text{NKQ}}
\newcommand{\kq}{\text{KQ}}
\newcommand{\ckq}{\text{CKQ}}
\newcommand{\mlmc}{\text{MLMC}}
\begin{document}

\icmltitlerunning{Nested Expectations with Kernel Quadrature}

\twocolumn[
\icmltitle{Nested Expectations with Kernel Quadrature}

\begin{icmlauthorlist}
\icmlauthor{Zonghao Chen}{yyy}
\icmlauthor{Masha Naslidnyk}{yyy}
\icmlauthor{Fran\c{c}ois-Xavier Briol}{yyyy}
\end{icmlauthorlist}

\icmlaffiliation{yyy}{Department of Computer Science, University College London}
\icmlaffiliation{yyyy}{Department of Statistical Science, University College London}

\icmlcorrespondingauthor{Zonghao Chen}{zonghao.chen.22@ucl.ac.uk}

\icmlkeywords{Kernel method, Kernel quadrature, Numerical integration}

\vskip 0.3in
]

\printAffiliationsAndNotice{}  %

\begin{abstract}
This paper considers the challenging computational task of estimating nested expectations. Existing algorithms, such as nested Monte Carlo or multilevel Monte Carlo, are known to be consistent but require a large number of samples at both inner and outer levels to converge. Instead, we propose a novel estimator consisting of nested kernel quadrature estimators and we prove that it has a faster convergence rate than all baseline methods when the integrands have sufficient smoothness. 
We then demonstrate empirically that our proposed method does indeed require fewer samples to estimate nested expectations on real-world applications including Bayesian optimisation, option pricing, and health economics.
\end{abstract}

\section{Introduction}
We consider the computational task of estimating a nested expectation, which is the expectation of a function that itself depends on another unknown conditional expectation. More precisely, let $\Qb$ be a Borel probability measure with density $q$ on $\Theta$ and $\Pb_\theta$ a Borel probability measure with density $p_\theta$ on $\calX \subseteq \mathbb{R}^{d_{\calX}}$ which is parameterized by $\theta \in \Theta \subseteq \mathbb{R}^{d_{\Theta}}$.  Given integrable functions $f: \R \to \R$ and $g:\calX \times \Theta \to \R$, we are interested in estimating:
\vspace{-5pt}
\begin{align}\label{eq:nested}
    I & := \E_{\theta \sim \Qb} \left[ f \left(\E_{X \sim \Pb_\theta} \left[ g(X, \theta) \right] \right) \right]\\
    &= \underbrace{\int_{\Theta} f\Bigg(\underbrace{\int_{\mathcal{X}} g(x,\theta) p_{\theta}(x) dx}_{\text{ inner conditional expectation}} \Bigg) q(\theta) d\theta}_{\text{outer expectation}}. \nonumber
\end{align}
Nested expectations arise within a wide range of tasks, such as the computation of objectives in Bayesian experimental design \cite{Beck2020,Goda2020,Rainforth2024}, of acquisition functions in active learning and Bayesian optimisation \cite{Ginsbourger2010,Yang2024}, of objectives in distributionally-robust optimisation \cite{Shapiro2023,Bariletto2024,Dellaporta2024}, and of statistical divergences \citep{Song2020,Kanagawa2019}.
Computing nested expectations is also a key task beyond machine learning, including in fields ranging from value of information for decision making \cite{giles2019decision, Mala01102024} to finance and insurance \cite{Gordy2010,Giles2019}, manufacturing \cite{Andradottir2016} 
and geology \cite{Goda2018}.

The estimation of nested expectations is particularly challenging since there are two levels of intractability: the inner conditional expectation, and the outer expectation, both of which must be approximated accurately in order to approximate the nested expectation $I$ accurately. The most widely used algorithm for this problem is \emph{nested Monte Carlo (NMC)} \cite{Lee2003,Hong2009,rainforth2018nesting}. It approximates the inner and outer expectations using Monte Carlo estimators with $N$ and $T$ samples respectively. NMC is consistent under mild conditions, but has a relatively slow rate of convergence. Depending on the regularity of the problem, existing results indicate that we require either $\mathcal{O}(\Delta^{-3})$ or $\calO(\Delta^{-4})$ evaluations of $g$ to obtain a root mean squared error smaller or equal to $\Delta$. 
This tends to be prohibitively expensive; for example, we would expect in the order of either $1$ or $100$ million observations to obtain an error of $\Delta=0.01$. This is infeasible for many applications where obtaining samples or evaluating $g$ is expensive.

This issue has led to the development of a number of methods aiming to reduce the cost. \citet{Bartuska2023} proposed replacing the Monte Carlo estimators with quasi-Monte Carlo (QMC) \cite{Dick2013}. This algorithm, called \emph{nested QMC (NQMC)}, requires only $\calO(\Delta^{-2.5})$ function evaluations to obtain an error of size $\Delta$ (so that we only need in the order of $100,000$ observations for an error of $\Delta=0.01$). However, NQMC requires strong regularity assumptions which may not hold in practice (a monotone second and third derivative for $f$). Separately, \citet{bujok2015multilevel,Giles2019, giles2019decision} proposed to use \emph{multi-level Monte Carlo (MLMC)} and showed that this can further reduce the number of function evaluations to $\calO(\Delta^{-2})$ (so that we only need in the order of $10,000$ observations for an error of $\Delta=0.01$). The algorithm has relatively mild assumptions on $f$ and $g$, which makes it broadly applicable but sub-optimal for applications where $f$ and $g$ are smooth and where we might therefore expect further reductions in cost. 

To fill this gap in the literature, we propose a novel algorithm called \emph{nested kernel quadrature (NKQ)}, which is presented in \Cref{sec:methodology}. NKQ replaces the inner and outer MC estimators of NMC with kernel quadrature (KQ) estimators \citep{sommariva2006numerical}. We show in \Cref{sec:theory} that NKQ requires only $\tilde{\calO}(\Delta^{-\frac{d_\calX}{s_\calX} - \frac{d_\Theta}{s_\Theta}})$ function evaluations to guarantee an error smaller or equal to $\Delta$. Here $\tilde{\calO}$ denotes $\calO$ up to logarithmic terms,  $s_{\mathcal{X}}, s_{\Theta}$ are constants relating to the smoothness of $f$ and $g$ in $\mathcal{X}$ and $\Theta$, and we have $s_{\mathcal{X}} > d_{\mathcal{X}}/2$ and $s_{\Theta} > d_{\Theta}/2$. In the least favorable case, we therefore recover the $\calO(\Delta^{-4})$ of NMC, but when the integrand is smooth and the dimension is not too large, we are able to have a cost which scales better than $\calO(\Delta^{-2})$ and the method significantly outperforms all competitors. In those cases, we may only need in the order of a few hundred or thousands observations for an error of $\Delta=0.01$. This fast rate is demonstrated numerically in \Cref{sec:experiments}, where we show that NKQ can provide significant accuracy gains in problems from Bayesian optimisation to option pricing and health economics. Moreover, we show that NKQ can be combined with QMC and MLMC, providing an avenue to further accelerate convergence.

\section{Background}\label{sec:background}

\paragraph{Notation}
Let $\N_+$ denote the positive integers and $\N = \N_+ \cup \{ 0 \}$. 
For $h:\calX \subseteq \R^d \to \R$, $x_{1:N}$ and $h(x_{1:N})$ are vectorized notation for $[x_1, \ldots, x_N]^\top \in \R^{N \times d}$ and $[h(x_1), \ldots, h(x_N)]^\top \in \R^{N \times 1}$ respectively.
For a vector $a = [a_1 ,\ldots, a_d]^\top \in \R^{d}$, define $\|a\|_{b} = ( \sum_{i=1}^{d} a_i^b)^{1/b}$.
For a distribution $\pi$ supported on $\calX$ and $0 < p \leq \infty$, $L_p(\pi)$ is the space of functions $h : \calX \to \R$ such that $\Vert h \Vert_{L_p(\pi)} := \mathbb{E}_{X \sim \pi} [|h(X)|^p] < \infty$ and 
$L_\infty(\pi)$ is the space of functions that are bounded $\pi$-almost everywhere. 
When $\pi$ is the Lebesgue measure $\calL_\calX$ over $\calX$, we write $L_p(\calX) := L_p(\calL_\calX)$. 
For $\beta \in \mathbb{N}$, $C^\beta(\calX)$ denotes the space of functions whose partial derivatives of up to and including order $\beta$ are continuous. 
For two positive sequences $\{a_n\}_{n \in \N_+}$ and $\{b_n\}_{n \in \N_+}$, $a_n \asymp b_n$ means that $\lim_{n\to \infty} \frac{a_n}{b_n}$ is a positive constant, $a_n = \calO(b_n)$ means that $\lim_{n\to \infty} \frac{a_n}{b_n} < \infty$ and $a_n = \Tilde{\calO}(b_n)$ means that $a_n = \calO(b_n (\log b_n)^r)$ for some positive constant $r$.

\vspace{-1mm}
\paragraph{Existing Methods for Nested Expectations}
Standard Monte Carlo (MC) is an estimator which can be used to approximate expectations/integrals through samples \citep{Robert2004}. Given an arbitrary function $h:\calX \rightarrow \mathbb{R}$ with $h\in L_1(\pi)$, and $N$ independent and identically distributed (i.i.d.) realisations $x_{1:N}$ from $\pi$, standard MC approximates the expectation of $h$ under $\pi$ as follows:
\vspace{-5pt}
\begin{equation*}
    \mathbb{E}_{X \sim \pi}[h(X)] \approx \frac{1}{N}\sum_{n=1}^N h(x_n). 
\end{equation*}
For the nested expectation $I$ in \eqref{eq:nested}, the use of a MC estimator for both the inner and outer expectation leads to the \emph{nested Monte Carlo (NMC)} estimator \cite{Hong2009,rainforth2018nesting} given by
\begin{align}\label{eq:nmc}
    \hat{I}_{\text{NMC}} := \frac{1}{T}\sum_{t=1}^T f\left( \frac{1}{N} \sum_{n=1}^N g(x_n^{(t)}, \theta_t) \right) ,
\end{align}
where $\theta_{1:T}$ are $T$ i.i.d. realisations from $\Qb$ and $x_{1:N}^{(t)}$ are $N$ i.i.d. realisations from $\Pb_{\theta_t}$ for each $t \in \{1, \ldots, T\}$. 
The root mean-squared error of this estimator goes to zero at rate $\calO(N^{-\frac{1}{2}} + T^{-\frac{1}{2}} )$ when $f$ is Lipschitz continuous~\citep{rainforth2018nesting}. 
Hence, taking $N=T=\calO(\Delta^{-2})$ leads to an algorithm which requires $N\times T=\calO(\Delta^{-4})$ function evaluations to obtain error smaller or equal to $\Delta$. When $f$ has bounded second order derivatives, the root mean-squared error converges at the improved rate of $\calO (N^{-1} + T^{-\frac{1}{2}})$~\citep{rainforth2018nesting}.
Taking $N = \sqrt{T} =\calO(\Delta^{-1})$ therefore leads to an algorithm requiring $N\times T = \calO(\Delta^{-3})$ function evaluations to get an error of $\Delta$~\citep{Gordy2010,rainforth2018nesting}. 
Despite its simplicity, NMC therefore requires a large number of evaluations to reach a given $\Delta$. 

As a result, two extensions have been proposed. Firstly, \citet{Bartuska2023} proposed to use  \eqref{eq:nmc}, but to replace the i.i.d. samples with QMC points. 
QMC points are points which aim to fill $\mathcal{X}$ in a somewhat 
uniform fashion~\citep{Dick2013}, with well-known examples including Sobol or Halton sequences. \citet{Bartuska2023} used randomized QMC points, which removes the bias of standard QMC by using a randomized low discrepancy sequence~\citep{ owen2003quasi}. For nested expectations, they showed that nesting randomized QMC estimators can lead to a faster convergence rate and hence a smaller cost of $\calO(\Delta^{-2.5})$. However, the approach is only applicable when $\Pb_\theta$ and $\Qb$ are Lebesgue measures on unit cubes (or smooth transformations thereof), and the rate only holds when $f$ has monotone second and third order derivatives.

Alternatively, \citet{bujok2015multilevel,Giles2015,Giles2019, giles2019decision} proposed to use \emph{multi-level Monte Carlo} (MLMC), which decomposes the nested expectation using a telescoping sum on the outer integral, then approximates each term with MC. The integrand with the $\ell$'th fidelity level is constructed as the composition of $f$ with an inner MC estimator based on $N_\ell$ samples. 
More precisely, the MLMC treatment of nested expectations consist of using:
\vspace{-5pt}
\begin{align}\label{eq:mlmc}
    & \hat{I}_{\text{MLMC}}  :=  
    \sum_{l=1}^L  \frac{1}{T_\ell}\sum_{t=1}^{T_\ell} (f(J_{\ell,t})-f(J_{\ell-1,t})) + \frac{1}{T_0} \sum_{t=1}^{T_0} f(J_{0,t}) \nonumber \\
    & \text{where } J_{\ell,t}  := \frac{1}{N_\ell} \sum_{n=1}^{N_\ell} g(x_n^{(t)}, \theta_t) \text{ for } \ell \in \{0,\ldots,L\},
\end{align}
Under some regularity conditions, Theorem 1 from \citet{Giles2015} shows that taking $N_\ell = \calO(2^\ell)$ and $T_\ell= \calO(2^{-2 \ell} \Delta^{-2})$ leads to an estimator requiring $\calO(\Delta^{-2})$ function evaluations to obtain root mean squared error smaller or equal to $\Delta$.  
Although MLMC has the best known efficiency 
for nested expectations, $N_l$ and $T_l$ need to grow exponentially with $l$, and we therefore need a very large sample size for its theoretical convergence rate to become evident in practice~\citep{Giles2019,giles2019decision}.
MLMC also requires making several challenging design choices, including the coarsest level to use, and the number of samples per level.
Most importantly, MLMC as well as all existing methods fail to account for the smoothness of the functions $f$ and $g$. 

\paragraph{Kernel Quadrature}

\emph{Kernel quadrature (KQ)} \citep{sommariva2006numerical,Rasmussen2003,Briol2019PI} provides an alternative to standard MC for (non-nested) expectations. Consider an arbitrary function $h:\calX \rightarrow \mathbb{R}$ and distribution $\pi$ on $\calX$, and suppose we would like to approximate $\E_{X \sim \pi}[h(X)]$. KQ is an estimator which can be used when $h$ is sufficiently regular, in the sense that it belongs to a reproducing kernel Hilbert space (RKHS) \cite{Berlinet2004} $\calH_k$ with kernel $k$. We recall that for a positive semi-definite kernel $k : \calX \times \calX \to \R$, the RKHS $\calH_k$ is a Hilbert space with inner product $\langle \cdot,\cdot \rangle_{\calH_k}$ and norm $\Vert \cdot \Vert_{\calH_k}$~\citep{aronszajn1950theory} such that: (i) $k(x,\cdot) \in \calH_k$ for all $x \in \calX$, and (ii) the reproducing property holds, i.e. for all $h \in \calH_k$, $x\in \calX$,  $h(x)=\langle h,k(x,\cdot) \rangle_{\calH_k}$. An important example of  RKHS is the Sobolev space $W_2^s(\calX)$ ($s > \frac{d}{2}$), which consists of functions of certain smoothness encoded through the square integrability of their weak partial derivatives up to order $s$, 
\begin{align}\label{eq:defi_sobolev}
    W_2^s(\calX) := &\big \{h \in L_2(\calX): D^\beta h \in L_2(\calX) \nonumber \\ &\text { for all } \beta \in \mathbb{N}^d \text { with }|\beta| \leq s \big \}, \quad s \in \N_+
\end{align}
where $D^\beta f$ denotes the $\beta$-th (weak) partial derivative of $f$. 

Assuming $h \in \calH_k$ and $\mathbb{E}_{X \sim \pi}[\sqrt{k(X, X)}] < \infty$, 
the KQ estimator $\hat{I}_{\kq} = \sum_{n=1}^N w_n h(x_n)$ uses weights obtained by minimizing an upper bound on the absolute error:
\vspace{-8pt}
\begin{align*}
    \left|I- \hat{I}_{\kq} \right| &= \Big| \E_{X \sim \pi}[h(X)] - \sum_{n=1}^N w_n h(x_n) \Big| \\
    &\leq \|h\|_{\calH_k} \Big\| \mu_\pi(X) - \sum_{n=1}^N w_n k\left(x_n,\cdot \right)  \Big\|_{\calH_k},
\end{align*}
where $\mu_\pi(\cdot) = \mathbb{E}_{X \sim \pi}[k(X,\cdot)]$ is the \emph{kernel mean embedding (KME)} of $\pi$ in the RKHS $\mathcal{H}_k$~\citep{smola2007hilbert}.
Minimizing the right hand side with an additive regulariser term $\lambda \|f\|_{\calH_k}$ over the choice of weights leads to the following KQ estimator:
\begin{align}\label{eq:kq}
     \hat{I}_\kq := \mu_\pi(x_{1:N}) \left( \boldsymbol{K} + N \lambda \boldsymbol{I}_N \right)^{-1}h(x_{1:N}),
\end{align}
where $\boldsymbol{I}_N$ is the $N \times N$ identity matrix, $\boldsymbol{K} = k(x_{1:N}, x_{1:N}) \in \mathbb{R}^{N \times N}$ is the Gram matrix and $\lambda \geq 0 $ is a regularisation parameter ensuring the matrix is numerically invertible. 
The KQ weights are given by $w_{1:N} = \mu_\pi(x_{1:N}) ( \boldsymbol{K} + N \lambda \boldsymbol{I}_N )^{-1}$ and are optimal when $\lambda=0$.

KQ takes into account the structural information that $h \in \calH_k$ so the absolute error $|I- \hat{I}_{\kq}|$ goes to $0$ at a fast rate as $N \rightarrow \infty$.
Specifically, when the RKHS $\calH_k$ is norm-equivalent to the Sobolev space $W_2^s(\calX)$ ($s > \frac{d}{2}$), KQ achieves the rate $\calO(N^{-\frac{s}{d}})$~\citep{Kanagawa2019adaptive,Kanagawa2017convergence}. This is known to be minimax optimal ~\citep{novak2006deterministic,novak2016some}, and significantly faster than the $\calO(N^{-\frac{1}{2}})$ rate of standard MC. 
Interestingly, existing proof techniques that obtain this rate take $\lambda = 0$ in \eqref{eq:kq} and require the Gram matrix $\boldsymbol{K}$ to be invertible, whilst the new proof technique based on kernel ridge regression in this paper obtains the same optimal rate while allowing a positive regularization $\lambda \asymp N^{-\frac{2s}{d}}(\log N)^{\frac{2s+2}{d}}$, which improves numerical stability when inverting $\boldsymbol{K}$. (See \Cref{rem:stage_one_error_and_standard_kq})

Despite the optimality of the KQ convergence rate, the rate constant can be reduced by selecting points $x_{1:N}$ other than through i.i.d. sampling. Strategies include importance sampling~\citep{Bach2015,Briol2017SMCKQ}, QMC point sets~\citep{Briol2019PI,Jagadeeswaran2018,Bharti2023,Kaarnioja2025}, realisations from determinental point processes~\citep{belhadji2019kernel}, point sets with symmetry properties~\citep{Karvonen2017symmetric, Karvonen2019} and adaptive designs~\citep{osborne2012active,gunter2014sampling,Briol2015, gessner2020active}. 
Most relevant to our work is the combination of KQ with MLMC to improve accuracy in multifidelity settings \cite{li2023multilevel}.

Two main drawbacks of KQ compared to MC are the worst-case computational cost of $\calO(N^3)$ (due to computation of the inverse of the Gram matrix), and the need for a closed-form expression of the KME $\mu_\pi$. Fortunately, numerous approaches can mitigate these drawbacks. To reduce the cost, one can use geometric properties of the point set~\citep{Karvonen2017symmetric, Karvonen2019,Kuo2024}, Nyström approximations~\citep{Hayakawa2022,Hayakawa2023}, randomly pivoted Cholesky~\citep{Epperly2023}, or the fast Fourier transform~\citep{Zeng2009}.
To obtain a closed-form KME, KQ users typically refer to existing derivations (see Table 1 in \citet{Briol2019PI} or   \citet{Wenger2021}), or use Stein reproducing kernels \citep{Oates2017,Oates2016CF2,Si2020,Sun2021}. 

In this paper, we tackle both drawbacks  through a change of variable trick. 
Suppose we can find a continuous transformation map $\Phi$ such that $x_{1:N} = \Phi(u_{1:N})$ where $u_{1:N}$ are samples from a simpler distribution $\Ub$ of our choice. 
A direct application of change of variables theorem (Section 8.2 of \citet{stirzaker2003elementary}) proves that $\E_{X \sim \pi} h(X)  = \int_\calU h(\Phi(u)) d\Ub(u)$, so the integrand changes from $h: \calX \to \R$ to $h \circ \Phi: \calU \to \R$ and the kernel quadrature estimator becomes $\hat{I}_{\kq} = \mu_{\Ub}(u_{1:N}) \left(  \boldsymbol{K}_{\calU} + N \lambda \Id_N \right)^{-1} (h \circ \Phi)(u_{1:N})$, where $\boldsymbol{K}_{\calU} = k_{\calU}(u_{1:N}, u_{1:N})$. 
The measure $\Ub$ is typically chosen such that the KME is known in closed-form, and the KQ weights $\mu_{\Ub}(u_{1:N}) \left(  \boldsymbol{K}_{\calU} + N \lambda \Id_N \right)^{-1}$ can be pre-computed and stored so that KQ becomes a weighted average of function evaluations with $\calO(N)$ computational complexity. 
The main technical challenge of using the change of variable trick is to find such transform map $\Phi$. 
See \Cref{sec:reparam} for further details.

Before concluding, we note that the KQ estimator is often called \emph{Bayesian quadrature (BQ)} \citep{Diaconis1988,OHagan1991,Rasmussen2003,Briol2019PI,Hennig2022} since it can be derived as the mean of the pushforward of a Gaussian measure on $h$ conditioned on $h(x_{1:N})$ \citep{kanagawa2018gaussian}.
The advantage of the Bayesian interpretation is that it provides finite-sample uncertainty quantification, and it also allows for efficient hyperparameter selection via empirical Bayes. 

\section{Nested Kernel Quadrature}\label{sec:methodology}

We can now present our novel algorithm: \emph{nested kernel quadrature (NKQ)}.
To simplify the formulas, we write
\begin{equation}\label{eq:J_F_defi}
\vspace{-3pt}
    J(\theta) := \E_{X \sim \Pb_\theta} \left[ g(X, \theta) \right], \quad F(\theta) := f(J(\theta)),  
\end{equation}
so that the nested expectation in \eqref{eq:nested} can be written as $I = \mathbb{E}_{\theta \sim \mathbb{Q}}[F(\theta)]$. We will assume that we have access to
\begin{align*}
  \theta_{1:T} & := [\theta_1,\ldots,\theta_T]^\top \in \Theta^T, \\
  x_{1:N}^{(t)} & := \Big[x_{1}^{(t)},\ldots,x_{N}^{(t)} \Big] \in \calX^N,\\
  g\big(x_{1:N}^{(t)},\theta_t \big) & := \Big[ g\big(x_{1}^{(t)},\theta_t \big), \ldots, g\big(x_{N}^{(t)},\theta_t \big)\Big]  \in \mathbb{R}^N,
\end{align*}
for all $t \in \{1, \ldots, T\}$, and $f$ is a function that can be evaluated.
We do not specify how the point sets are generated, although further (mild) assumptions will be imposed for our theory in \Cref{sec:theory}. Using the same number of function evaluations $N$ per $\theta_t$ is not essential, but we assume this as it significantly simplifies our notation. 
Given the above, we are now ready to define NKQ as the following two-stage algorithm, which is illustrated in \Cref{fig:illustration}.

\paragraph{Stage I}
For each $t \in \{1, \ldots, T\}$, we estimate the inner conditional expectation $J$ evaluated at $\theta_t$ with $N$ observations $x_{1:N}^{(t)}$ and $g(x_{1:N}^{(t)},\theta_t)$ using a KQ estimator: 
\begin{equation}\label{eq:F_J_KQ}
\hat{J}_{\kq} (\theta_t) :=\mu_{\mathbb{P}_{ \theta_t} } \big(x_{1:N}^{(t)} \big) \big( \boldsymbol{K}_\calX^{(t)} + N \lambda_\calX \boldsymbol{I}_N \big)^{-1} g \big(x_{1:N}^{(t)}, \theta_t \big) .
\end{equation}
Here $k_\calX$ is a reproducing kernel on $\calX$, $\mu_{\mathbb{P}_{\theta_t}}(\cdot) = \mathbb{E}_{X \sim \mathbb{P}_{\theta_t}}[k_{\calX}(X, \cdot)]$ is the KME of $\mathbb{P}_{\theta_t}$ and $\boldsymbol{K}_\calX^{(t)} = k_\calX(x_{1:N}^{(t)}, x_{1:N}^{(t)})$ is an $N \times N$ Gram matrix. 
Using the same kernel $k_\calX$ for each $t\in \{1, \ldots, T\}$ is not essential, but we assume this to be the case for simplicity.
Given these KQ estimates, we then we apply the function $f$ to get $\hat{F}_{\kq}(\theta_t) = f(\hat{J}_{\kq} (\theta_t))$. 

\vspace{1mm}
\paragraph{Stage II}
We use a KQ estimator to approximate the outer expectation using the output of Stage I:
\vspace{-3pt}
\begin{align}\label{eq:NKQ_estimator}
    \hat{I}_{\nkq} &:= \mu_{\mathbb{Q}}(\theta_{1:T}) ( \boldsymbol{K}_\Theta + T \lambda_\Theta \boldsymbol{I}_T)^{-1} \hat{F}_{\kq}( \theta_{1:T} ).
\end{align}
Here $k_\Theta$ is a reproducing kernel on $\Theta$,  $\mu_{\mathbb{Q}} = \mathbb{E}_{\theta \sim \mathbb{Q}}[k_{\Theta}(\theta, \cdot)]$ is the embedding of $\mathbb{Q}$ and $\boldsymbol{K}_\Theta = k_\Theta(\theta_{1:T}, \theta_{1:T})$ is a $T\times T$ Gram matrix. 

\definecolor{blueviolet}{rgb}{0.541, 0.169, 0.886}
\definecolor{brown}{rgb}{0.647, 0.165, 0.165}
\begin{figure}[t]
\vspace{-5pt}
    \centering
    \includegraphics[width=\linewidth]{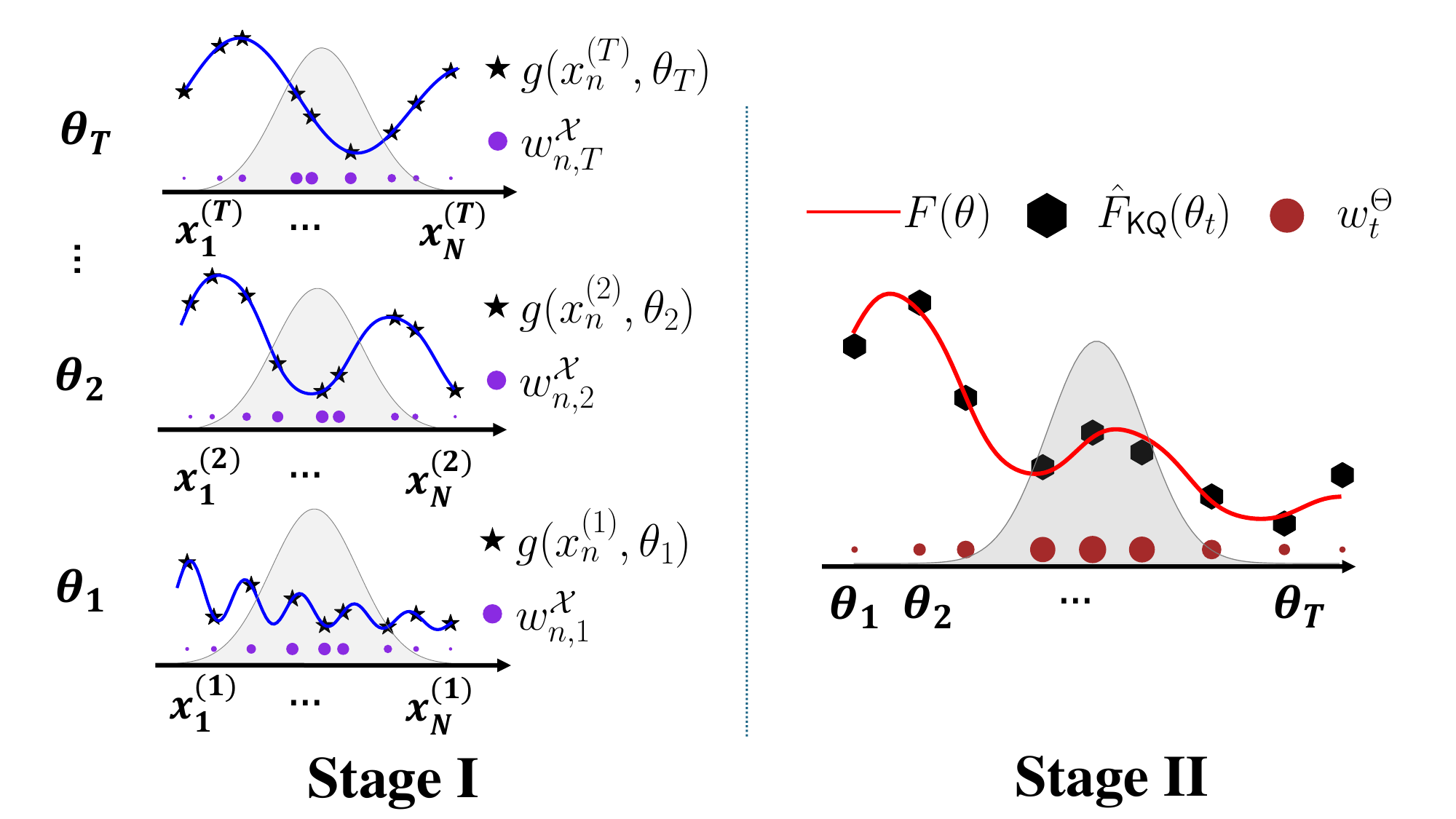}
    \vspace{-5mm}
    \caption{\textit{Illustration of NKQ.} In stage I, we estimate $J(\theta_t)$ using $\hat{J}_{\kq}(\theta_t) = \sum_{n=1}^N \textcolor{blueviolet}{w_{n, t}^\mathcal{X}} g(x_n^{(t)}, \theta_t)$ for all $t \in \{ 1, \ldots, T\}$. 
    In stage II, we estimate $I$ with $\hat{I}_{\nkq} = \sum_{t=1}^T \textcolor{brown}{w^\Theta_t} \hat{F}_{\kq}(\theta_t)$ where $\hat{F}_{\kq}(\theta_t) \coloneq f(\hat{J}_{\kq}(\theta_t))$. 
    The shaded areas depict $\Pb_\theta$ (for stage I) and $\Qb$ (for stage II).}
    \label{fig:illustration}
    \vspace{-10pt}
\end{figure}

Combining stage I and II, NKQ can be expressed in a single equation as a nesting of two quadrature rules:
\vspace{-5pt}
\begin{align}\label{eq:nkq_weights}
    \hat{I}_{\nkq} = \sum_{t=1}^{T} w^{\Theta}_t f\left(\sum_{n=1}^N w^{\calX}_{n, t} g(x_n^{(t)}, \theta_t) \right),
\end{align}
where $w^{\calX}_{1, t},\ldots,w^{\calX}_{N, t}$ are the KQ weights used in stage I for $\hat{J}_{\text{KQ}}(\theta_t)$ and $w^{\Theta}_1,\ldots, w^{\Theta}_T$ are the KQ weights used in stage II.
Although these weights are stage-wise optimal when $\lambda_{\mathcal{X}} = \lambda_\Theta = 0$ thanks to the optimality of KQ weights, it is unclear whether they are globally optimal due to the non-linearity of $f$. Note that NMC can be recovered by taking all stage I weights to be $1/N$ and all stage II weights to be $1/T$, which is sub-optimal. 
In addition to the algorithmic simplicity of our proposed estimator NKQ, we demonstrate its superior performance in terms of both rate of convergence (\Cref{sec:theory}) and empirical performances (\Cref{sec:experiments}). 

NKQ inherits the two main drawbacks of KQ. Firstly, solving the linear systems to obtain the stage I and II weights has a worst-case  computational complexity of $\calO(T N^3 + T^3)$. 
Secondly, NKQ requires closed-form KMEs at both stages: $\mu_{\mathbb{P}_{ \theta_t}}$ for all $t \in \{1,\ldots,T\}$ in stage I, and $\mu_{\mathbb{Q}}$ in stage II. Fortunately, we can often use the approaches discussed in the previous section to reduce the complexity to $\calO(TN + T)$ and obtain closed-form kernel embeddings.

NKQ requires the selection of hyperparameters, including for the kernels in both stage I and II. 
We typically take $k_{\mathcal{X}}$ and $k_{\Theta}$ to be Mat\'ern kernels whose orders are determined by the smoothness of $f$ and $g$ (as justified by \Cref{thm:main}; see \Cref{sec:theory} for details).
This leaves us with a choice of kernel hyperparameters which include lengthscales $\gamma_{\mathcal{X}}, \gamma_{\Theta}>0$ and amplitudes $A_{\mathcal{X}}, A_{\Theta}>0$. 
The lengthscales are selected via the median heuristic. 
The regularizers are set to $\lambda_{\mathcal{X}} = \lambda_{0, \calX} N^{-\frac{2s_\calX}{d_\calX}} (\log N)^{\frac{2s_\calX+2}{d_\calX}}$ and $ \lambda_{\Theta} = \lambda_{0,\Theta} T^{-\frac{2s_\Theta}{d_\Theta}} (\log T)^{\frac{2s_\Theta+2}{d_\Theta}}$ following \Cref{thm:main}, where $\lambda_{0,\calX}, \lambda_{0,\Theta}$ are selected with grid search over $\{0.01, 0.1, 1.0\}$. Finally, we standardise our function values (by subtracting the empirical mean then dividing by the empirical standard deviation), and then set the amplitudes to $A_{\mathcal{X}}=A_{\Theta}=1$. This last choice could further be improved using a grid search, but we do not do this as we do not notice significant improvements when doing so in experiments and this tends to increase the cost.

Before presenting our theoretical results, we briefly comment on the connection with existing KQ methods.
If we could evaluate the exact expression for the inner conditional expectation $J(\theta)$ pointwise, then (following \eqref{eq:kq}) the KQ estimator for $I$ would be $\bar{I}_{\kq} = \mu_{\mathbb{Q}}(\theta_{1:T}) ( \boldsymbol{K}_\Theta + T \lambda_\Theta \boldsymbol{I}_T)^{-1} F( \theta_{1:T})$. Comparing with \eqref{eq:NKQ_estimator}, NKQ can thus be seen as KQ with noisy function values $\hat{F}_{\kq}( \theta_{1:T} )$ (replacing the exact values $F( \theta_{1:T})$ in \eqref{eq:NKQ_estimator}). 
Although it is proved in \citet{Cai2023} that noisy observations make KQ converge at a slower rate, we prove that the stage II observation noise is of the same order as the stage I error, and consequently, we can still treat stage II KQ as noiseless kernel ridge regression and the additional error caused by the stage II observation noise would be \emph{subsumed} by the stage I error (See \Cref{rem:noise_stage_2}).
NKQ is also closely related to a family of regression-based methods for estimating conditional expectations~\cite{longstaff2001valuing,chen2024conditional}. Indeed, with a slight modification of Stage II in \eqref{eq:NKQ_estimator}, we can obtain an estimator of $J(\theta)$ that we call \emph{conditional kernel quadrature (CKQ)} 
\begin{align}\label{eq:ckq}
    \hat{J}_{\text{CKQ}}(\theta) := k_\Theta(\theta, \theta_{1:T}) ( \boldsymbol{K}_\Theta + T \lambda_\Theta \boldsymbol{I}_T)^{-1} \hat{J}_{\kq}( \theta_{1:T}) .
\end{align}
CKQ highly resembles \textit{conditional BQ (CBQ)}~\citep{chen2024conditional}; the difference is in stage II, where CBQ uses heteroskedastic Gaussian process regression whilst CKQ uses kernel ridge regression. Interestingly, the proof in this paper leads to a much better rate for CKQ than the best known rate for CBQ (see \Cref{rem:ckq_rate}).

\section{Theoretical Results}\label{sec:theory}

In this section, we derive a convergence rate for the absolute error $|\hat{I}_{\nkq} - I|$ as the number of samples $N, T \to \infty$. Before doing so, we recall the connection between RKHSs and Sobolev spaces. A kernel $k$ on $\mathbb{R}^d$ is said to be translation invariant if $k(x, x^\prime) = \Psi(x-x^\prime)$ for some positive definite function $\Psi$ whose Fourier transform $\hat{\Psi}(\omega)$ is a finite non-negative measure on $\mathbb{R}^d$~\citep[Theorem 6.6]{wendland2004scattered}. 
Suppose $\calX$ has a Lipschitz boundary, if $k$ is translation invariant and its Fourier transform $\hat{\Psi}(\omega)$ decays as $\mathcal O (1 + \|\omega \|_2^2)^{-s}$ when $\omega \to \infty$ for $s > d/2$, then its RKHS $\calH_k$ is norm equivalent to the Sobolev space $W_2^s(\calX)$~\citep[Corollary 10.48]{wendland2004scattered}.
More specifically, it means that their set of functions coincide and there are constants $c_1,c_2>0$ such that $c_1\|h\|_{\calH_k} \leq\|h\|_{W_2^s(\calX)} \leq c_2\|h\|_{\calH_k}$ holds for all $h \in \calH_k$. 
In this paper, we call such kernel a \emph{Sobolev reproducing kernel of smoothness $s$}.
An important example of Sobolev kernel is the Matérn kernel--- the RKHS of a Matérn-$\nu$ kernel is norm-equivalent to $W_2^s(\calX)$ with $s=\nu + d/2$.
All Sobolev kernels are bounded, i.e. $\sup_{x \in \calX} k(x, x) \leq \kappa$ for some positive constant $\kappa$. 
When the context is clear, we use $\|f\|_{s,2} \coloneq \|f\|_{W_2^s(\calX)}$ to denote the Sobolev space norm. 
\begin{thm}\label{thm:main}
Let $\calX =[0,1]^{d_\calX}$ and $\Theta =[0,1]^{d_\Theta}$. Suppose $\theta_{1:T}$ are i.i.d. samples from $\Qb$ and $x_{1:N}^{(t)}$ are i.i.d samples from $\Pb_{\theta_t}$ for all $t \in \{1, \cdots, T\}$. 
Suppose further that $k_\calX$ and $k_\Theta$ are Sobolev kernels of smoothness $s_\calX > d_\calX / 2$ and $s_\Theta > d_\Theta/2$, and that the following conditions hold
\begin{enumerate}
[itemsep=5.0pt,topsep=0pt,leftmargin=*]
    \item [(1)] There exist $G_{0, \Theta}, G_{1, \Theta}, G_{0, \calX}, G_{1, \calX} > 0$ such that $G_{0, \Theta} \leq q(\theta) \leq G_{1, \Theta}$ and $G_{0, \calX} \leq p_\theta(x) \leq G_{1, \calX}$ for any $\theta \in \Theta$ and $x \in \calX$. 
    \customlabel{as:equivalence}{(1)} 
    \item [(2)] There exists $S_1 >0$ such that for any $\theta \in \Theta$ and any $\beta \in \N^{d_\Theta}$ with $|\beta| \leq s_\Theta$, $\| D_\theta^\beta g(\cdot, \theta) \|_{ s_\calX,2 } \leq S_1$.
    \customlabel{as:app_true_g_smoothness}{(2)} 
    \item[(3)] There exist $S_2,S_3 >0$ such that for any $x \in \calX$, $\| g(x, \cdot) \|_{ s_\Theta ,2} \leq S_2$ and $\|\theta \mapsto p_\theta(x) \|_{ s_\Theta ,2} \leq S_3 \leq 1$. 
    \customlabel{as:app_true_J_smoothness}{(3)} 
    \item[(4)] 
    There exists $S_4 >0$ such that derivatives of $f$ up to and including order $s_\Theta + 1$ are bounded by $S_4$.
    \customlabel{as:app_lipschitz}{(4)} 
\end{enumerate}
Then, there exists $N_0, T_0 \in \mathbb{N}^{+}$ such that for $N>N_0, T>T_0$, we can take $\lambda_{\calX} \asymp N^{-2\frac{s_\calX}{d_\calX}} (\log N)^{\frac{2s_\calX+2}{d_\calX}}$ and $\lambda_{\Theta} \asymp T^{-2\frac{s_\Theta}{d_\Theta}} (\log T)^{\frac{2s_\Theta+2}{d_\Theta}}$ to obtain the following bound
\begin{align*}
\vspace{-5pt}
    &\quad \left| I - \hat{I}_{\nkq} \right| \\
    &\leq \tau \left( C_1 N^{- \frac{ s_\calX}{d_\calX} } (\log N)^{\frac{s_\calX+1}{d_\calX}} + C_2 T^{- \frac{ s_\Theta}{d_\Theta} } (\log T)^{\frac{s_\Theta+1}{d_\Theta}} \right), 
\end{align*}
which holds with probability at least $1 - 8 e^{-\tau}$.
$C_1, C_2$ are two constants independent of $N, T, \tau$.
\end{thm}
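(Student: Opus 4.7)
The plan is to split the error by introducing the \emph{oracle} stage-II estimator $\bar{I}_{\kq} := \mu_{\Qb}(\theta_{1:T})(\boldsymbol{K}_\Theta + T\lambda_\Theta \boldsymbol{I}_T)^{-1} F(\theta_{1:T})$ that uses the exact values of $F$ at the outer design points, and to bound $|I - \hat{I}_{\nkq}| \leq |I - \bar{I}_{\kq}| + |\bar{I}_{\kq} - \hat{I}_{\nkq}|$. The first piece is a noiseless outer-level KQ error for $F$ and should produce the $\tilde{\calO}(T^{-s_\Theta/d_\Theta})$ term; the second captures the propagation of stage-I noise through the stage-II weights and should produce the $\tilde{\calO}(N^{-s_\calX/d_\calX})$ term, matching the statement of the theorem.

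For the oracle piece I would first establish $F = f \circ J \in W_2^{s_\Theta}(\Theta)$ with norm bounded by a constant depending only on $S_1, \ldots, S_4$: differentiating $J(\theta) = \int g(x,\theta) p_\theta(x) dx$ under the integral by Leibniz's rule and expanding $D_\theta^\beta [g\cdot p_\theta]$ with the product rule, conditions (2)--(3) let me control each $L_2$ factor by $S_1, S_2, S_3$, giving $J \in W_2^{s_\Theta}(\Theta)$; Fa\`a di Bruno's formula together with (4) then yields $F \in W_2^{s_\Theta}(\Theta)$. By norm-equivalence with $\calH_{k_\Theta}$, condition (1) on $\Qb$, and i.i.d.\ sampling, a standard kernel ridge regression bound on $[0,1]^{d_\Theta}$ at the prescribed $\lambda_\Theta$ delivers $|I - \bar{I}_{\kq}| \leq C_2 \tau T^{-s_\Theta/d_\Theta}(\log T)^{(s_\Theta+1)/d_\Theta}$ with probability $\geq 1 - 4e^{-\tau}$. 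For the inner estimates, condition (2) implies $g(\cdot,\theta_t) \in W_2^{s_\calX}(\calX)$ with norm $\leq S_1$ uniformly in $\theta_t$, so the same bound applied for each fixed $\theta_t$, followed by a union bound over $t \in \{1,\ldots,T\}$ with $\tau' = \tau + \log T$, yields $\max_t |\hat{J}_{\kq}(\theta_t) - J(\theta_t)| = \tilde{\calO}(\tau N^{-s_\calX/d_\calX})$ with probability $\geq 1 - 4e^{-\tau}$ (the extra $\log T$ being absorbed into the logarithmic factor). The mean value theorem and $|f'| \leq S_4$ then transfer this bound to the stage-II observation noise $\eta_t := \hat{F}_{\kq}(\theta_t) - F(\theta_t)$.

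The main obstacle is bounding the propagation term $\bar{I}_{\kq} - \hat{I}_{\nkq} = \mu_{\Qb}(\theta_{1:T})(\boldsymbol{K}_\Theta + T\lambda_\Theta \boldsymbol{I}_T)^{-1} \eta$, because a direct $\|w^\Theta\|_1 \|\eta\|_\infty$ estimate is too loose (the KQ weights are signed). I would instead reinterpret stage II as a KRR: the quantity equals $\int \hat{F}^{\text{KRR}}_\eta d\Qb$, where $\hat{F}^{\text{KRR}}_\eta$ is the KRR interpolant of the synthetic data $(\theta_{1:T}, \eta)$ with regularizer $\lambda_\Theta$. Since $\|\eta\|_\infty = \tilde{\calO}(\tau N^{-s_\calX/d_\calX})$ is a uniform deterministic budget rather than i.i.d.\ noise, a bounded-noise KRR argument at the prescribed $\lambda_\Theta$ gives $\|\hat{F}^{\text{KRR}}_\eta\|_{L_2(\Qb)} = \calO(\|\eta\|_\infty)$, and Cauchy--Schwarz against $\mu_{\Qb}$ (using boundedness of $k_\Theta$) closes the argument. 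The delicate points to verify carefully are (a) that the variance-type term in the KRR bound is truly dominated by the bias at the chosen $\lambda_\Theta$, so the stage-I noise is \emph{subsumed} in the sense of \Cref{rem:noise_stage_2}, and (b) that the four high-probability events (noiseless stage-II KRR, the $T$ stage-I KRRs combined by union bound, and the stage-II noise-propagation KRR bound) combine to give overall failure probability at most $8e^{-\tau}$.
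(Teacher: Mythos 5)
Your proposal is sound in outline but takes a genuinely different route from the paper, and the difference is worth spelling out. The paper does \emph{not} center the stage-II problem at the oracle values $F(\theta_{1:T})$: it instead introduces $\bar{F}_{\kq}(\theta) := \E_{x_{1:N}^{(\theta)} \sim \Pb_\theta}[\hat{F}_{\kq}(\theta; x_{1:N}^{(\theta)})]$ and splits $|I - \hat{I}_{\nkq}|$ into $\E_{\theta\sim\Qb}|F(\theta)-\bar{F}_{\kq}(\theta)|$ plus the $L_2(\Qb)$-error of a KRR whose \emph{regression target} is $\bar{F}_{\kq}$. With that centering the stage-II observation noise $r(\theta)=\hat{F}_{\kq}(\theta)-\bar{F}_{\kq}(\theta)$ is exactly conditionally mean-zero and satisfies a Bernstein moment condition with $\sigma=L=\tilde{\calO}(N^{-s_\calX/d_\calX})$, so the off-the-shelf noisy-KRR bound (\Cref{prop:krr_all}) applies directly; the price is the most technical part of the paper's proof, namely showing $\bar{F}_{\kq}\in W_2^{s_\Theta}(\Theta)$ with a norm bounded uniformly in $N$ (Steps One--Four, using the Sobolev algebra property, a Bochner-integral argument, and a high-probability bound on $\|\hat{J}_{\kq}(\cdot;x_{1:N}^{(\theta)})\|_{s_\Theta,2}$). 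Your centering at $F$ makes the oracle term cleaner (you only need $F=f\circ J\in W_2^{s_\Theta}(\Theta)$, which your Leibniz/Fa\`a di Bruno argument delivers), but your ``noise'' $\eta_t$ is biased, so the standard noisy-KRR theorem does not apply and you are forced into the bounded-noise route. That route does work: optimality of the KRR objective against the zero function gives $\lambda_\Theta\|\hat{F}^{\mathrm{KRR}}_\eta\|_{\calH}^2\le\|\eta\|_\infty^2$ and $T^{-1}\sum_t \hat{F}^{\mathrm{KRR}}_\eta(\theta_t)^2\le 4\|\eta\|_\infty^2$, and the empirical-to-population norm comparison $\|h\|_{L_2(\Qb)}^2\lesssim T^{-1}\sum_t h(\theta_t)^2+\lambda_\Theta\|h\|_{\calH}^2$ (valid for $T>A_{\lambda_\Theta,\tau}$ on the same operator-concentration event underlying the KRR analysis) then yields $\|\hat{F}^{\mathrm{KRR}}_\eta\|_{L_2(\Qb)}=\calO(\|\eta\|_\infty)$ as you claim. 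This last step is standard but is not a black-box application of any proposition stated in the paper, so it would need to be written out.

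Two caveats on matching the precise statement. First, your union bound over the $T$ inner estimators forces $\tau'=\tau+\log T$, so $\|\eta\|_\infty=\calO((\tau+\log T)\,N^{-s_\calX/d_\calX}(\log N)^{(s_\calX+1)/d_\calX})$; the resulting $N$-term carries a $\log T$ factor, so your $C_1$ is not independent of $T$ as the theorem asserts (the rate in \Cref{cor:nkq} survives, but the exact form of the bound is slightly weaker). The paper sidesteps this entirely: the stage-I term is handled in expectation (converting the high-probability bound of \Cref{prop:noiseless_krr} into moment bounds via an auxiliary lemma), and the stage-II noise is handled through its moments, so no uniform-over-$t$ control is ever needed. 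Second, your event bookkeeping naively gives three independent failure events ($4e^{-\tau}$ each), i.e.\ $12e^{-\tau}$; to land on $8e^{-\tau}$ you must observe that the noiseless oracle bound and the noise-propagation bound are driven by the same concentration event over $\theta_{1:T}$, which is true but again needs to be said explicitly.
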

\begin{cor}\label{cor:nkq}
    Suppose all assumptions in \Cref{thm:main} hold. If we set $N = \tilde{\calO} (\Delta^{-\frac{d_\calX}{s_\calX} })$ and $T = \tilde{\calO}(\Delta^{- \frac{d_\Theta}{s_\Theta}})$, then $N \times T = \tilde{\calO}(\Delta^{-\frac{d_\calX}{s_\calX} - \frac{d_\Theta}{s_\Theta}})$ samples are sufficient to guarantee that $|I - \hat{I}_{\nkq}| \leq \Delta$ holds with high probability. 
\end{cor}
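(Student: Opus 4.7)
The plan is to directly invert the high-probability bound from \Cref{thm:main} to read off sample sizes sufficient for accuracy $\Delta$. First, I would fix a target failure probability $\delta \in (0,1)$ and set $\tau = \log(8/\delta)$, so that $8 e^{-\tau} = \delta$ and \Cref{thm:main} gives
$$|I - \hat{I}_{\nkq}| \leq \tau C_1 N^{-s_\calX/d_\calX}(\log N)^{(s_\calX+1)/d_\calX} + \tau C_2 T^{-s_\Theta/d_\Theta}(\log T)^{(s_\Theta+1)/d_\Theta}$$
with probability at least $1-\delta$. I would then split the error budget evenly, requiring each of the two summands to be at most $\Delta/2$.

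Next, I would solve each resulting inequality for the associated sample size. Both inequalities have the form $M^{-a}(\log M)^{b} \leq c\,\Delta$, with $(a,b) = (s_\calX/d_\calX,(s_\calX+1)/d_\calX)$ for stage~I and $(a,b) = (s_\Theta/d_\Theta,(s_\Theta+1)/d_\Theta)$ for stage~II. A standard ansatz-and-verify argument, plugging $M = C\,\Delta^{-1/a}(\log(1/\Delta))^{b/a}$ with $C$ sufficiently large, shows that the inequality is satisfied for all $\Delta$ small enough (in particular small enough that the resulting $N$ and $T$ exceed the thresholds $N_0$, $T_0$ from \Cref{thm:main}). By the definition of $\tilde{\calO}$ in the notation section, which absorbs polylogarithmic factors, this yields $N = \tilde{\calO}(\Delta^{-d_\calX/s_\calX})$ and $T = \tilde{\calO}(\Delta^{-d_\Theta/s_\Theta})$.

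Multiplying these two sample-size bounds and once more absorbing the polylogarithmic factors into $\tilde{\calO}$ gives the advertised cost $N \times T = \tilde{\calO}(\Delta^{-d_\calX/s_\calX - d_\Theta/s_\Theta})$. There is no real obstacle here: the argument is a routine algebraic inversion of \Cref{thm:main}, and the only mildly fiddly step is inverting the transcendental relation $M^{-a}(\log M)^{b} \leq c\,\Delta$, which is handled cleanly by the $\tilde{\calO}$ notation.
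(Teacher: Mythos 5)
Your proof is correct and follows the same route the paper takes: the corollary is an immediate inversion of the high-probability bound in \Cref{thm:main}, obtained by fixing $\tau$, splitting the error budget between the two terms, and solving $M^{-a}(\log M)^b \lesssim \Delta$ for $M$ with the logarithmic factors absorbed into $\tilde{\calO}$. The paper treats this as immediate and gives no separate argument, so your ansatz-and-verify step (including the check that $N>N_0$, $T>T_0$ for small $\Delta$) is, if anything, slightly more careful than what is written there.
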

To prove these results, we can decompose $| I - \hat{I}_{\nkq}|$ into the sum of stage I and stage II errors, which can be bounded by terms of order $
N^{- \frac{ s_\calX}{d_\calX}} (\log N)^{\frac{s_\calX+1}{d_\calX}}$ and $T^{- \frac{ s_\Theta}{d_\Theta}} (\log T)^{\frac{s_\Theta+1}{d_\Theta}}$ respectively; 
see  \Cref{sec:proof}. 
Interestingly, note that the stage II error does not suffer from the fact that we are using noisy observations $\hat{F}_{\kq}(\theta_{1:T})$ and we maintain the standard KQ rate up to logarithm terms (see \Cref{rem:noise_stage_2}).
We emphasize that our bound indicates that the tail of $|I - \hat{I}_{\nkq}|$ is sub-exponential. This contrasts with existing work on Monte Carlo methods, which only provides upper bounds on the expectation of error with no constraints on its tails~\cite{Giles2015, Bartuska2023}.

We now briefly discuss our assumptions.
Assumption \ref{as:equivalence} is mild and allows $L_2(\Pb_\theta)$ (resp. $L_2(\Qb)$) to be norm equivalent to $L_2(\calX)$ (resp. $L_2(\Theta)$), which is widely used in statistical learning theory that involves Sobolev spaces \cite{fischer2020sobolev, suzuki2021deep}.
Since our proof essentially translates quadrature error into generalization error of  kernel ridge regression, Assumptions \ref{as:app_true_g_smoothness}, \ref{as:app_true_J_smoothness}, \ref{as:app_lipschitz} ensure that functions $f, g$ and the density $p$ have enough regularity so that the regression targets in both stage I and stage II belong to the correct Sobolev spaces. These are more restrictive, but are essential to obtain our fast rate and are common assumptions in the KQ literature.
Assumptions \ref{as:app_true_g_smoothness}, \ref{as:app_true_J_smoothness}, \ref{as:app_lipschitz} can be relaxed if mis-specification is allowed; see e.g.~\citet{fischer2020sobolev,Kanagawa2019, zhang2023optimality}. 
\Cref{thm:main} shows that for NKQ to have a fast convergence rate, one ought to use Sobolev kernels which are as smooth as possible in both stages.
Furthermore, when $s_\calX = s_\Theta = \infty$ (e.g. when the integrand and kernels belong to Gaussian RKHSs), our proof could be modified to show an exponential rate of convergence in a similar fashion as \citet[Theorem 10]{briol2018statistical} or \citet{Karvonen2020}.

\begin{table}[t]
\centering
\renewcommand{\arraystretch}{1.8}
\setlength{\tabcolsep}{5pt}
\begin{tabular}{|c|c|}
\hline
\textbf{Method} & \textbf{Cost} \\ \hline
NMC & $\calO(\Delta^{-3} )$ or $\calO(\Delta^{-4} )$ \\ \hline
NQMC & $\calO(\Delta^{-2.5})$ \\ \hline
MLMC & $\calO(\Delta^{-2}) $  \\ \hline
\textbf{NKQ (\Cref{cor:nkq})}  & $\tilde{\calO}\Big(\Delta^{-\frac{d_\calX}{s_\calX} - \frac{d_\Theta}{s_\Theta}} \Big)$ \\ \hline
\end{tabular}
\caption{Cost of methods for nested expectations, measured through the number of function evaluations required to ensure $|I - \hat{I}| \leq \Delta$. 
NMC rate is taken from Theorem 3 of \citet{rainforth2018nesting}, NQMC rate is taken from Proposition 4 of \citet{Bartuska2023}, MLMC rate is taken from Section 3.1 of \citet{giles2018mlmc}. Smaller exponents $r$ in $\Delta^{-r}$ indicate a cheaper method. } \label{tab:table}
\end{table}

\begin{rem}[Noisy observations in Stage II of NKQ]\label{rem:noise_stage_2}
    Note that NKQ employs noisy observations $\{\theta_t, \hat{F}_{\kq}(\theta_t)\}_{t=1}^T$ in stage II KQ rather than the ground truth observations $\{\theta_t, F(\theta_t)\}_{t=1}^T$. 
    Although \citet{Cai2023} establishes that KQ with noisy observations converges at a slower rate than KQ with noiseless observations, a key distinction in our setting is that, as shown in \eqref{eq:bernstein_noise}, the observation noise in stage II KQ is of order $\tilde{\calO}(N^{-\frac{s_\calX}{d_\calX}})$, whereas the noise in \citet{Cai2023} remains at a constant level.
    As a result, we can still use KQ in stage II as if the observations $\{\hat{F}_{\kq}(\theta_t)\}_{t=1}^T$ are noiseless, and the additional error it introduces happens to be of the same order as the stage I error $\tilde{\calO}(N^{-\frac{s_\calX}{d_\calX}})$ and is therefore subsumed by it.
\end{rem}

\begin{rem}[Convergence rate for CKQ]\label{rem:ckq_rate}
    For the CKQ estimator $\hat{J}_{\text{CKQ}}$ (defined in \eqref{eq:ckq}) that approximates the parametric / conditional expectation $J(\theta)$ uniformly over all $\theta \in \Theta$, the error can be upper bounded in the same way as NKQ. 
    The \emph{stage I error} and can be shown to be $\tilde{\calO}(N^{-\frac{s_\calX}{d_\calX}})$ using the same analysis from \eqref{eq:lipschitz_F} to \eqref{eq:high_prob_hat_g_g}; and the \emph{stage II error} and can be shown to be $\tilde{\calO}(T^{-\frac{s_\Theta}{d_\Theta}})$ using the same analysis from \eqref{eq:stage_1} to \eqref{eq:stage_2}. Combining the two error terms, we have $\| J - \hat{J}_{\ckq} \|_{L_2(\Qb)} = \tilde{\calO}(N^{-\frac{s_\calX}{d_\calX}} + T^{-\frac{s_\Theta}{d_\Theta}})$ holds with probability at least $1 - 8 e^{-\tau}$.
    The rate is better than the best known rate $\calO(N^{-\frac{s_\calX}{d_\calX}} + T^{-\frac{1}{4}})$ of CBQ proved in Theorem 1 of \cite{chen2024conditional} since $\frac{s_\Theta}{d_\Theta} > \frac{1}{2} > \frac{1}{4}$.
    The intuition behind the faster rate is that CKQ benefits from the extra flexibility of choose regularization parameters $\lambda_\calX, \lambda_\Theta$; while CBQ, as a two stage Gaussian Process based approach, is limited to choose $\lambda_\Theta$ equal to the heteroskedastic noise from the first stage. It may be possible to modify the proof of \cite{chen2024conditional} to improve the rate further, but this has not been explored to date.
\end{rem}

In \Cref{tab:table}, we compare the \emph{cost} of all methods evaluated by the number of  evaluations required to ensure $|\hat{I} - I| \leq \Delta$.
We can see that NKQ is the only method that explicitly exploits the smoothness of $g, p, f$ in the problem so that it outperforms all other methods when $\frac{d_{\calX}}{s_{\calX}} + \frac{d_\Theta}{s_\Theta} <2$.

\begin{rem}[Combine NKQ with MLMC and QMC]
We have previously mentioned that KQ could potentially be combined with other algorithms to further improve efficiency, and studied this for both MLMC and QMC. 
For the former (i.e. NKQ+MLMC), we derived a new method called \emph{multi-level NKQ (MLKQ)}, which closely related to multilevel BQ algorithm of \citet{li2023multilevel} and for which we were able to prove a rate of $\tilde{\calO}(\Delta^{-1 - \frac{d_\calX}{2s_\calX} -\frac{d_\Theta}{2s_\Theta}})$. 
Similarly to NKQ, when $\frac{d_\calX}{s_\calX} + \frac{d_\Theta}{s_\Theta} < 2$, the rate for MLKQ is faster than that of NMC, NQMC and MLMC. However, the rate we managed to prove is slower than that for NKQ, and a slower convergence was also observed empirically (see \Cref{fig:combined_all}).
We speculate that the worse performance is caused by the accumulation of bias from the KQ estimators at each level. See \Cref{sec:mlnkq} for details. 

We also consider combining NKQ and QMC. In this case, we expect the same rate as in \Cref{thm:main} can be recovered by resorting to the fill distance technique in scatter data approximation~\cite{wendland2004scattered}. This is confirmed empirically in \Cref{sec:experiments}, where we observe that using QMC points can achieve similar or even better performance than NKQ with i.i.d. samples.
\end{rem}

\section{Experiments}\label{sec:experiments}

We now illustrate NKQ over a range of applications, including some where the theory does not hold but where we still observe significant gains in accuracy. 
The code to reproduce all experiments is available at
 \url{https://github.com/hudsonchen/nest_kq}.

\vspace{-2mm}

\paragraph{Synthetic Experiment}
We start by verifying the bound in \Cref{thm:main} using the following synthetic example: $\Qb = \operatorname{U}[0,1]$, $\Pb_\theta = \operatorname{U}[0,1]$, $g(x, \theta) = x^{\frac{5}{2}} + \theta^{\frac{5}{2}}$, and $f(z) = z^2$, in which case $I=0.4115$ can be computed analytically. We estimate $I$ with i.i.d. samples $\theta_{1:T} \sim \operatorname{U}[0,1]$ and i.i.d. samples $x_{1:N}^{(t)} \sim \operatorname{U}[0,1]$ for $t \in \{1, \ldots, T\}$.
The assumptions from \Cref{thm:main} are satisfied with $s_\calX = s_\Theta = 2$ and $d_\calX = d_\Theta = 1$ (see \Cref{sec:appendix_toy}).
Therefore, to reach the absolute error threshold $\Delta$, we choose $N = T = \Delta^{-0.5}$ for NKQ  following \Cref{cor:nkq}. 
On the other hand, based on Theorem 3 of \citet{rainforth2018nesting}, the optimal way of assigning samples for NMC is to choose $N = \sqrt{T} = \Delta^{-1}$.

In \Cref{fig:toy_experiment} \textbf{Top}, we see that the optimal choice of $N$ and $T$ suggested by the theory indeed results in a faster rate of convergence for both NMC and NKQ. 
For this synthetic problem, we confirm that both the theoretical rates of NKQ ($\text{Cost} = \Delta^{-1}$) and NMC ($\text{Cost} = \Delta^{-3}$) from \Cref{thm:main} and \citet[Theorem 3]{rainforth2018nesting} are indeed realized. We also adapt the synthetic problem to higher dimensions ($d_\calX=d_\Theta=d$) in \eqref{eq:toy_high_d} and observe in \Cref{fig:toy_experiment} \textbf{Bottom} that the performance gap between NKQ and NMC closes down as dimension grows. 
Such behaviour is expected because the cost of NKQ is $\tilde{\calO}(\Delta^{- \frac{d_\calX}{s_\calX} - \frac{d_\Theta}{s_\Theta}})$ and therefore degrades as the dimensions $d_\calX$ and $d_\Theta$ increase; whilst the cost of NMC remains the same.

\begin{figure}[t]
\vspace{-5pt}
\centering
\begin{subfigure}{0.9\linewidth}
    \centering
    \includegraphics[width=\linewidth]{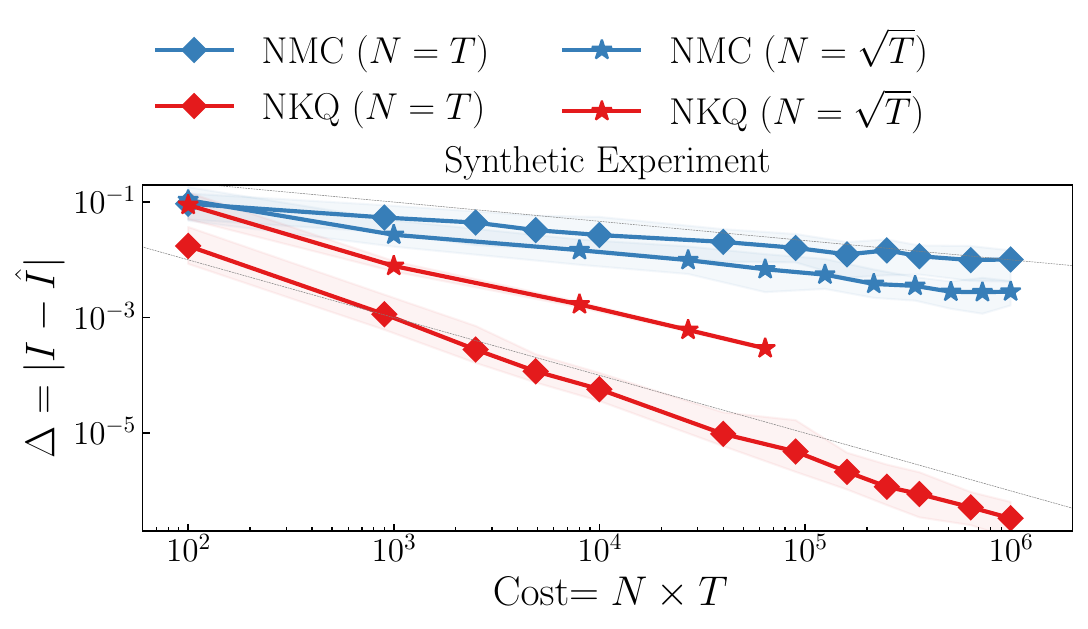}
\end{subfigure}
\vspace{-9pt}
\begin{subfigure}{0.9\linewidth}
    \centering
    \includegraphics[width=\linewidth]{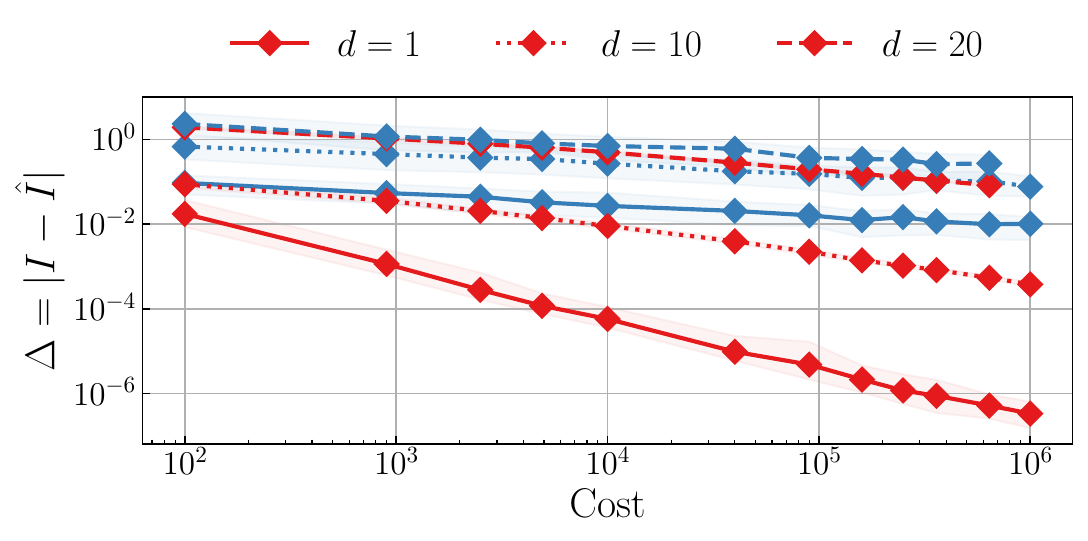}
\end{subfigure}
\caption{\emph{Synthetic experiment.}
\textbf{Top:} Verification of theoretical results. The thin grey lines are theoretical rates of 
$\Delta = \text{Cost}^{-1}$ and $\Delta = \text{Cost}^{-1/3}$.
\textbf{Bottom:} Comparison of NKQ and NMC as dimension $d$ increases.
Results are averaged over 1000 independent runs, while shaded regions 
give the 25\%-75\% quantiles.}
\label{fig:toy_experiment}
\vspace{-10pt}
\end{figure}

We also conduct ablation studies, which are reserved for \Cref{fig:toy-ablation} in the appendix. 
In the left-most plot, we see that the result are not too sensitive to $\lambda_0$, although very large values decrease accuracy whilst very small values cause numerical issues.
In the middle plot, we see that selecting the kernel lengthscale using the median heuristic provides very good performance.
In the right-most plot, we see that NKQ with Mat\'{e}rn-$\frac{3}{2}$ kernels outperforms Mat\'{e}rn-$\frac{1}{2}$ kernel, indicating practitioners should use Sobolev kernels with the highest order of smoothness permissible by \Cref{thm:main}.

\vspace{-1mm}

\paragraph{Risk Management in Finance} We now move beyond synthetic examples, starting in finance.
Financial institutions often face the challenge of estimating the expected loss of their portfolios in the presence of potential economic shocks, which amounts to numerically solving stochastic differential equations (SDEs) over long time horizons \cite{achdou2005computational}. 
Given the high cost of such simulations, data-efficient methods like NKQ are particularly desirable.

\begin{figure}[t]
\vspace{-5pt}
\centering
\begin{subfigure}{0.8\linewidth}
    \centering
    \includegraphics[width=\linewidth]{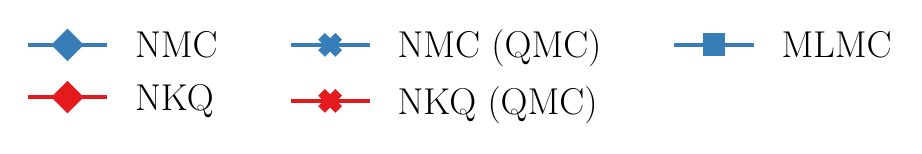}
\end{subfigure}
\vspace{-5pt}
\begin{subfigure}{0.9\linewidth}
    \centering
    \includegraphics[width=\linewidth]{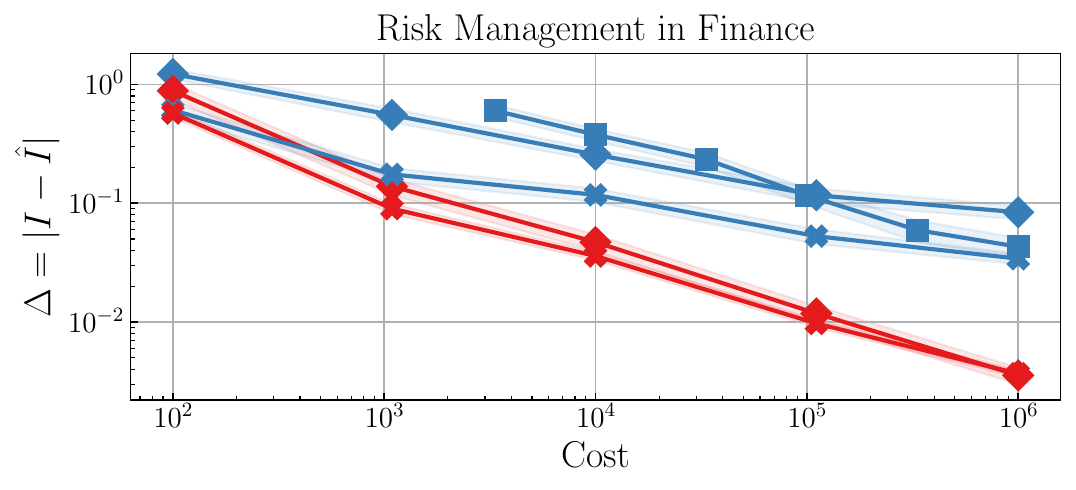}
\end{subfigure}
\vspace{-5pt}
\begin{subfigure}{0.9\linewidth}
    \centering
    \includegraphics[width=\linewidth]{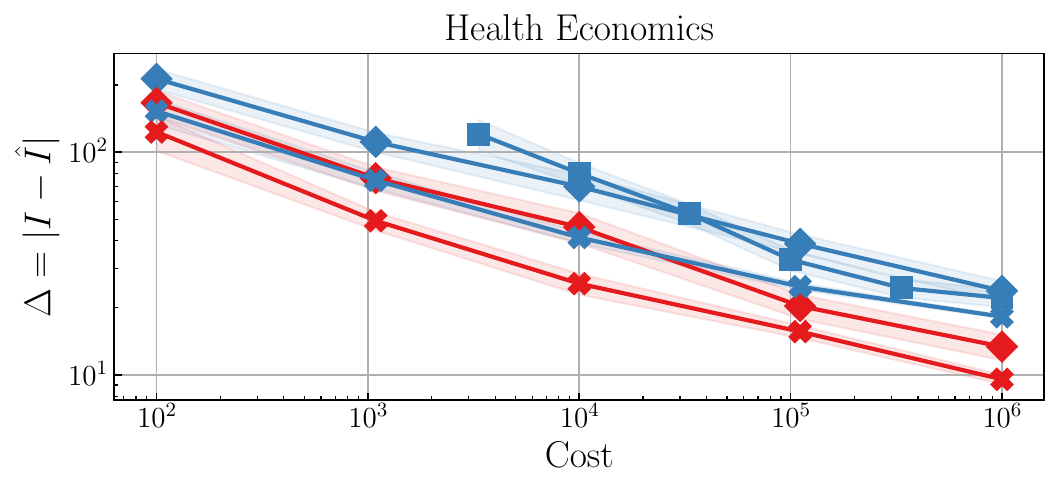}
\end{subfigure}
\vspace{-6pt}
\caption{
\textbf{Top:} \emph{Financial risk management.}
\textbf{Bottom:} \emph{Health economics.}
Results are averaged over 100 independent runs, while shaded 
regions give the 25\%-75\% quantiles.
}
\label{fig:finance_health}
\vspace{-10pt}
\end{figure}

Suppose a shock occurs at time $\eta$ and alters the price of an asset by a factor of $1 + s$ for some $s \geq 0$. 
Conditioned on the asset price $S(\eta)=\theta$ at the time of shock, the loss of an option associated with that asset at maturity $\zeta$ with price $S(\zeta) = x$ can be expressed as $J(\theta) = \E_{X \sim \Pb_\theta}[g(X)]$, where $g(x) = \psi(x) - \psi((1+s)x)$ measures the shortfall in option payoff and the distribution $\Pb_\theta$ is induced by the price of the asset which is described by the Black-Scholes formula. 
The payoff function we consider is that of a butterfly call option: $\psi(x) = \max (x-K_1, 0) + \max (x - K_2, 0) - 2\max (x - (K_1 + K_2)/2, 0)$ for $K_1,K_2\geq 0$.
Since we incur a loss only if the final shortfall is positive, the expected loss of the option at maturity can be expressed as $I = \E_{\theta \sim \mathbb{Q}} [ \max( \E_{X \sim \Pb_\theta}[g(X)], 0) ]$. 
Under this setting, $d_\Theta=d_\calX=1$ and $I= 3.077$ can be computed analytically.

In this experiment, Assumptions \ref{as:app_true_g_smoothness}\ref{as:app_true_J_smoothness} are satisfied with $s_\Theta=s_\calX=1$, but the \textit{max} function is not in $C^2(\R)$ which violates Assumption \ref{as:app_lipschitz} (see \Cref{sec:finance}).
Nevertheless, we still run NKQ with $k_\calX$ and $k_\Theta$ being Mat\'{e}rn-$\frac{1}{2}$ kernels and choose $N=T=\Delta^{-1}$ for NKQ following \Cref{cor:nkq}.
For NMC, we follow \citet{Gordy2010} and choose $N = \sqrt{T} =\Delta^{-1}$. For MLMC, we use $L=5$ levels and allocate samples at each level following~\citet{giles2019decision}. 

In \Cref{fig:finance_health} \textbf{Top}, we present the mean absolute error of NKQ, NMC and MLMC with increasing cost. 
We see that NKQ outperforms both NMC and MLMC as expected.
For each method, we obtain the empirical rate $r$ by linear regression in log-log space, and compare this against the theoretical rate in \Cref{tab:table}. For NMC, our estimate of $\hat{r}=2.97$ matches theory ($r=3$), but when using QMC samples instead, our estimate of $\hat{r} = 2.74$ shows we under-perform compared to the theoretical rate ($r=2.5$). This is likely because the domains are unbounded and the measures are not uniform, breaking key assumptions. 
Finally, for NKQ, we obtain $\hat{r}=1.90$ for i.i.d samples and $\hat{r}=1.91$ for QMC samples which match (and even slightly outperform) the theoretical rate ($r=2$).

\begin{figure*}[t]
\vspace{-5pt}
    \centering
    \includegraphics[width=1.0\linewidth]{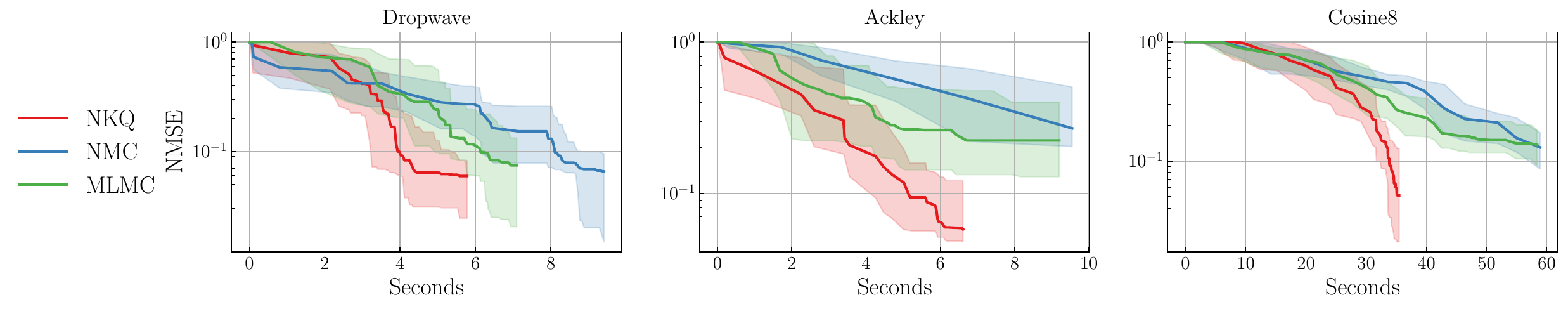}
    \vspace{-20pt}
    \caption{\emph{Bayesian optimization with look ahead acquisition function}. The plots are NMSE against accumulative wall clock time.
    Results are averaged over 100 independent runs, while shaded regions give the 25\%-75\% quantiles. 
    }
    \label{fig:bo}
    \vspace{-5pt}
\end{figure*}

\paragraph{Value of Information for Healthcare Decision Making}
In medical decision-making, a key metric to evaluate the cost-benefit trade-off of conducting additional tests on patients is the \textit{expected value of partial perfect information (EVPPI)}
~\citep{brennan2007calculating,Heath2017}. 
Formally, let $g_c$ denote the patient outcome (such as quality-adjusted life-years) under treatment $c$ in a set of possible treatments $\mathcal{C}$, and $\theta$ represent the additional variables that may be measured. Then,  
$J_c(\theta) = \E_{X \sim \Pb_\theta} [g_c(X, \theta)]$ represents the expected patient outcome given the measurement of $\theta$. 
The EVPPI is defined as $I = I_1 - \max_{c \in \mathcal{C}}(I_{2,c})$, where $I_1 = \E_{\theta \sim \mathbb{Q}} \left[\max_{c \in \mathcal{C}} J_c(\theta)\right]$ and $I_{2,c} = \E_{\theta \sim \mathbb{Q}}\left[J_c(\theta)\right]$
and therefore $I$ consists of $|\mathcal{C}| + 1$ nested expectations.

We follow Section 4.2 of \citet{giles2019decision}, where both $\mathbb{P}_{\theta}$ and $\mathbb{Q}$ are Gaussians, and $g_1(x, \theta)=10^4 (\theta_1 x_5 x_6 + x_7 x_8 x_{9})-(x_1 + x_2 x_3 x_4)$ and $g_2(x, \theta) = 10^4 (\theta_2 x_{13} x_{14} + x_{15} x_{16} x_{17})-(x_{10} + x_{11} x_{12} x_4)$. 
The exact practical meanings of each dimension of $x$ and $\theta$ can be found in \Cref{sec:decision}, but includes quantities such as `cost of treatment' and `duration of side effects'.
Here we have $d_\calX = 17$ and $d_\Theta = 2$, the former being relatively high dimensional.
The ground truth EVPPI under this setting is $I=538$ provided in \citet{giles2019decision}.

For estimating both $I_1$ and $I_{2,c}$, Assumptions \ref{as:app_true_g_smoothness}\ref{as:app_true_J_smoothness} are satisfied with infinite smoothness $s_\calX= s_\Theta =\infty$, but the \textit{max} function in $I_{1}$ is only in $C^0(\R)$ which violates Assumption \ref{as:app_lipschitz}. 
As a result, for estimating $I_1$ we take $k_\calX$ to be a Gaussian kernel and $k_\Theta$ to be Mat\'ern-$\frac{1}{2}$ kernel (so as to be conservative about the smoothness in $\theta$). For estimating $I_{2,c}$, we select both $k_\calX$ and $k_\Theta$ to be Gaussian kernels. 
For NKQ, we choose $N = T =\Delta^{-1}$ whereas 
for NMC, we choose $N = \sqrt{T} =\Delta^{-1}$. 
For MLMC, we use $L=5$ levels and allocate the samples at each level following~\citet{giles2019decision}. 
We run NKQ and NMC with both i.i.d. samples and QMC samples.
In \Cref{fig:finance_health} \textbf{Bottom}, we present the mean absolute error of NKQ, NMC and MLMC with increasing cost. 
We can see that NKQ consistently outperforms other baselines.

\vspace{-1mm}
\paragraph{Bayesian Optimization}
We conclude with an application in Bayesian optimization. Typical acquisition functions are greedy approaches that maximize the immediate reward, while look-ahead acquisition functions optimize accumulated reward over a planning horizon, which results in reduced number of required function evaluations~\citep{Ginsbourger2010,gonzalez2016glasses,wu2019practical,Yang2024}. 
The utility of a two-step look ahead acquisition functions can be written as the following nested expectation.
\vspace{-5pt}
\begin{align*}
    \alpha(z ; \calD) := \E_{f_{\mid \calD} } \left[g(f_{\mid \calD}, z) + \max _{z^\prime} \E_{f_{\mid \calD^\prime}  } \left[g \left(f_{\mid \calD^\prime}, z^\prime \right)\right]\right],
\end{align*}
where $f_{\mid \calD}, f_{\mid \calD^\prime}$ are the posterior distributions given data $\calD$ and $\calD^\prime:= \calD \cup (z , f_{\mid \calD}(z))$. 
In this experiment, the prior is a Gaussian process with zero mean and Mat\'{e}rn-$0.5$ covariance so the posterior $f_{\mid \calD}$ remain a Gaussian process.
The initial starting data $\calD_0$ consists of $2$ points sampled uniformly from a prespecified interval.
Here, $g$ is the reward function and we use q-expected improvement~\cite{wang2020parallel} with $q=2$ so $z=(z_1, z_2)$ and
$g(f_{\mid \calD}, z) = \max _{j=1, 2} ( f_{\mid \calD}(z_j) - r_{\max} ), 0)$. The constant 
$r_{\max}$ is the maximum reward obtained from previous queries.
Although $f_{\mid \calD} $ (resp. $f_{\mid \calD^\prime}$) is a Gaussian process, we only ever consider its evaluation on $z$ (resp. $z^\prime$), and we therefore only have to integrate against two-dimensional Gaussians. 
Notationally speaking, $f_{\mid \calD^\prime}(z_1, z_2)$ correspond to $x$ and $f_{\mid \calD}(z_1, z_2)$ correspond to $\theta$ in \eqref{eq:nested} (i.e. $d_\calX = d_\Theta = 2$), but we use the notation of $f_{\mid \calD^\prime}, f_{\mid \calD}$ to stay consistent with the GP literature. 
As a result of the \textit{max} operation, $s_\calX = 1$ but we do not have sufficient smoothness in $\Theta$.

We benchmark NKQ, NMC and MLMC on three synthetic tasks from BoTorch~\citep{balandat2020botorch}. 
For NKQ, both $k_\calX$ and $k_\Theta$ are Mat\'ern-$\frac{1}{2}$ kernels since we want to be conservative about the smoothness. 
Although both $\Qb$ and $\Pb_\theta$ are Gaussian so closed-form KMEs are available, we use the ``change of variable trick'' which maps Gaussian distributions to two uniform distributions over $[0,1]^d$ (see \Cref{sec:bo_more}) to reduce the computational complexity of NKQ to $\calO(T \times N)$.
To reach a specific error threshold $\Delta=0.01$, following \Cref{tab:table}, we choose $N=T=\Delta^{-2}$ for NMC and $N=T=\Delta^{-1}$ for NKQ. 
For MLMC, we use the same code as \citet{Yang2024}.
The normalized mean squared error (NMSE) $\frac{\|\max _{z \in \calD_{\mathcal{S}} } f_{\text{BB}}(z)-f_{\text{BB}}(z^*)\|^2}{\|\max_{z \in \calD_0} f_{\text{BB}}(z) - f_{\text{BB}}(z^*)\|^2}$ is used as  performance metric, where $\calD_0$ (resp. $\calD_{\mathcal{S}}$) is queried data at initialization (resp. after $\mathcal{S}$ iterations),  $f_{\text{BB}}$ is the black box function to be optimized and $f_{\text{BB}}(z^*)$ is the maximum reward. 

In \Cref{fig:bo}, we compare the efficiency of each method by plotting their NMSE against cumulative computational time in wall clock. We can see that NKQ achieves the lowest NMSE among all methods under a fixed amount of computational time in all three datasets, even though the assumptions of \Cref{thm:main} are not all satisfied.
Since the \texttt{Dropwave}, \texttt{Ackley}, and \texttt{Cosine8} functions are synthetic and computationally cheap (see \Cref{sec:bo_more}), we expect the advantages of NKQ to be more pronounced for Bayesian optimization on real-world expensive problems.
Furthermore, many other utility functions in Bayesian optimization—such as predictive entropy search—involve nested expectations~\citep{balandat2020botorch}. We leave the empirical evaluation of NKQ on these utility functions to future work.

\vspace{-1mm}
\section{Conclusion}
This paper introduces a novel estimator for nested expectations based on kernel quadrature. 
We prove in \Cref{thm:main} that our method has a faster rate of convergence than existing methods provided that the problem has sufficient smoothness. 
This theoretical result is consistent with the empirical evidence in several numerical experiments. 
Additionally, even when the problem is not as smooth as the theory requires, NKQ can still outperform baseline methods potentially due to the use of non-equal weights.

Following our work, there remain a number of interesting future problems and we now highlight two main ones. Firstly, we propose a combination of KQ and MLMC that we call MLKQ in \Cref{sec:mlnkq}. However, we believe our current theoretical rate for MLKQ is sub-optimal due to the sub-optimal allocation of samples at each level. Further work will therefore be needed to determine whether this is a viable approach in some cases.
Secondly, for applications where function evaluations are extremely expensive, NKQ could be extended to its Bayesian counterpart. This would allow us to use the finite sample uncertainty quantification for adaptive selection of samples, which could further improve performance.
Finally, establishing minimax lower bounds for nested expectation remains an open and compelling problem. The main difficulty lies in its two-stage structure. To the best of our knowledge, existing minimax lower bounds for two-stage problem typically reduce the problem to a one-stage problem before deriving the bound; see, for example, \citet[Chapter 3]{chen2011rate}, \citet[Appendix F.1]{meunier2024nonparametricinstrumentalregressionkernel}, and \citet{zhang2025minimax}. However, it remains unclear how to directly establish meaningful minimax lower bounds of genuinely two-stage problems, such as our nested expectation.

\section*{Impact Statement}
This paper presents work whose goal is to advance the field of Machine Learning. There are many potential societal consequences of our work, none which we feel must be specifically highlighted here.

\section*{Acknowledgments}
The authors acknowledge useful discussions with Philipp Hennig and support from the Engineering and Physical Sciences Research Council (ESPRC) through grants [EP/S021566/1] (for ZC and MN) and [EP/Y022300/1] (for FXB).

\bibliographystyle{plainnat} 
\bibliography{main}  
\newpage
\newpage

\begin{appendices}

\crefalias{section}{appendix}
\crefalias{subsection}{appendix}
\crefalias{subsubsection}{appendix}

\setcounter{equation}{0}
\renewcommand{\theequation}{\thesection.\arabic{equation}}
\newcommand{\appsection}[1]{
  \refstepcounter{section}
  \section*{Appendix \thesection: #1}
  \addcontentsline{toc}{section}{Appendix \thesection: #1}
}

\onecolumn

{\hrule height 1mm}
\vspace*{-0pt}
\section*{\LARGE\bf \centering Supplementary Material
}
\vspace{8pt}
{\hrule height 0.1mm}
\vspace{24pt}

\section*{Table of Contents}
\vspace*{-10pt}
\startcontents[sections]
\printcontents[sections]{l}{1}{\setcounter{tocdepth}{2}}

\newpage

\paragraph{Additional notations}
For two normed vector spaces $A, B$, $A \cong B$ means that $A$ and $B$ are norm equivalent, i.e. their sets coincide and the corresponding norms are equivalent. In other words, there are constants $c_1,c_2>0$ such that $c_1\|h\|_{A} \leq\|h\|_{B} \leq c_2\|h\|_{A}$ holds for all $h \in A$, written as $\| \cdot \|_A \cong \| \cdot \|_B$.
For $A \subseteq B$, $A$ is said to be continuously embedded in $B$ if the inclusion map between them is continuous, written as $A \hookrightarrow B$.
$\|T \|$  denotes the norm of an operator $T: A\to B$.
For a function $f:\calX \subseteq \R^d \to \R$ and $\alpha \in \mathbb{N}^d$, we use $\partial_x^\alpha f$ to denote the standard derivative $\partial x_1^{\alpha_1} \cdots \partial x_n^{\alpha_n} f$ and $D_x^\alpha f$ to denote the weak derivative.
For $f\in W_2^s(\calX)$, we use $\|f\|_{s,2} \coloneq \|f\|_{W_2^s(\calX)}$ to denote its Sobolev space norm.
$\lesssim$ means $\leq$ up to some positive multiplicative constants.

\section{Existing Results on Kernel Ridge Regression}

In this section, we present \Cref{prop:krr_bias} to \ref{prop:krr_all} which are adaptation of theorems from \citet{fischer2020sobolev} applied to Sobolev spaces. These propositions are foundations of the proof of \Cref{thm:main} in \Cref{sec:proof}.

In the standard regression setting, we are given $N$ observations $\{x_i, y_i\}_{i=1}^N$ which are i.i.d sampled from an unknown joint distribution $\mathbf{P}$ on $\calX \times \R$.
Here, $\calX \subset \R^d$ is a compact domain. 
The marginal distribution of $\mathbf{P}$ on $\calX$ is $\pi$, and the conditional distribution $\mathbf{P}(\cdot \mid x)$ satisfies the Bernstein moment condition~\citep{fischer2020sobolev}. In other words, there exists constants $\sigma, L > 0$ independent of $x$ such that
\begin{align}\label{eq:mom}
    \int_{\mathbb{R}} \left|y -h^*(x)\right|^m \mathbf{P}(d y \mid x) \leq \frac{1}{2} m!\sigma^2 L^{m-2}
\end{align}
is satisfied for $\pi$-almost all $x \in \calX$ and all $m \geq 2$. For example,  \eqref{eq:mom} is satisfied with $\sigma=L=\sigma_0$ when $\mathbf{P}(\cdot \mid x)$ is a Gaussian distribution with bounded variance $\sigma_0$. 
Additionally, \eqref{eq:mom} is also satisfied when there is no noise in the observation so $\sigma=L=0$, which will be discussed in \Cref{sec:noiseless_krr}.

In a regression problem, the target of interest is the Bayes predictor $h^\ast : \calX \to \R$, $x \mapsto \E[Y \mid X = x]$. 
One way of estimating $h^\ast$ is through kernel ridge regression~\citep{fischer2020sobolev}: given a reproducing kernel $k:\calX \times \calX \to \R$, the kernel ridge regression estimator $\hat{h}_\lambda: \calX \to \R$ is defined as the solution to the following optimization problem ($\lambda > 0$):
\begin{align}\label{eq:opt}
    \hat{h}_\lambda
    = \argmin_{h \in \calH_k} \left\{ \lambda\|h\|_{\calH_k}^2 + \frac{1}{N} \sum_{i=1}^N \left( y_i - h (x_i) \right)^2 \right\}.
\end{align}
$\calH_k$ is the reproducing kernel Hilbert space (RKHS) associated with a kernel $k$.
Fortunately, it has the following closed-form expression~\citep[Section 7]{gretton2013introduction} 
\begin{align*}
    \hat{h}_\lambda = k(\cdot, x_{1:N}) \left( k(x_{1:N}, x_{1:N}) + N \lambda \Id_N \right)^{-1} y_{1:N} .
\end{align*}
We also introduce an auxiliary function $h_\lambda: \calX \to \calY$ which is the solution to another optimization problem:
\begin{align}\label{eq:opt_dummy}
    h_\lambda = \argmin_{f \in \calH_k } \left\{ \lambda\|f\|_\calH^2 + \int_{\calX \times \R} ( y - f(x))^2 \mathbf{P}(dx, dy) \right\} .
\end{align}
In regression setting, it is of interest to study the generalization error between the estimator $\hat{h}_\lambda$ and the Bayes optimal predictor $h^\ast$, $\| \hat{h}_\lambda - h^\ast \|_{L_2(\pi)}$, and particularly its asymptotic rate of convergence towards $0$ as the number of samples $N$ tend to infinity.
The generalization error can be decomposed into two terms, through a triangular inequality,
\begin{align}\label{eq:bias_variance_decompose}
    \| \hat{h}_\lambda - h^\ast \|_{L_2(\pi)} \leq \| \hat{h}_\lambda - h_\lambda \|_{L_2(\pi)} + \left\| h_\lambda  - h^\ast \right\|_{L_2(\pi)},
\end{align}
where the first term $\| \hat{h}_\lambda - h_\lambda \|_{L_2(\pi)}$ is known as the estimation error and the second term $ \left\| h_\lambda  - h^\ast \right\|_{L_2(\pi)} $ is known as the approximation error.
Next, we are going to present propositions that study these two terms separately under the following list of conditions.
\begin{enumerate}[leftmargin=1.0cm]
    \item [(S1)] $k$ is a Sobolev reproducing kernel of smoothness $s > \frac{d}{2}$.
    \customlabel{as:kernel}{(S1)} 
    \item [(S2)] $\pi$ is a probability measure on $\calX$ with density $p: \calX \to \R$. There exist positive constants $G_0, G_1$ such that $G_0 \leq p(x) \leq G_1$ for any $x \in \calX$. 
    \customlabel{as:density}{(S2)} 
    \item [(S3)] The Bayes predictor $h^\ast \in W_2^s(\calX)$.
    \customlabel{as:bayes_predictor}{(S3)} 
    \item [(S4)] There exists universal constants $\sigma, L > 0$ such that \eqref{eq:mom} holds. 
    \customlabel{as:noise}{(S4)} 
\end{enumerate}

\begin{prop}[Approximation error]\label{prop:krr_bias}
Under Assumptions \ref{as:kernel}-\ref{as:noise}, 
\begin{align*}
    \left\| h_\lambda - h^\ast \right\|_{L_2(\pi)} \leq \left\| h^\ast \right\|_{s,2} \lambda^{\frac{1}{2}} .
\end{align*}
\end{prop}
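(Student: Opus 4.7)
The plan is to rewrite the infinite-sample problem in \eqref{eq:opt_dummy} using the covariance/integral operator machinery that is standard in learning theory (as in \citet{fischer2020sobolev}), and then to reduce the approximation error bound to a purely spectral inequality.

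First, I would observe that by Assumption \ref{as:kernel} the RKHS $\calH_k$ is norm equivalent to $W_2^s(\calX)$, so Assumption \ref{as:bayes_predictor} guarantees $h^\ast \in \calH_k$. I would then introduce the integral operator
\[
T_k: L_2(\pi) \to L_2(\pi), \qquad (T_k f)(x) = \int_\calX k(x, y) f(y)\, d\pi(y),
\]
which under Assumptions \ref{as:kernel}-\ref{as:density} is a compact, self-adjoint, positive operator with a pure point spectrum $\{\mu_i\}$ and orthonormal (in $L_2(\pi)$) eigenfunctions $\{e_i\}$. The key input from regression theory is that $\calH_k$ can be identified with $\operatorname{Ran}(T_k^{1/2})$, so $\|h\|_{\calH_k}^2 = \sum_i \mu_i^{-1} \langle h, e_i\rangle_{L_2(\pi)}^2$ whenever $h \in \calH_k$.

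Next, I would take the Fr\'echet derivative in $\calH_k$ of the population objective in \eqref{eq:opt_dummy}. Using $\int (y - h^\ast(x))\, k(x,\cdot)\, d\pi(x) = 0$ and expanding $(y - f(x))^2 = (y - h^\ast(x))^2 + 2(y - h^\ast(x))(h^\ast - f)(x) + (h^\ast - f)(x)^2$, the first-order optimality condition reduces to
\[
\lambda h_\lambda + T_k h_\lambda = T_k h^\ast, \qquad \text{i.e.,} \qquad h_\lambda = (T_k + \lambda I)^{-1} T_k h^\ast.
\]
Subtracting $h^\ast$ gives the clean identity $h_\lambda - h^\ast = -\lambda (T_k + \lambda I)^{-1} h^\ast$.

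The final step is a standard spectral calculus argument. Expanding $h^\ast = \sum_i \langle h^\ast, e_i\rangle_{L_2(\pi)} e_i$,
\[
\|h_\lambda - h^\ast\|_{L_2(\pi)}^2 = \sum_i \frac{\lambda^2}{(\mu_i + \lambda)^2} \langle h^\ast, e_i\rangle_{L_2(\pi)}^2 = \sum_i \frac{\lambda^2 \mu_i}{(\mu_i + \lambda)^2}\cdot \frac{\langle h^\ast, e_i\rangle_{L_2(\pi)}^2}{\mu_i}.
\]
Applying the elementary inequality $\frac{\lambda^2 \mu}{(\mu+\lambda)^2} \leq \lambda$ (from AM--GM on $\mu+\lambda \geq 2\sqrt{\mu\lambda}$) and using $\sum_i \mu_i^{-1} \langle h^\ast, e_i\rangle_{L_2(\pi)}^2 = \|h^\ast\|_{\calH_k}^2$, the sum is bounded by $\lambda \|h^\ast\|_{\calH_k}^2$. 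Invoking the norm equivalence $\|h^\ast\|_{\calH_k} \lesssim \|h^\ast\|_{s,2}$ from Assumption \ref{as:kernel} and absorbing the equivalence constant yields the claim.

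The main technical obstacle is essentially bookkeeping: justifying the spectral decomposition of $T_k$ in $L_2(\pi)$ (which requires \ref{as:density} to ensure $L_2(\pi)$-norm equivalence with $L_2(\calX)$ and thus the usual Mercer-type statement), together with the identification of $\calH_k$ with $\operatorname{Ran}(T_k^{1/2})$. These are standard when the kernel is bounded and $\pi$ has a strictly positive density, both of which are guaranteed by our assumptions; the rest is algebra.
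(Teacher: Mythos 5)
Your proposal is correct and is essentially the same argument as the paper's: the paper simply cites Lemma 14 of \citet{fischer2020sobolev} with $\beta=1$, $\gamma=0$, and the spectral-calculus derivation you give (the identity $h_\lambda - h^\ast = -\lambda(T_k+\lambda I)^{-1}h^\ast$ followed by the bound $\lambda^2\mu/(\mu+\lambda)^2 \leq \lambda$ and the $\operatorname{Ran}(T_k^{1/2})$ characterisation of $\calH_k$) is precisely the proof of that lemma specialised to this case. The only cosmetic point is that, as you note, a norm-equivalence constant between $\|\cdot\|_{\calH_k}$ and $\|\cdot\|_{s,2}$ must be absorbed for the bound to read exactly as stated --- an imprecision the paper's own statement shares.
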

\begin{proof}
    This is direct application of Lemma 14 of \cite{fischer2020sobolev} with $\beta = 1$ and $ \gamma = 0$.
\end{proof}
\begin{prop}[Estimation error]\label{prop:krr_variance}
Suppose Assumptions \ref{as:kernel}-\ref{as:noise} hold. Let $\calN(\lambda)$ be the effective dimension defined in \Cref{lem:dof},
and $k_{\alpha}$ be defined in \Cref{lem:embedding}.
If $N > A_{\lambda, \tau}$, then with probability at least $1 - 4e^{-\tau}$,
\begin{align}
    \| h_\lambda - \hat{h}_\lambda \|_{L_2(\pi)}^2 \leq \frac{576 \tau^2}{N} \left( L^2 D \lambda^{ -\frac{d}{2s} } + M^2 \lambda^{1 - \frac{d}{2s}} \left\| h^\ast \right \|_{s,2}^2 + M^2 \frac{L_\lambda^2}{N} \lambda^{-\frac{d}{2s}} \right),
\end{align}
where $D$ and $M$ are constants independent of $N$, and $g_\lambda, A_{\lambda, \tau}, L_\lambda$ are defined as follows
\begin{align*}
    g_\lambda :=\log \left(2 e \mathcal{N}(\lambda) \frac{ \|\Sigma_\pi \| + \lambda}{\|\Sigma_\pi \|} \right), 
    \quad A_{\lambda, \tau}& := 8 k_{\alpha}^2 \tau g_\lambda \lambda^{-\frac{d}{2s} }, \quad 
    L_\lambda := \max \left \{ L, \lambda^{\frac{1}{2} - \frac{d}{4s} } \left( \| h^\ast \|_{L_\infty(\pi)} + k_{\alpha} \|  h^\ast  \|_{s,2} \right) \right\} .
\end{align*}

\end{prop}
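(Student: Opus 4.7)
\textbf{Proof plan for \Cref{prop:krr_variance}.} The strategy is to treat this as a corollary of the general kernel ridge regression theory developed in \citet{fischer2020sobolev} (specifically their Theorem 16), with the main work being the translation of their abstract embedding and source-condition conditions into the Sobolev setting through Assumptions \ref{as:kernel}-\ref{as:noise}.

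First I would set up the operator-theoretic framework: write both $\hat h_\lambda$ and $h_\lambda$ as images of regularised least-squares operators, so that
\begin{equation*}
    \hat h_\lambda - h_\lambda = (\hat{\Sigma} + \lambda \Id)^{-1}\hat g - (\Sigma_\pi + \lambda \Id)^{-1} g,
\end{equation*}
where $\hat{\Sigma}, \Sigma_\pi$ are the empirical and population covariance operators on $\calH_k$ and $\hat g, g$ are the corresponding (empirical/population) response embeddings. This difference is then split in the standard way into a term controlling $\Sigma_\pi^{1/2}(\Sigma_\pi+\lambda)^{-1}(\hat g - g)$ (the noise fluctuation, which is where $\sigma, L$ enter via Bernstein) and a term controlling $(\Sigma_\pi+\lambda)^{-1}(\hat\Sigma - \Sigma_\pi) h_\lambda$ (the operator perturbation, giving rise to the $\|h^*\|_{s,2}^2$ term).

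Second, I would translate the abstract hypotheses of \citet{fischer2020sobolev} into our Sobolev setting. Under Assumption \ref{as:density}, $L_2(\pi) \cong L_2(\calX)$, so by Assumption \ref{as:kernel} the interpolation (power) spaces satisfy $[\calH_k]^\alpha \cong W_2^{\alpha s}(\calX)$. The Sobolev embedding then gives $[\calH_k]^\alpha \hookrightarrow L_\infty(\pi)$ precisely when $\alpha s > d/2$, so the critical embedding exponent is $\alpha = d/(2s)$, with embedding constant $k_\alpha$ (this is exactly \Cref{lem:embedding}). In the same way, the eigenvalue decay $\mu_j(T_k) \asymp j^{-2s/d}$ of the integral operator yields the effective-dimension bound $\calN(\lambda) \lesssim \lambda^{-d/(2s)}$ (\Cref{lem:dof}). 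Assumption \ref{as:bayes_predictor} identifies the source condition exponent $\beta = 1$, i.e.\ $h^* \in [\calH_k]^1$ with $\|h^*\|_{[\calH_k]^1} \asymp \|h^*\|_{s,2}$.

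Third, I would plug $\alpha = d/(2s)$, $\beta = 1$, and the above effective-dimension bound into the Bernstein concentration argument of \citet[Sec.~5]{fischer2020sobolev}, which controls each of the two split terms with probability at least $1 - 2e^{-\tau}$ (combined: $1 - 4e^{-\tau}$). The three summands in the stated bound correspond cleanly to: (i) the noise variance $\sigma^2$ times $\calN(\lambda) \lesssim \lambda^{-d/(2s)}$ (absorbed into $L^2 D\lambda^{-d/(2s)}$ via the Bernstein constants in \ref{as:noise}); (ii) the approximation-perturbation term $\lambda^\beta \|h^*\|_{[\calH_k]^\beta}^2 \cdot \lambda^{-\alpha}$, which with $\beta = 1, \alpha = d/(2s)$ gives $\lambda^{1 - d/(2s)}\|h^*\|_{s,2}^2$; and (iii) a higher-order bounded-noise term of size $L_\lambda^2/N \cdot \lambda^{-d/(2s)}$ where $L_\lambda$ is the uniform bound from truncating $h_\lambda - h^*$. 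The condition $N > A_{\lambda, \tau}$ is precisely what is needed so that $\|(\hat\Sigma + \lambda)^{-1/2}(\Sigma_\pi + \lambda)^{1/2}\|$ is bounded with high probability, via an operator Bernstein inequality whose logarithmic correction is $g_\lambda$.

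The main obstacle, and really the only conceptual step beyond bookkeeping, is the identification $[\calH_k]^\alpha \cong W_2^{\alpha s}(\calX)$ together with the verification that the Sobolev embedding constant $k_\alpha$ can be taken finite at the critical exponent $\alpha = d/(2s)$ up to the logarithmic slack that then surfaces as the $(\log N)$ factors when the bound is later chained (in \Cref{thm:main}) with the choice $\lambda \asymp N^{-2s/d}(\log N)^{(2s+2)/d}$. Everything else is a direct specialisation of \citet[Thm.~16]{fischer2020sobolev}.
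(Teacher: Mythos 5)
Your proposal takes essentially the same route as the paper: the paper's proof is a direct citation of Theorem 16 of \citet{fischer2020sobolev}, specialised to the Sobolev setting by using \Cref{lem:dof} for the effective-dimension bound $\calN(\lambda) \leq D\lambda^{-d/(2s)}$, \Cref{lem:embedding} for the embedding constant $k_\alpha \leq M$, and Corollary 15 of \citet{fischer2020sobolev} for the $L_\infty$ bound entering $L_\lambda$ --- exactly the translation you describe. Your additional detail on the internal bias--variance operator decomposition and your remark that the embedding at the critical exponent $\alpha = d/(2s)$ requires a small amount of slack are both accurate and, if anything, slightly more careful than the paper's own two-sentence proof.
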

\begin{proof}
This proposition is a special case of Theorem 16 in \citet{fischer2020sobolev} under the following adaptations towards our Sobolev space setting: 1) \Cref{lem:dof} proves that $\calN(\lambda) \leq D \lambda^{ -\frac{d}{2s} }$ and \Cref{lem:embedding} proves that $k_{\alpha} \leq M$ for $\alpha = \frac{d}{2s}$.
2) $\|h^\ast- h_\lambda \|_{L_{\infty}(\pi)}$ is upper bounded by $\lambda^{\frac{1}{2} - \frac{d}{4s} } \left( \| h^\ast \|_{L_\infty(\pi)} + k_{\alpha} \| h^\ast \|_{s,2} \right) $ proved in Corollary 15 of \citet{fischer2020sobolev}. $\| \Sigma_\pi\|$ is the norm of the covariance operator defined in \eqref{eq:covariance_operator}.
\end{proof}
\begin{prop}\label{prop:krr_all}
Suppose Assumptions \ref{as:kernel}-\ref{as:noise} hold. For $A_{\lambda, \tau}$ and $L_\lambda$ defined above in \Cref{prop:krr_variance}, if $N > A_{\lambda, \tau}$, then with probability at least $1 - 4e^{-\tau}$,
\begin{align*}
    \left\| \hat{h}_\lambda - h^\ast \right\|_{L_2(\pi)}^2 \leq \frac{576 \tau^2}{N} \left( L^2 D \lambda^{ -\frac{d}{2s} } + M^2 \lambda^{1 - \frac{d}{2s}} \left\| h^\ast \right \|_{s,2}^2  + 2 M^2  \frac{L_\lambda^2}{N} \lambda^{-\frac{d}{2s}} \right) + \left\|h^\ast\right\|_{s,2}^2 \lambda .
\end{align*}
\end{prop}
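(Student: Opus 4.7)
The approach is the standard bias-variance decomposition previewed in \eqref{eq:bias_variance_decompose}. The key observation is that the error $\hat{h}_\lambda - h^\ast$ splits naturally through the auxiliary function $h_\lambda$ into an approximation part (addressed by \Cref{prop:krr_bias}) and an estimation part (addressed by \Cref{prop:krr_variance}), and the stated bound is essentially the combination of these two propositions after squaring.

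First, I would apply the triangle inequality
\begin{align*}
\|\hat{h}_\lambda - h^\ast\|_{L_2(\pi)} \leq \|\hat{h}_\lambda - h_\lambda\|_{L_2(\pi)} + \|h_\lambda - h^\ast\|_{L_2(\pi)}.
\end{align*}
Next, I would square both sides. Expanding $(a+b)^2 = a^2 + 2ab + b^2$, the $a^2$ term is directly controlled by \Cref{prop:krr_variance}, while the $b^2$ term, upon applying the squared version of \Cref{prop:krr_bias}, yields exactly the $\|h^\ast\|_{s,2}^2 \lambda$ summand in the target bound. For the cross term $2ab$, I would not use the crude estimate $(a+b)^2 \leq 2a^2 + 2b^2$ (which would double every coefficient), but instead a weighted Young's inequality $2ab \leq \delta a^2 + \delta^{-1} b^2$. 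The weight $\delta$ is chosen so that $\delta^{-1}b^2$ is dominated by the $L_\lambda^2$ component of \Cref{prop:krr_variance}, exploiting the lower bound $L_\lambda \geq \lambda^{\frac{1}{2} - \frac{d}{4s}}\|h^\ast\|_{s,2}$ (up to the $k_\alpha$ factor) that is baked into the definition of $L_\lambda$. This is exactly why, in the final bound, only the $L_\lambda^2$ term picks up an additional factor of $2$ while the $L^2 D \lambda^{-d/(2s)}$ and $M^2 \lambda^{1-d/(2s)}\|h^\ast\|_{s,2}^2$ coefficients remain unchanged.

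Because \Cref{prop:krr_bias} is a deterministic statement and \Cref{prop:krr_variance} is the sole source of randomness, no union bound is required, and the concluding inequality inherits the probability at least $1 - 4 e^{-\tau}$ directly from \Cref{prop:krr_variance}; the condition $N > A_{\lambda,\tau}$ is likewise inherited unchanged.

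The main obstacle is not the high-level structure (which is standard) but the constant bookkeeping: one must verify that the Young weight $\delta$ can indeed be chosen so that the cross term is absorbed entirely into the $L_\lambda^2$ contribution, without perturbing either the $L^2 D \lambda^{-d/(2s)}$ coefficient or the $M^2 \lambda^{1-d/(2s)}\|h^\ast\|_{s,2}^2$ coefficient. This delicate matching is precisely what the algebraic form of $L_\lambda$ in \Cref{prop:krr_variance} was engineered to make possible, so the proof reduces to a careful but essentially mechanical combination of the two preceding propositions.
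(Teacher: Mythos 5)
Your high-level route --- triangle inequality through the auxiliary function $h_\lambda$, then combining \Cref{prop:krr_bias} and \Cref{prop:krr_variance} --- is exactly the paper's proof, which consists of that single sentence. Your probability accounting is also correct: only \Cref{prop:krr_variance} is probabilistic, so the $1-4e^{-\tau}$ level and the condition $N>A_{\lambda,\tau}$ carry over with no union bound.

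The gap is in your cross-term argument. Writing $a=\|\hat h_\lambda-h_\lambda\|_{L_2(\pi)}$ and $b=\|h_\lambda-h^\ast\|_{L_2(\pi)}$, with $A$ the bound of \Cref{prop:krr_variance} and $B=\|h^\ast\|_{s,2}^2\lambda$, the best that any weighted Young's inequality $2ab\le\delta a^2+\delta^{-1}b^2$ can deliver after optimizing $\delta$ is $(a+b)^2\le A+B+2\sqrt{AB}$. But $2\sqrt{AB}$ is \emph{not} dominated by one extra copy of the $L_\lambda^2$ term. Since $A$ contains $\tfrac{576\tau^2}{N}M^2\lambda^{1-d/(2s)}\|h^\ast\|_{s,2}^2$, one has $2\sqrt{AB}\ge \tfrac{48\tau M}{\sqrt N}\lambda^{1-d/(4s)}\|h^\ast\|_{s,2}^2$, whereas (in the noiseless case $L=0$ that is used downstream) the extra term equals $\tfrac{576\tau^2M^2}{N^2}\lambda^{1-d/s}\bigl(\|h^\ast\|_{L_\infty(\pi)}+k_{\alpha}\|h^\ast\|_{s,2}\bigr)^2$; the ratio of the former to the latter scales like $N^{3/2}\lambda^{3d/(4s)}$, which for the paper's choice $\lambda\asymp N^{-2s/d}(\log N)^{(2s+2)/d}$ grows like a power of $\log N$. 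So the absorption you describe fails, and the displayed constants cannot be recovered this way (nor, strictly speaking, from the paper's one-line proof either). The harmless fix is to use $(a+b)^2\le 2a^2+2b^2$ and accept a doubling of every coefficient; this is immaterial downstream, since \Cref{prop:noiseless_krr} and the proof of \Cref{thm:main} only ever add the two error terms at the level of norms, not of squared norms.
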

\begin{proof}
By the triangle inequality in \eqref{eq:bias_variance_decompose}, combining \Cref{prop:krr_bias} and \Cref{prop:krr_variance} finishes the proof.
\end{proof}

\section{Noiseless Kernel Ridge Regression (Kernel Quadrature)}\label{sec:noiseless_krr}
In this section, we present the upper bound on the generalization error $\| h^\ast - \hat{h}_{\lambda} \|_{L_2(\pi)}$ in \Cref{prop:krr_all} adapted to the noiseless regression setting, which will be employed in the proof of \Cref{thm:main} in the next section.
Our proof follows the outline of the proof for Theorem 1 in \cite{fischer2020sobolev}, modified for our choice of regularization parameter $\lambda$.
Note that this section is of independent interest to some readers as it presents the first standalone proof on the convergence rate of kernel quadrature that 1): it allows positive regularization parameter $\lambda > 0$ and 2): it provides convergence in high probability rather than in expectation.
The closely-related work is \citet{Bach2015} which requires i.i.d samples from an intractable distribution; and
\citet{long2024duality} which provides a more general analysis on noiseless kernel ridge regression in both well-specified and mis-specified setting.

Suppose we have $N$ observations $x_{1:N}$ which are i.i.d sampled from an unknown distribution $\pi$ on $\calX$ along with $N$ \emph{noiseless} function evaluations $h^\ast(x_{1:N})$ where $h^\ast : \calX \subset \R^d \to \R$. 
The setting appears for instance when the measurement of the output
values is very accurate, or when the output values are obtained as a result of computer experiments. 

\begin{prop}\label{prop:noiseless_krr}
Let $\mathcal{X} \subset \mathbb{R}^d$ be compact, and $x_{1:N}$ be $N$ i.i.d. samples from $\pi$. Define
$\hat{h}_{\lambda_N}(\cdot) \coloneq k(\cdot, x_{1:N}) \left( k(x_{1:N}, x_{1:N}) + N \lambda_N \Id_N \right)^{-1} h^\ast(x_{1:N})$, and suppose conditions \ref{as:kernel}-\ref{as:noise} are satisfied. Then, if $\lambda_N \asymp N^{-\frac{2 s}{d}} (\log N)^{\frac{2s+2}{d}}$, there exists an $N_0>0$ such that for all $N > N_0$, 
\begin{align}
\| h^\ast - \hat{h}_{\lambda_N} \|_{L_2(\pi)} \leq \mathfrak{C} \tau N^{-\frac{s}{d}} (\log N)^{\frac{s+1}{d}} \left\| h^\ast \right \|_{s,2}
\end{align} 
holds with probability at least $1 - 4e^{-\tau}$, for a constant $\mathfrak{C}=\mathfrak{C}(\calX, G_0, G_1)$ that only depends on $\calX, G_0, G_1$.
\end{prop}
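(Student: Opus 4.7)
The plan is to derive Proposition~\ref{prop:noiseless_krr} as a direct specialization of Proposition~\ref{prop:krr_all} to the noiseless regression setting, followed by a careful bookkeeping of the logarithmic factors when substituting the prescribed~$\lambda_N$.

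First I would note that, because $y_i = h^\ast(x_i)$ almost surely, the conditional distribution $\mathbf{P}(\,\cdot\mid x)$ is a Dirac mass at $h^\ast(x)$, so the Bernstein moment condition \eqref{eq:mom} is satisfied with $\sigma = L = 0$. Feeding this into Proposition~\ref{prop:krr_all} annihilates the $L^2 D \lambda^{-d/(2s)}$ contribution, and the definition of $L_\lambda$ collapses to $L_\lambda = \lambda^{1/2 - d/(4s)}\bigl(\|h^\ast\|_{L_\infty(\pi)} + k_\alpha \|h^\ast\|_{s,2}\bigr)$. Because $s > d/2$, the Sobolev embedding $W_2^s(\calX) \hookrightarrow L_\infty(\calX)$ together with Assumption~\ref{as:density} yields $\|h^\ast\|_{L_\infty(\pi)} \lesssim \|h^\ast\|_{s,2}$, and hence $L_\lambda^2 \lambda^{-d/(2s)} \lesssim \lambda^{1 - d/s}\|h^\ast\|_{s,2}^2$. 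Collecting the surviving contributions, Proposition~\ref{prop:krr_all} specialises to
\begin{align*}
\|\hat h_\lambda - h^\ast\|_{L_2(\pi)}^2 \;\lesssim\; \frac{\tau^2}{N}\lambda^{1 - d/(2s)}\|h^\ast\|_{s,2}^2 \;+\; \frac{\tau^2}{N^2}\lambda^{1 - d/s}\|h^\ast\|_{s,2}^2 \;+\; \lambda\|h^\ast\|_{s,2}^2,
\end{align*}
valid whenever $N > A_{\lambda,\tau}$.

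Substituting $\lambda_N \asymp N^{-2s/d}(\log N)^{(2s+2)/d}$ then makes each of the three terms proportional to $N^{-2s/d}$ times some power of $\log N$. A direct comparison of the $\log$ exponents (using $s > d/2$) shows that the approximation contribution $\lambda_N\|h^\ast\|_{s,2}^2 \asymp N^{-2s/d}(\log N)^{(2s+2)/d}\|h^\ast\|_{s,2}^2$ is the dominant one, since $(2s+2)/d$ exceeds $(2s+2)(2s-d)/(2sd)$. Taking square roots then produces the announced rate $\tau N^{-s/d}(\log N)^{(s+1)/d}\|h^\ast\|_{s,2}$, with the factor of $\tau$ and the probability $1 - 4e^{-\tau}$ inherited unchanged from Proposition~\ref{prop:krr_all}.

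The only nontrivial obligation, and in my view the main subtlety of the proof, is to verify the side-condition $N > A_{\lambda_N,\tau}$ under the chosen $\lambda_N$. Using the standard effective-dimension bound $\calN(\lambda) \lesssim \lambda^{-d/(2s)}$ for Sobolev kernels, one obtains $g_{\lambda_N} \lesssim \log N$ and hence $A_{\lambda_N,\tau} \lesssim \tau\,\log N \cdot \lambda_N^{-d/(2s)} \asymp \tau\, N\,(\log N)^{-1/s}$. The required inequality thus reduces to $(\log N)^{1/s} \gtrsim \tau$, which holds for all $N \geq N_0 = N_0(\tau)$. This calculation explains the role of the extra logarithmic boost $(\log N)^{(2s+2)/d}$ in $\lambda_N$: it is calibrated to be just large enough to absorb the side-condition (and, as a by-product, to ensure numerical invertibility of the Gram matrix) without inflating the final $N^{-s/d}$ rate. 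This is the key feature that distinguishes the present analysis from classical kernel-quadrature proofs, which instead take $\lambda = 0$ and require $\boldsymbol{K}$ to be invertible.
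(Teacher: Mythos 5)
Your proposal is correct and follows essentially the same route as the paper's proof: specialising Proposition~\ref{prop:krr_all} to the noiseless case $\sigma=L=0$, using the Sobolev embedding $W_2^s(\calX)\hookrightarrow L_\infty(\calX)$ to absorb $\|h^\ast\|_{L_\infty(\pi)}$ into $\|h^\ast\|_{s,2}$, substituting the prescribed $\lambda_N$ so that the approximation term $\lambda_N\|h^\ast\|_{s,2}^2$ dominates, and verifying the side condition via $A_{\lambda_N,\tau}/N \lesssim \tau(\log N)^{-1/s}\to 0$. Your remark that $N_0$ necessarily depends on $\tau$ through this side condition is accurate and matches what the paper's argument implicitly requires.
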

\begin{proof}
Notice that $ \hat{h}_{\lambda_N} $ is precisely the solution to the optimization problem defined in  \eqref{eq:opt} only with $y_i$ replaced by $h^\ast(x_i)$. 
Similarly, we define $h_{\lambda_N}$ as the solution to the optimization problem defined in \eqref{eq:opt_dummy} only with $y$ replaced by $h^\ast(x)$. 
Note that Assumption \ref{as:noise} is instantly satisfied with $L = 0$. 

Similar to the proof of \Cref{prop:krr_all}, we decompose the generalization error into an estimation error term $\| h_{\lambda_N} - \hat{h}_{\lambda_N} \|_{L_2(\pi)}$ and an approximation error term $\| h_{\lambda_N} - h^\ast \|_{L_2(\pi)}$. 

\paragraph{Approximation error}
Take $\lambda_N \asymp N^{-\frac{2s}{d}} (\log N)^{\frac{2s+2}{d}}$, then from \Cref{prop:krr_bias}, we have 
\begin{align*}
    \left\| h_{\lambda_N} - h^\ast \right\|_{L_2(\pi)} \leq \lambda_N^{\frac{1}{2}} \| h^\ast \|_{s,2} \asymp N^{-\frac{s}{d}}  (\log N)^{\frac{s+1}{d}} \| h^\ast \|_{s,2}.
\end{align*}
\paragraph{Estimation error}
Recall all the constants $g_{\lambda_N}$, $A_{\lambda_N, \tau}$ and $ L_{\lambda_N}$ defined in \Cref{prop:krr_variance}.
Since $L=0$, we know the constant $L_{\lambda_N} = \lambda_N^{\frac{1}{2} - \frac{d}{4s} } \left( \| h^\ast \|_{L_\infty(\pi)} + k_{\alpha} \|  h^\ast  \|_{s,2} \right)$.
In order to apply \Cref{prop:krr_variance}, we need to check there indeed exists $N_0$ such that $N \geq A_{\lambda_N, \tau}$ is satisfied for all $N \geq N_0$. 
To this end, we are going to verify that $\lim_{N \to \infty} A_{\lambda_N, \tau} / N \to 0$. 
Notice that
\begin{align*}
    \lim_{N \to \infty} \frac{A_{\lambda_N, \tau}}{N} = \frac{8 k_{\alpha}^2 \tau g_{\lambda_N} \lambda_N^{-\frac{d}{2s} }}{N} &= 8 (\log N)^{- \frac{s+1}{s}} k_{\alpha}^2 \tau \log \left( 2 e \mathcal{N}(\lambda_N) \frac{\| \Sigma_\pi \| + \lambda_N}{ \| \Sigma_\pi \| } \right) 
\end{align*}
where $\calN(\lambda_N)$ and $k_{\alpha}^2$ are defined in \Cref{lem:dof} and \Cref{lem:embedding}.
Since $\lim_{N \to \infty} \lambda_N = \lim_{N \to \infty} N^{-\frac{2s}{d}} (\log N)^{\frac{2s+2}{d}} = 0$, there exists $N^\prime$ such that $\lambda_N \leq \| \Sigma_\pi \|$ for all $N \geq N^\prime$. 
Therefore, as $N$ tends to infinity,
\begin{align}\label{eq:ratio_N_A}
\begin{aligned}
    \lim_{N \to \infty} \frac{A_{\lambda_N, \tau}}{N} &\leq \lim_{N \to \infty} 8 (\log N)^{- \frac{s+1}{s}} k_{\alpha}^2 \tau \log \left(4 e \mathcal{N}(\lambda_N) \right) \\
    &\leq \lim_{N \to \infty} 8 (\log N)^{- \frac{s+1}{s}} k_{\alpha}^2 \tau \log \left(4 e D \lambda_N^{-\frac{d}{2s}} \right) \\
    &= \lim_{N \to \infty} 8 (\log N)^{- \frac{s+1}{s}} k_{\alpha}^2 \tau \log \left(4 e D \right)  + \lim_{N \to \infty} 8 (\log N)^{- \frac{s+1}{s}} k_{\alpha}^2 \tau \log \left( N (\log N)^{-\frac{s+1}{s}} \right)  \\
    &\leq \lim_{N \to \infty} 16 (\log N)^{- \frac{s+1}{s}} k_{\alpha}^2 \tau \log \left( N \right) \\
    &= 0,
\end{aligned}
\end{align}
where $M$ and $D$ are constants defined in \Cref{lem:dof} and \Cref{lem:embedding}. 
So there exists $N''$ such that $N \geq A_{\lambda_N, \tau}$ for all $N \geq N''$. Taking $N_0 = \max \{ N^\prime, N'' \}$, then we have $N \geq A_{\lambda_N, \tau}$ for all $N \geq N_0$. From \Cref{prop:krr_variance}, we know that with probability at least $1 - 4e^{-\tau}$, 
\begin{align*}
    \| h_{\lambda_N} - \hat{h}_{\lambda_N} \|_{L_2(\pi)}^2 &\leq \frac{576 \tau^2}{N} \left( M^2 \lambda_N^{1 - \frac{d}{2s}} \left\| h^\ast \right \|_{s,2}^2  + M^2 \lambda_N^{1 - \frac{d}{2s} } \left( \|h^\ast \|_{L_\infty(\pi)} + M \| h^\ast  \|_{s,2} \right)^2 \frac{1}{N} \lambda_N^{-\frac{d}{2s}} \right) \\
    &\asymp \frac{576 \tau^2}{N} \left( M^2 N^{1 - \frac{2s}{d}} (\log N)^{\frac{s+1}{s} \frac{2s-d}{d}} \left\| h^\ast \right \|_{s,2}^2  + M^2 \left( \|h^\ast \|_{L_\infty(\pi)} + M \| h^\ast  \|_{s,2} \right)^2 N^{1 -\frac{2s}{d}} (\log N)^{\frac{s+1}{2s} \frac{4s-d}{d}} \right) \\
    &\leq 576 \tau^2 N^{-\frac{2s}{d}} (\log N)^{\frac{2s+2}{d} } \left( M^2 \left\| h^\ast \right \|_{s,2}^2  + M^2 \left( \|h^\ast \|_{L_\infty(\pi)} + M \| h^\ast  \|_{s,2} \right)^2  \right) .
\end{align*}
So we have, with probability at least $1 - 4e^{-\tau}$, 
\begin{align*}
    \| h_{\lambda_N} - \hat{h}_{\lambda_N} \|_{L_2(\pi)} \leq 24 \tau N^{-\frac{s}{d}} (\log N)^{\frac{s+1}{d} } \left( (M + M^2) \left\| h^\ast \right \|_{s,2} + M \|h^\ast \|_{L_\infty(\pi)} \right).
\end{align*}
\paragraph{Combine approximation and estimation error}
Combining the above two inequalities on approximation error $\| h_{\lambda_N} - h^\ast \|_{L_2(\pi)}$ and estimation error $\| h_{\lambda_N} - \hat{h}_{\lambda_N} \|_{L_2(\pi)}$, we have with probability at least $1 - 4e^{-\tau}$, 
\begin{align*}
    \| h^\ast - \hat{h}_{\lambda_N} \|_{L_2(\pi)} \leq 24 \tau N^{-\frac{s}{d}} (\log N)^{\frac{s+1}{d}} \left( (1 + M + M^2) \left\| h^\ast \right \|_{s,2} + M \|h^\ast \|_{L_\infty(\pi)} \right).
\end{align*}
Finally, following the arguments of \Cref{lem:embedding} that the operator norm of $W_2^s(\calX) \hookrightarrow L_\infty(\calX) $ is bounded, we have $ \|h^\ast \|_{L_\infty(\pi)} \leq R \left\| h^\ast \right \|_{s,2}$ where $R$ is a constant that depends on $\calX, G_0, G_1$. With probability at least $1 - 4e^{-\tau}$,
\begin{align*}
    \| h^\ast - \hat{h}_{\lambda_N} \|_{L_2(\pi)} \leq 24 \tau N^{-\frac{s}{d}} (\log N)^{\frac{s+1}{d}} (1 + (1 + R) M + M^2) \left\| h^\ast \right \|_{s,2}  = \mathfrak{C} \tau N^{-\frac{s}{d}} (\log N)^{\frac{s+1}{d}} \| h^\ast\|_{s,2},
\end{align*}
for $\mathfrak{C} := 24 (1 + (1 + R) M + M^2)$, which concludes the proof.
\end{proof}
\begin{cor}\label{cor:kq_rate}
    Let $\mathcal{X}$ be a compact domain in $\mathbb{R}^d$ and $x_{1:N}$ are $N$ i.i.d samples from $\pi$. 
    $\hat{I}_\kq \coloneq \E_{X \sim \pi}[k(X, x_{1:N})] \left( k(x_{1:N}, x_{1:N}) + N \lambda_N \Id_N \right)^{-1} h^\ast(x_{1:N})$ is the KQ estimator defined in \eqref{eq:kq}.
    Suppose conditions (A1)-(A3) are satisfied. Take $\lambda_N \asymp N^{-\frac{2 s}{d}} (\log N)^{\frac{2s+2}{d}}$, then there exists $N_0>0$ such that for $N > N_0$, 
    \begin{align}
    \left| \hat{I}_\kq - \int_\calX h^\ast(x) d\pi(x) \right| \leq \mathfrak{C} \tau N^{-\frac{s}{d}} (\log N)^{\frac{s+1}{d}} 
    \end{align} 
holds with probability at least $1 - 4e^{-\tau}$. 
Here $\mathfrak{C}$ is a constant that is independent of $N$. 
\end{cor}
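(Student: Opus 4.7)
The plan is to reduce the corollary directly to \Cref{prop:noiseless_krr}. The essential observation is that the kernel quadrature estimator $\hat{I}_{\kq}$ is nothing but the $\pi$-integral of the noiseless kernel ridge regression predictor $\hat{h}_{\lambda_N}$. Specifically, by linearity,
\begin{align*}
\int_{\calX} \hat{h}_{\lambda_N}(x)\,d\pi(x)
&= \int_{\calX} k(x, x_{1:N})\, d\pi(x)\,\bigl(k(x_{1:N}, x_{1:N}) + N \lambda_N \Id_N\bigr)^{-1} h^\ast(x_{1:N}) \\
&= \E_{X \sim \pi}[k(X, x_{1:N})] \bigl(k(x_{1:N}, x_{1:N}) + N \lambda_N \Id_N\bigr)^{-1} h^\ast(x_{1:N})
= \hat{I}_{\kq}.
\end{align*}
So, since $I = \int_{\calX} h^\ast(x)\, d\pi(x)$, the quadrature error equals $\int (\hat{h}_{\lambda_N} - h^\ast)\, d\pi$.

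Next I would move the absolute value inside the integral and use the fact that $\pi$ is a probability measure to embed $L_1(\pi) \hookrightarrow L_2(\pi)$ via Jensen's inequality (equivalently, Cauchy--Schwarz against the constant function $1$):
\begin{align*}
\bigl|\hat{I}_{\kq} - I \bigr|
= \left| \int_{\calX} \bigl(\hat{h}_{\lambda_N}(x) - h^\ast(x)\bigr)\, d\pi(x) \right|
\leq \|\hat{h}_{\lambda_N} - h^\ast\|_{L_1(\pi)}
\leq \|\hat{h}_{\lambda_N} - h^\ast\|_{L_2(\pi)}.
\end{align*}
Now \Cref{prop:noiseless_krr} applies verbatim under conditions \ref{as:kernel}--\ref{as:bayes_predictor} (Assumption \ref{as:noise} is automatic in the noiseless setting with $\sigma = L = 0$): with the prescribed choice $\lambda_N \asymp N^{-2s/d}(\log N)^{(2s+2)/d}$, there exists $N_0$ such that for all $N > N_0$,
\begin{align*}
\|\hat{h}_{\lambda_N} - h^\ast\|_{L_2(\pi)} \leq \mathfrak{C}\, \tau\, N^{-s/d} (\log N)^{(s+1)/d} \|h^\ast\|_{s,2}
\end{align*}
with probability at least $1 - 4 e^{-\tau}$. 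Chaining the two displays yields the desired bound, with the constant $\mathfrak{C}$ of the corollary absorbing $\|h^\ast\|_{s,2}$ (which is finite by assumption \ref{as:bayes_predictor} and, crucially, independent of $N$).

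There is really no obstacle here: the entire content of the corollary is the translation from an $L_2(\pi)$ regression-error bound to a quadrature-error bound, and this translation costs nothing since the quadrature functional $h \mapsto \int h\, d\pi$ is a bounded linear functional of norm $1$ on $L_2(\pi)$. The only mild care needed is to note that $\hat{h}_{\lambda_N}$ in \Cref{prop:noiseless_krr} is defined through i.i.d.\ samples from $\pi$ with noiseless targets $h^\ast(x_{1:N})$, which matches the hypotheses here exactly, so the high-probability event transfers without modification.
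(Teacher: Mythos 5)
Your proposal is correct and follows exactly the paper's own argument: identify $\hat{I}_{\kq}$ as $\int_{\calX}\hat{h}_{\lambda_N}\,d\pi$, bound the quadrature error by $\|h^\ast-\hat{h}_{\lambda_N}\|_{L_1(\pi)}\leq\|h^\ast-\hat{h}_{\lambda_N}\|_{L_2(\pi)}$, and invoke \Cref{prop:noiseless_krr}, absorbing $\|h^\ast\|_{s,2}$ into the constant. No gaps.
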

The proof of \Cref{cor:kq_rate} is a direct application of \Cref{prop:noiseless_krr} after observing the following.
\begin{align*}
    \left| \hat{I}_\kq - \int h^\ast(x) d\pi(x) \right| \leq \int_\calX \left| \hat{h}_{\lambda_{N}}(x) - h^\ast(x) \right| d \pi(x) = \| h^\ast - \hat{h}_{\lambda_N} \|_{L_1(\pi)} \leq \| h^\ast - \hat{h}_{\lambda_N} \|_{L_2(\pi)} .
\end{align*}

\begin{rem}
    We prove in \Cref{prop:noiseless_krr} that the generalization error of $\hat{h}_{\lambda_N}$ in noiseless regression setting is $\tilde{\calO}(N^{-\frac{s}{d}})$, which is faster than the minimax optimal rate $\calO(N^{-\frac{s}{2s+d}})$ in standard regression setting. The fast rate is expected because we are in the noiseless regime so ``overfitting" is not a problem --- hence our choice of regularization parameter $\lambda_N \asymp N^{-\frac{2s}{d}} (\log N)^{\frac{2s+2}{d}}$ decays to $0$ at a faster rate than $\lambda_N \asymp N^{-\frac{2s}{2s + d}}$ in standard kernel ridge regression~\citep[Corollary 5]{fischer2020sobolev}. 
    The $\tilde{\calO}(N^{-\frac{s}{d}})$ rate is also optimal (up to logarithm terms) and cannot be further improved because it matches the lower bound of interpolation (Sections 1.3.11
    and 1.3.1 of \citet{novak2006deterministic},  Section 1.2, Chapter V of \citet{ritter2000average}). 
\end{rem}

\begin{rem}[Comparison to existing upper bound of kernel (Bayesian) quadrature] \label{rem:stage_one_error_and_standard_kq}
The upper bound in \Cref{cor:kq_rate} matches existing analysis based on scattered data approximation in the literature of both kernel quadrature and Bayesian quadrature~\citep{sommariva2006numerical, Briol2019PI, wynne2021convergence} and is known to be minimax optimal~\citep{novak2016some, novak2006deterministic}. 
Existing analysis takes $\lambda = 0$ and requires the Gram matrix $ k(x_{1:N}, x_{1:N})$ to be invertible, in contrast, our result allows a positive regularization parameter $\lambda_N \asymp N^{-\frac{2s}{d}} (\log N)^{\frac{2s+2}{d}}$ which improves numerical stability of matrix inversion in practice. 
One closely-related work is \citet{Bach2015}, but it requires i.i.d samples from an intractable distribution.
\end{rem}
\section{Proof of \Cref{thm:main}}\label{sec:proof}
\begin{rem}
    In this section, we use $p(x; \theta)$ to denote the density $p_\theta(x)$ so that we can use $p(x; \cdot)$ to denote the mapping $\theta \mapsto p_\theta(x)$. Although we introduce a shorthand notation of kernel mean embedding in the main text, $\mu_\pi = \mathbb{E}_{X \sim \pi}[k(X,\cdot)]$, in this section we are going to write it out with its explicit formulation.
\end{rem}

For any $\theta \in \Theta$, $\hat{F}_{\kq}: \Theta \to \R$  and $\hat{J}_{\kq}: \Theta \to \R$ are two functions that generalize the definition of $\hat{F}_{\kq}(\theta_t)$ and $\hat{J}_{\kq}(\theta_t)$ in \eqref{eq:F_J_KQ} to all $\theta \in \Theta$. 
To be more specific, for any $\theta \in \Theta$, given samples $x_{1:N}^{(\theta)} := \big[ x_1^{(\theta)}, \ldots, x_N^{(\theta)} \big]^\top$ consisting of $N$ i.i.d. samples from $\Pb_\theta$,
\begin{align}\label{eq:hat_F_KQ_all_theta}
    \hat{J}_{\kq} (\theta; x_{1:N}^{(\theta)}) &:= \left( \int_\calX k_\calX (x, x_{1:N}^{(\theta)}) d \Pb_{\theta}(x) \right) \left( k_\calX (x_{1:N}^{(\theta)}, x_{1:N}^{(\theta)}) + N \lambda_\calX \Id_N \right)^{-1} g(x_{1:N}^{(\theta)}, \theta), \\
    \quad \hat{F}_{\kq}(\theta; x_{1:N}^{(\theta)}) &:= f(\hat{J}_{\kq} (\theta; x_{1:N}^{(\theta)})),
\end{align}
where we explicitly specify the dependence of samples $x_{1:N}^{(\theta)}$ on $\theta$ in the above two equations. 
Next, we define 
\begin{align}\label{eq:bar_F_KQ_all_theta}
\begin{aligned}
    \bar{J}_{\kq} (\theta) &:= \E_{x_{1:N}^{(\theta)} \sim \Pb_\theta} \left[\hat{J}_{\kq} (\theta; x_{1:N}^{(\theta)})\right] = \int \hat{J}_{\kq} (\theta; x_{1:N}) \prod_{i=1}^N p(x_i; \theta) dx_{1:N}, \\ 
    \bar{F}_{\kq} (\theta) &:= \E_{x_{1:N}^{(\theta)} \sim \Pb_\theta} \left[\hat{F}_{\kq} (\theta; x_{1:N}^{(\theta)})\right] = \int \hat{F}_{\kq} (\theta; x_{1:N}) \prod_{i=1}^N p(x_i; \theta) dx_{1:N},
\end{aligned}
\end{align}
which marginalize out the dependence on samples $x_{1:N}^{(\theta)}$. 
We can see that $\bar{J}_{\kq} \in L_2(\Qb)$ since $g(x, \cdot) \in W_2^{s_\Theta}(\Theta) \subset L_2(\Theta) \cong L_2(\Qb)$ from Assumption \ref{as:equivalence} and \ref{as:app_true_J_smoothness}; and $p(x_i; \cdot) \in L_2(\Qb)$. Also $\bar{F}_{\kq} \in L_2(\Qb)$ because $f$ is Lipschitz continuous from Assumption \ref{as:app_lipschitz}.
Therefore, the absolute error $| I - \hat{I}_{\nkq}|$ can be decomposed as follows:
\begin{align}\label{eq:stage_i_stage_ii_decomposition}
& \quad \left| I - \hat{I}_{\nkq} \right| \nonumber \\
& = \left| \int_\Theta F(\theta) q(\theta) d\theta  - \left( \int_\Theta k_\Theta(\theta, \theta_{1:T}) q(\theta) d\theta  \right) \left( k_\Theta(\theta_{1:T}, \theta_{1:T}) + T \lambda_\Theta \Id_T \right)^{-1} \hat{F}_{\kq}(\theta_{1:T}) \right| \nonumber \\
&\leq \left| \int_\Theta F(\theta) q(\theta) d\theta  - \int_\Theta \bar{F}_{\kq}(\theta) q(\theta) d\theta  \right| \nonumber \\
& \quad\quad + \left| \int_\Theta \bar{F}_{\kq}(\theta) q(\theta) d\theta  - \left( \int_\Theta k_\Theta(\theta, \theta_{1:T}) q(\theta) d\theta  \right) \left( k_\Theta(\theta_{1:T}, \theta_{1:T}) + T \lambda_\Theta \Id_T \right)^{-1} \hat{F}_{\kq}(\theta_{1:T}) \right| \nonumber \\
&\leq \underbrace{  \E_{\theta \sim \Qb} \left[ \left| F(\theta) - \bar{F}_{\kq}(\theta) \right| \right]}_{\text{Stage I error}} + \underbrace{ \left \| \bar{F}_{\kq}(\cdot) - k(\cdot, \theta_{1:T}) (k_\Theta(\theta_{1:T}, \theta_{1:T}) + T \lambda_\Theta \Id_T )^{-1} \hat{F}_{\kq}(\theta_{1:T}) \right \|_{L_2(\Qb )} }_{\text{Stage II error}}  .
\end{align}
The last inequality holds because $\|\cdot\|_{L_1(\Qb)} \leq \|\cdot\|_{L_2(\Qb)}$. Next, we analyze Stage I error and Stage II error separately.

\paragraph{Stage I Error}
From Assumption \ref{as:app_lipschitz}, $f$ is Lipschitz continuous and the Lipschitz constant is bounded by $S_4$,
\begin{align}\label{eq:lipschitz_F}
     \left| \bar{F}_{\kq}(\theta) - F(\theta) \right| &= \left| \E_{x_{1:N}^{(\theta)} \sim \Pb_\theta} \hat{F}_{\kq} (\theta; x_{1:N}^{(\theta)}) - F(\theta) \right| \nonumber \\
     &\leq \E_{x_{1:N}^{(\theta)} \sim \Pb_\theta} \left| \hat{F}_{\kq} (\theta; x_{1:N}^{(\theta)}) - F(\theta) \right| \nonumber \\
     &\leq S_4 \E_{x_{1:N}^{(\theta)} \sim \Pb_\theta} \left| \hat{J}_{\kq}(\theta; x_{1:N}^{(\theta)}) - J(\theta) \right| ,
\end{align}
where the first inequality holds by Jensen inequality and the last inequality holds by Lipschitz continuity of $f$.
Define 
\begin{align}
\label{eq:defn_of_g}
    \hat{g}(x, \theta; x_{1:N}^{(\theta)}) = k_\calX(x, x_{1:N}^{(\theta)}) (k_\calX(x_{1:N}^{(\theta)}, x_{1:N}^{(\theta)}) + N \lambda_\calX \Id_N)^{-1} g(x_{1:N}^{(\theta)}, \theta) .
\end{align}
Here $\hat{g}(\cdot, \theta; x_{1:N}^{(\theta)}) \in L_2(\Pb_\theta)$ because the Sobolev reproducing kernel $k_\calX$ is bounded and measurable; and $g(\cdot, \theta) \in L_2(\Pb_\theta)$ by Assumption \ref{as:app_true_J_smoothness}. Thus, 
\begin{align}\label{eq:F_L_g}
    \left| \hat{J}_{\kq}(\theta; x_{1:N}^{(\theta)}) - J(\theta) \right| &= \left| \int ( \hat{g}(x, \theta; x_{1:N}^{(\theta)}) - g(x, \theta) ) p(x; \theta) dx \right| \leq \left\| \hat{g}(\cdot, \theta; x_{1:N}^{(\theta)}) - g(\cdot, \theta) \right\|_{L_2(\Pb_\theta)} .
\end{align}
Based on Assumption~\ref{as:app_true_g_smoothness}, $g(\cdot, \theta) \in W_2^{s_\calX}(\calX)$ for any $\theta \in \Theta$.
Therefore, based on \Cref{prop:noiseless_krr}, if one takes $\lambda_{\calX, N} \asymp N^{-2 \frac{s_\calX}{d_\calX}} (\log N)^{\frac{2s_\calX+2}{d_\calX}}$, then there exists $N_0$ such that for $N > N_0$,
\begin{align}\label{eq:high_prob_hat_g_g}
    \left\| \hat{g}(\cdot, \theta; x_{1:N}^{(\theta)}) - g(\cdot, \theta) \right\|_{L_2( \Pb_\theta ) } \leq \mathfrak{C} \tau N^{-\frac{s_\calX}{d_\calX}} (\log N)^{\frac{s_\calX+1}{d_\calX}} \| g(\cdot, \theta)\|_{s_\calX,2},
\end{align}
holds with probability at least $1 - 4 e^{-\tau}$. The probability is taken over the distribution of $x_{1:N}^{(\theta)}$, i.e $\Pb_\theta$.
Here $\mathfrak{C}$ is a constant independent of $N$. 
Hence, with \Cref{lem:prob_to_expectation}, we have
\begin{align}\label{eq:F_KQ_F}
     \E_{x_{1:N}^{(\theta)} \sim \Pb_\theta} \left\| \hat{g}(\cdot, \theta; x_{1:N}^{(\theta)}) - g(\cdot, \theta) \right\|_{L_2( \Pb_\theta ) } \leq \mathfrak{C} N^{-\frac{s_\calX}{d_\calX}} (\log N)^{\frac{s_\calX+1}{d_\calX}} \| g(\cdot, \theta)\|_{s_\calX,2}.
\end{align}
By plugging the above inequality back into \eqref{eq:F_L_g}, we obtain
\begin{align*}
    \E_{x_{1:N}^{(\theta)} \sim \Pb_\theta} \left| \hat{J}_{\kq}(\theta; x_{1:N}^{(\theta)}) - J(\theta) \right| \leq \mathfrak{C} N^{-\frac{s_\calX}{d_\calX}} (\log N)^{\frac{s_\calX+1}{d_\calX}} \| g(\cdot, \theta)\|_{s_\calX,2}.
\end{align*}
Therefore, the Stage I error can be upper bounded by
\begin{align}\label{eq:stage_1}
 \E_{\theta \sim \Qb} \left| F(\theta) - \bar{F}_{\kq}(\theta) \right| 
&\leq S_4 \E_{x_{1:N}^{(\theta)} \sim \Pb_\theta} \left| \hat{J}_{\kq}(\theta; x_{1:N}^{(\theta)}) - J(\theta) \right| \| g(\cdot, \theta)\|_{s_\calX,2} \nonumber \\
&\leq S_4 S_1 \mathfrak{C} N^{-\frac{ s_\calX}{d_\calX}} (\log N)^{\frac{s_\calX+1}{d_\calX}} \nonumber \\
&= C_3 N^{-\frac{ s_\calX}{d_\calX}} (\log N)^{\frac{s_\calX+1}{d_\calX}},
\end{align}
where $C_3 := S_4 S_1 \mathfrak{C}$ is a constant independent of $N$.

\paragraph{Stage II Error}
The upper bound on the stage II error is done in five steps. In step one, we prove that $\hat{J}_{\kq}(\cdot; x_{1:N}^{(\theta)}) \in W_2^{s_\Theta} (\Theta)$ given fixed samples $x_{1:N}^{(\theta)}$. In step two, we show that $J \in W_2^{s_\Theta} (\Theta)$. 
In step three, we upper bound $\|\hat{J}_{\kq}(\cdot; x_{1:N}^{(\theta)})\|_{s_\Theta,2}$ through the triangular inequality that
$\|\hat{J}_{\kq}(\cdot; x_{1:N}^{(\theta)})\|_{s_\Theta,2} \leq \|J\|_{s_\Theta,2} + \|J - \hat{J}_{\kq}(\cdot; x_{1:N}^{(\theta)})\|_{s_\Theta,2}$.
In step four, we upper bound $\bar{F}_{\kq}(\theta) = \E_{x_{1:N}^{(\theta)}}\left[f\left( \hat{J}_{\kq}(\cdot; x_{1:N}^{(\theta)}) \right)\right]$ through marginalizing out the samples $x_{1:N}^{(\theta)}$. In the last step, we use kernel ridge regression bound proved in \Cref{prop:krr_all} to upper bound the stage II error.

\underline{\emph{Step One.}} In this step, we are going to show that $\hat{J}_{\kq}$ lies in the Sobolev space $W_2^{s_\Theta} (\Theta)$ given fixed samples $x_{1:N}^{(\theta)}$. 
Notice that the dependence of $\hat{J}_{\kq}(\theta)$ on $\theta$ is through two mappings: $\theta \mapsto \int_\calX k_\calX (x, x_{1:N}^{(\theta)}) p(x; \theta) dx$ and $\theta \mapsto g(x_{1:N}^{(\theta)}, \theta)$. 
We are going to show that $ \theta \mapsto \int_\calX k_\calX (x, x_i^{(\theta)}) p(x; \theta) dx$ lies in the Sobolev space $W_2^{s_\Theta} (\Theta)$ for any $i \in \{1, \ldots, N\}$. 
To this end, we are going to demonstrate it possesses weak derivatives up to and including order $s_\Theta$ that lie in $\calL^2(\Theta)$.
Take $\varphi : \Theta \to \R$ to be any infinitely differentiable function with compact support in $\Theta$ (commonly denoted as $\varphi \in C_c^\infty(\Theta)$), with its standard, non-weak derivative of order $\beta$ denoted by $\partial^\beta \varphi$. Since $\theta \mapsto p(x; \theta) \in W_2^{s_\Theta}(\Theta)$, for any $| \beta| \leq s_\Theta$ it has a weak derivative $\theta \mapsto D_\theta^\beta p(x; \theta) \in \calL^2(\Theta)$. Then,
\begin{align}\label{eq:weak_derivative}
\begin{aligned}
    &\quad \int_\Theta \varphi(\theta) \int_\calX k_\calX (x, x_i^{(\theta)}) D_\theta^\beta p(x; \theta) dx \stackrel{(i)}{=} \int_\calX k_\calX (x, x_i^{(\theta)}) \int_\Theta \varphi(\theta) D_\theta^\beta p(x; \theta) d\theta dx \\
    &\stackrel{(ii)}{=} (-1)^{|\beta|} \int_\calX k_\calX (x, x_i^{(\theta)}) \int_\Theta \partial^\beta \varphi(\theta) p(x; \theta) d\theta dx \stackrel{(iii)}{=} (-1)^{|\beta|} \int_\Theta \partial^\beta \varphi(\theta) \int_\calX k_\calX (x, x_i^{(\theta)}) p(x; \theta) dx d\theta. 
\end{aligned}
\end{align}
In the above chain of derivations, we are allowed to swap the integration order in $(i)$ by the Fubini theorem~\citep{rudin1964principles} because $k_\calX$ is bounded and the fact that $\theta \mapsto \varphi(\theta) \cdot D_\theta^\beta p(x; \theta) \in L_1(\Theta)$ since $D_\theta^\beta p(x; \cdot) \in L_2(\Theta)$ (Assumption \ref{as:app_true_J_smoothness}) and $\varphi \in L_2(\Theta)$; $(ii)$ holds by definition of weak derivatives for $D_\theta^\beta p(x; \theta)$; and $(iii)$ holds again by the Fubini theorem. By definition of weak derivatives,~\eqref{eq:weak_derivative} shows that $\int_\calX k_\calX (x, x_i^{(\theta)}) p(x; \theta) dx$ has a weak derivative of order $\beta$ of the form
\begin{equation*}
    D_\theta^\beta \left[\int_\calX k_\calX (x, x_i^{(\theta)}) p(x; \theta) dx \right] = \int_\calX k_\calX (x, x_i^{(\theta)}) D_\theta^\beta p(x; \theta) dx 
\end{equation*}
Also, since $k_\calX$ is bounded and $\theta \mapsto D_\theta^\beta p(x; \theta) \in \calL^2(\Theta)$, the weak derivative above is in $\calL^2(\Theta)$.
Consequently, we have
\begin{align*}
    \sum_{ | \beta| \leq s_\Theta }  \int_\Theta \left| D_\theta^\beta \int_\calX k_\calX (x, x_i^{(\theta)}) p(x; \theta) dx \right|^2 d \theta 
    &= \sum_{ | \beta| \leq s_\Theta }  \int_\Theta \left| \int_\calX k_\calX (x, x_i^{(\theta)}) D_\theta^\beta p(x; \theta) dx  \right|^2 d \theta \\
    &\stackrel{(i)}{\leq} \text{Vol}(\calX) \sum_{ | \beta| \leq s_\Theta }  \int_\Theta \int_\calX \left|  k_\calX (x, x_i^{(\theta)}) D_\theta^\beta p(x; \theta) dx  \right|^2 d \theta \\
    &\stackrel{(ii)}{\leq} \text{Vol}(\calX) \sum_{ | \beta| \leq s_\Theta }  \kappa^2 \int_\Theta \int_\calX \left| D_\theta^\beta p(x; \theta) \right|^2 dx d \theta \\
    &\stackrel{(iii)}{=} \text{Vol}(\calX) \kappa^2 \int_\calX \left\| p(x; \cdot) \right\|_{s_\Theta, 2}^2 dx .
\end{align*}
In the above chain of derivations, $(i)$ holds because $| \int_\calX f(x) dx |^2 \leq \text{Vol}(\calX) \int_\calX | f(x) |^2 dx $ for compact $\calX$, $(ii)$ holds because $k_\calX$ is upper bounded by $\kappa$ and $\int_\Theta |D_\theta^\beta p(x; \theta)|^2 d\theta < \infty$ from Assumption \ref{as:app_true_J_smoothness}, $(iii)$ holds because 
$p(x; \cdot) \in W_2^{s_\Theta}(\Theta)$ for any $x \in \calX$ based on Assumption \ref{as:app_true_J_smoothness}. Also, one can interchange the order of integration in $(iii)$ by the Fubini's theorem~\citep{rudin1964principles}.

As a result, for any $i, j \in \{1, \ldots, N\}$, we have $f_{1, i}: \theta \mapsto \int_\calX k_\calX (x, x_i^{(\theta)}) p(x; \theta) dx \in W_2^{s_\Theta}(\Theta)$ and $f_{2, j}: \theta \mapsto g(x_j^{(\theta)}, \theta) \in W_2^{s_\Theta}(\Theta)$ from Assumption \ref{as:app_true_J_smoothness}. 
Therefore, we know from \Cref{lem:sobolev_algebra} that their product $f_{1, i} \cdot f_{2,j} \in W_2^{s_\Theta} (\Theta)$ hence $\hat{J}_{\kq}$ as a linear combination of $f_{1, i} \cdot f_{2,j}$ is in $W_2^{s_\Theta} (\Theta)$.

\underline{\emph{Step Two.}} In this step, we are going to show that $J: \theta \mapsto \int_\calX g(x, \theta)p(x; \theta) dx$ is also in the Sobolev space $W_2^{s_\Theta} (\Theta)$. 
Since both $g(x, \cdot)\in W_2^{s_\Theta}(\Theta)$ and $p(x; \cdot)\in W_2^{s_\Theta}(\Theta)$, we know from \Cref{lem:sobolev_algebra} that $\theta \mapsto g(x, \theta) \cdot p(x,\theta) \in W_2^{s_\Theta}(\Theta)$. 
By following the same steps as in \eqref{eq:weak_derivative}, we obtain that for any $| \beta| \leq s_\Theta$, 
\begin{align}\label{eq:interchange_integral_derivative}
    D_\theta^\beta \int_\calX g(x, \theta) p(x; \theta) dx = \int_\calX D_\theta^\beta \Big( g(x, \theta) p(x; \theta) \Big) dx .
\end{align}
We are now ready to study the Sobolev norm of $J$,
\begin{align}\label{eq:J}
    \left\| J \right\|_{s_\Theta, 2}^2 
    &:= \sum_{ | \beta| \leq s_\Theta } \int_\Theta \left| D_\theta^\beta \int_\calX p(x; \theta) g(x, \theta) dx \right|^2 d\theta \nonumber \\
    &\stackrel{(i)}{=} \sum_{ | \beta| \leq s_\Theta } \int_\Theta \left| \int_\calX D_\theta^\beta \Big( p(x; \theta) g(x, \theta) \Big) dx \right|^2 d \theta \nonumber \\
    &\stackrel{(ii)}{\leq} \text{Vol}(\calX) \int_\calX \sum_{ | \beta| \leq s_\Theta } \int_\Theta \left| D_\theta^\beta \Big( p(x; \theta) g(x, \theta) \Big) \right|^2 d \theta dx \nonumber \\
    &\stackrel{(iii)}{=} \text{Vol}(\calX) \int_\calX \left\| p(x; \cdot) g(x, \cdot)  \right\|_{s_\Theta, 2}^2 dx \nonumber \\
    &\stackrel{(iv)}{\leq} \text{Vol}(\calX)^2  S_2^2 S_3^2
\end{align}
Here, $(i)$ holds by \eqref{eq:interchange_integral_derivative}, $(ii)$ holds since $| \int_\calX f(x) dx |^2 \leq \text{Vol}(\calX) \int_\calX | f(x) |^2 dx $ for compact $\calX$, 
$(iii)$ follows from the definition of Sobolev norm, and $(iv)$ holds by \Cref{lem:sobolev_algebra} and Assumption \ref{as:app_true_J_smoothness} that $\left\| g(x, \cdot) \right\|_{s_\Theta, 2} \leq S_2$, $\left\| p(x; \cdot) \right\|_{s_\Theta, 2} \leq S_3$. 

\underline{\emph{Step Three.}}
In this step, we study the Sobolev norm of $\hat{J}_{\kq}$ for some fixed $x_{1:N}^{(\theta)}$, by upper bounding it with $\| J - \hat{J}_{\kq}(\cdot ; x_{1:N}^{(\theta)}) \|_{s_\Theta, 2} + \| J \|_{s_\Theta, 2}$. 
Since $g(x_i^{(\theta)}, \cdot) \in W_2^{s_\Theta}(\Theta) $ by Assumption \ref{as:app_true_J_smoothness}, it holds that $\hat{g}(x, \cdot; x_{1:N}^{(\theta)}) = k_\calX(x, x_{1:N}^{(\theta)}) (k_\calX(x_{1:N}^{(\theta)}, x_{1:N}^{(\theta)}) + N \lambda_\calX \Id_N)^{-1} g(x_{1:N}^{(\theta)}, \cdot)$ is in $W_2^{s_\Theta}(\Theta)$ for any fixed $x_{1:N}^{(\theta)}$. Therefore,
\begin{align}\label{eq:p_g_hat_g}
    \left\| p(x; \cdot) \Big( g(x, \cdot) - \hat{g}(x, \cdot; x_{1:N}^{(\theta)}) \Big) \right\|_{s_\Theta, 2} 
    &\leq  \left\| p(x; \cdot) \right\|_{s_\Theta, 2} \left\| g(x, \cdot) - \hat{g}(x, \cdot; x_{1:N}^{(\theta)}) \right\|_{s_\Theta, 2} \nonumber \\
    &\leq  S_3 \left\| g(x, \cdot) - \hat{g}(x, \cdot; x_{1:N}^{(\theta)}) \right\|_{s_\Theta, 2},
\end{align}
where the first inequality holds by \Cref{lem:sobolev_algebra} and the second inequality holds by Assumption \ref{as:app_true_J_smoothness} that $\left\| p(x; \cdot) \right\|_{s_\Theta, 2} \leq S_3$. Now, we consider the Sobolev norm of $\hat{J}_{\kq} - J$, 
\begin{align}\label{eq:hat_J_J}
    \left\| \hat{J}_{\kq}(\cdot ; x_{1:N}^{(\theta)}) - J \right\|_{s_\Theta, 2}^2 
    &= \sum_{ | \beta| \leq s_\Theta } \int_\Theta \left| D_\theta^\beta \int_\calX p(x; \theta) \left( g(x, \theta) - \hat{g}(x, \theta ; x_{1:N}^{(\theta)}) \right) dx \right|^2 d\theta \nonumber \\
    &\stackrel{(i)}{=} \sum_{ | \beta| \leq s_\Theta } \int_\Theta \left| \int_\calX D_\theta^\beta \Big( p(x; \theta) \left( g(x, \theta) - \hat{g}(x, \theta; x_{1:N}^{(\theta)}) \right) \Big) dx \right|^2 d \theta \nonumber \\
    &\stackrel{(ii)}{\leq} \text{Vol}(\calX) 
    \int_\calX \sum_{ | \beta| \leq s_\Theta } \int_\Theta \left| D_\theta^\beta \Big( p(x; \theta) \left( g(x, \theta) - \hat{g}(x, \theta ; x_{1:N}^{(\theta)}) \right) \Big) \right|^2 d \theta dx \nonumber \\
    &\stackrel{(iii)}{=} \text{Vol}(\calX) 
    \int_\calX \left\| p(x; \cdot) \Big( g(x, \cdot) - \hat{g}(x, \cdot ; x_{1:N}^{(\theta)}) \Big) \right\|_{s_\Theta, 2}^2 dx \nonumber \\
    &\stackrel{(iv)}{\leq} \text{Vol}(\calX)  S_3^2 \int_\calX \left\| g(x, \cdot) - \hat{g}(x, \cdot ; x_{1:N}^{(\theta)}) \right\|_{s_\Theta, 2}^2 dx,
\end{align}
where the above chain of derivations (i) --- (iv) follow the exact same reasoning as \eqref{eq:weak_derivative} and \eqref{eq:J}. Next, notice that
\begin{align}\label{eq:g_g_hat_sobolev}
    \int_\calX \left\| g(x, \cdot) - \hat{g}(x, \cdot; x_{1:N}^{(\theta)}) \right\|_{s_\Theta, 2}^2 dx &= \int_\calX \sum_{ | \beta| \leq s_\Theta } \int_\Theta \left| D_\theta^\beta \left( g(x, \theta) - \hat{g}(x, \theta; x_{1:N}^{(\theta)}) \right) \right|^2 d \theta dx \nonumber \\
    &= \sum_{ | \beta| \leq s_\Theta } \int_\Theta \left\| D_\theta^\beta  g(\cdot, \theta) - D_\theta^\beta \hat{g}(\cdot, \theta; x_{1:N}^{(\theta)}) \right\|_{L_2(\calX)}^2 d\theta .
\end{align}
By Assumption \ref{as:app_true_g_smoothness}, $D_\theta^\beta  g(\cdot, \theta) \in W_2^{s_\calX}(\calX)$ for any $|\beta| \leq s_\Theta$. Therefore, by applying~\Cref{prop:noiseless_krr} with $h^\ast(\cdot):= D_\theta^\beta g(\cdot, \theta)$, and $\hat{h}_{\lambda}(\cdot) := D_\theta^\beta \hat{g}(\cdot, \theta; x_{1:N}^{(\theta)}) = k_\calX(\cdot, x_{1:N}^{(\theta)}) (k_\calX(x_{1:N}^{(\theta)}, x_{1:N}^{(\theta)}) + N \lambda_\calX \Id_N)^{-1} D_\theta^\beta g(x_{1:N}^{(\theta)}, \theta; x_{1:N}^{(\theta)})$, we get that
\begin{align}\label{eq:D_g_D_g_hat}
    \left\| D_\theta^\beta  g(\cdot, \theta) - D_\theta^\beta \hat{g}(\cdot, \theta; x_{1:N}^{(\theta)}) \right\|_{L_2(\Pb_\theta)} \leq \mathfrak{C} \tau N^{-\frac{s_\calX}{d_\calX}} (\log N)^{\frac{s_\calX+1}{d_\calX}} \left\| D_\theta^\beta g(\cdot, \theta) \right \|_{s_\calX,2}
\end{align}
holds with probability at least $ 1 - 4e^{-\tau}$, for a $\mathfrak{C}$ that only depends on $\calX, G_{0, \calX}, G_{1, \calX}$. 
From Assumption 
\ref{as:equivalence}, we know that $L_2(\Pb_\theta) \cong L_2(\calX)$ (they are norm equivalent) and $\|f\|_{L_2(\calX)} \leq \text{Vol}(\calX)^{-1} G_{0, \calX}^{-1} \|f\|_{L_2(\Pb_\theta)}$ for any $f \in L_2(\Pb_\theta)$. 
Therefore, for any $\theta \in \Theta$ and any $|\beta| \leq s_\Theta$, with probability at least $ 1 - 4e^{-\tau}$,
\begin{align}\label{eq:D_g_D_g_hat_lebesgue}
    \left\| D_\theta^\beta  g(\cdot, \theta) - D_\theta^\beta \hat{g}(\cdot, \theta; x_{1:N}^{(\theta)}) \right\|_{L_2(\calX)} &\leq \mathfrak{C} \tau \text{Vol}(\calX)^{-1} G_{0, \calX}^{-1} N^{-\frac{s_\calX}{d_\calX}}  (\log N)^{\frac{s_\calX+1}{d_\calX}} \left\| D_\theta^\beta g(\cdot, \theta) \right \|_{s_\calX,2} \nonumber \\
    &\leq \mathfrak{C} \tau \text{Vol}(\calX)^{-1} G_{0, \calX}^{-1} N^{-\frac{s_\calX}{d_\calX}}  (\log N)^{\frac{s_\calX+1}{d_\calX}} S_1.
\end{align}
By plugging \eqref{eq:D_g_D_g_hat_lebesgue} into \eqref{eq:g_g_hat_sobolev}, and then plugging the result into \eqref{eq:hat_J_J}, we get that with probability at least $ 1 - 4e^{-\tau}$, 
\begin{align}\label{eq:hat_j_j_sobolev}
    \left\| \hat{J}_{\kq}(\cdot; x_{1:N}^{(\theta)}) - J \right\|_{s_\Theta, 2} &\leq \left(\sum_{|\beta| \leq s_\Theta } 1 \right) \mathfrak{C} \tau 
    G_{0, \calX}^{-1} \text{Vol}(\calX)^{-1}  S_1 S_3 N^{-\frac{s_\calX}{d_\calX}}  (\log N)^{\frac{s_\calX+1}{d_\calX}} \nonumber \\
    &= \binom{s_\Theta + d_\Theta -1}{d_\Theta -1} \mathfrak{C} \tau 
    G_{0, \calX}^{-1} \text{Vol}(\calX)^{-1}  S_1 S_3 N^{-\frac{s_\calX}{d_\calX}}  (\log N)^{\frac{s_\calX+1}{d_\calX}}.
\end{align}
By combining this result with the bound $\left\| J \right\|_{s_\Theta, 2} \leq \text{Vol}(\calX)  S_2 S_3$ proven in \eqref{eq:J}, we get that with probability at least $ 1 - 4e^{-\tau}$ and any $N>N_0$ it holds that
\begin{align}\label{eq:hat_J_sobolev_norm}
    \left\| \hat{J}_{\kq}(\cdot; x_{1:N}^{(\theta)}) \right\|_{s_\Theta, 2} &\leq \left\| \hat{J}_{\kq}(\cdot; x_{1:N}^{(\theta)}) - J \right\|_{s_\Theta, 2} + \left\| J \right\|_{s_\Theta, 2} \nonumber \\
    &\leq \binom{s_\Theta + d_\Theta -1}{d_\Theta -1} \mathfrak{C}  \tau 
    G_{0, \calX}^{-1} \text{Vol}(\calX)^{-1}  S_1 S_3 N^{-\frac{s_\calX}{d_\calX}}  (\log N)^{\frac{s_\calX+1}{d_\calX}} + \text{Vol}(\calX)  S_2 S_3 \nonumber \\
    &\leq 2 \text{Vol}(\calX)  S_2 S_3,
\end{align}
where $N_0$ is defined as the smallest integer for which the first term is subsumed by the second term.

\underline{\emph{Step Four.}} 
In this step, we are going to upper bound the Sobolev norm of $\bar{F}_{\kq}$.
From Chapter 5, Exercise 16 of \cite{evans2022partial}, we have 
$\hat{F}_{\kq} = f \circ \hat{J}_{\kq} $ is in $W_2^{s_\Theta} (\Theta)$ because $f$ has bounded derivatives up to including $s_\Theta+1$ and $\|\hat{J}(\cdot; x_{1:N}^{(\theta)})\|_{s_\Theta, 2} \leq \text{Vol}(\calX)  S_2 S_3$ with probability at least $1- 4e^{-\tau}$ proved in \eqref{eq:hat_J_sobolev_norm}. 
Hence, $\|\hat{F}_{\kq}(\cdot; x_{1:N}^{(\theta)})\|_{s_\Theta, 2} \leq C_6$ holds with probability at least $1- 4e^{-\tau}$.
Next, recall the definition of $\bar{F}_{\kq}(\theta)$ in \eqref{eq:bar_F_KQ_all_theta},
\begin{align*}
    \bar{F}_{\kq}(\theta) = \mathbb{E}_{x_{1: N}^{(\theta)} \sim \mathbb{P}_\theta}\left[\hat{F}_{\mathrm{KQ}}\left(\theta ; x_{1: N}^{(\theta)}\right)\right] = \int_{\calX} \cdots \int_{\calX}  \hat{F}_{\kq}(\theta; x_{1:N}^{(\theta)}) p(x_1^{(\theta)}; \theta) p(x_2^{(\theta)}; \theta) \cdots p(x_N^{(\theta)}; \theta) d x_1^{(\theta)} d x_2^{(\theta)} \cdots d x_N^{(\theta)} .
\end{align*}
For any $i = 1, \ldots, N$, we know that $\| p(x_i^{(\theta)}; \cdot) \|_{s_\Theta, 2} \leq S_3$ from Assumption \ref{as:app_true_J_smoothness} and $\| \hat{F}_{\kq}(\cdot; x_{1:N}^{(\theta)}) \|_{s_\Theta, 2} \leq C_6$ proved above.
Therefore, from \Cref{lem:sobolev_algebra} we have $\| p(x_i^{(\theta)}; \cdot) \hat{F}_{\kq}(\cdot; x_{1:N}^{(\theta)})\|_{s_\Theta, 2}$ is bounded, so $x_i^{(\theta)} \mapsto p(x_i^{(\theta)}; \cdot) \hat{F}_{\kq}(\cdot; x_{1:N}^{(\theta)})$ is Bochner integrable with respect to the Lebesgue measure $\calL_\calX$. 
From \Cref{lem:integral_in_hilbert}, we have,
\begin{align}\label{eq:bar_F_norm}
    \left\| \bar{F}_{\kq} \right\|_{s_\Theta, 2}
    &\leq \int_{\calX} \cdots \int_{\calX} \left\| \hat{F}_{\kq}(\cdot; x_{1:N}^{(\theta)}) \prod_{n=1}^N p(x_n^{(\theta)}; \cdot)  \right\|_{s_\Theta, 2} d x_1^{(\theta)} d x_2^{(\theta)} \cdots d x_N^{(\theta)}  \nonumber \\
    & \leq \int_{\calX} \cdots \int_{\calX} \left\| \hat{F}_{\kq}(\cdot; x_{1:N}^{(\theta)}) \right\|_{s_\Theta, 2} \left( \prod_{n=1}^N \| p(x_n^{(\theta)}; \cdot) \|_{s_\Theta, 2} \right) d x_1^{(\theta)} d x_2^{(\theta)} \cdots d x_N^{(\theta)} \nonumber \\
    &\leq C_6 S_3^N \text{Vol}(\calX)^N \nonumber \\
    &\leq C_6 .
\end{align}
The last inequality holds by $S_3\leq1$ from Assumption \ref{as:app_true_J_smoothness} and $\calX=[0,1]^{d_\calX}$ so $\text{Vol}(\calX)=1$.

\underline{\emph{Step Five.}} 
We are now ready to upper bound the stage II error, which was defined as
\begin{align*}
    \text {Stage II error } = \left\|\bar{F}_{\kq}(\cdot) - k\left(\cdot, \theta_{1: T}\right)\left(k_{\Theta}\left(\theta_{1: T}, \theta_{1: T}\right)+T \lambda_{\Theta} \Id_T\right)^{-1} \hat{F}_{\kq}\left(\theta_{1: T}\right)\right\|_{L_2(\Qb)} .
\end{align*}
The idea is to treat the stage II error as the generalization error of kernel ridge regression---which can be bounded via \Cref{prop:krr_all}. Given i.i.d. observations $(\theta_1, \hat{F}_{\kq}(\theta_1, x_{1:N}^{(\theta_1)}) ), \ldots, (\theta_T, \hat{F}_{\kq}(\theta_T, x_{1:N}^{(\theta_T)}) )$, the target of interest in the context of regression is the conditional mean, which in our case is precisely $ \bar{F}_{\kq}(\theta) = \E_{x_{1:N}^{(\theta)} \sim \Pb_\theta} \hat{F}_{\kq} (\theta; x_{1:N}^{(\theta)})$ defined in \eqref{eq:bar_F_KQ_all_theta}. 
Alternatively, $\hat{F}_{\kq}(\theta; x_{1:N}^{(\theta)})$ can be treated as noisy observation of the target function $\bar{F}_{\kq}(\theta)$ where the observation noise is defined as $ r: \Theta \to \R $ with $r(\theta;x_{1:N}^{(\theta)}) = \hat{F}_{\kq}(\theta; x_{1:N}^{(\theta)}) - \bar{F}_{\kq}(\theta)$. So we automatically have $\E_{x_{1:N}^{(\theta)} \sim \Pb_{\theta} }[r(\theta)] = 0$. For any positive integer $m \geq 2$,
\begin{align}\label{eq:bernstein_noise}
    \E_{x_{1:N}^{(\theta)} \sim \Pb_{\theta}} [|r(\theta)|^m] &= \E_{x_{1:N}^{(\theta)} \sim \Pb_{\theta}} \left| \hat{F}_{\kq}(\theta ; x_{1:N}^{(\theta)} ) - \bar{F}_{\kq}(\theta) \right|^m \nonumber \\
    &\stackrel{(i)}{\leq} 2^{m-1} \E_{x_{1:N}^{(\theta)} \sim \Pb_{\theta}} \left| \hat{F}_{\kq}(\theta ; x_{1:N}^{(\theta)} ) - F(\theta) \right|^m + 2^{m-1} \left| \bar{F}_{\kq}(\theta) - F(\theta) \right|^m \nonumber \\
    &= 2^{m-1} \E_{x_{1:N}^{(\theta)} \sim \Pb_{\theta}} \left| \hat{F}_{\kq}(\theta ; x_{1:N}^{(\theta)}) - F(\theta) \right|^m + 2^{m-1} \left| \E_{x_{1:N}^{(\theta)} \sim \Pb_{\theta}} \hat{F}_{\kq}(\theta; x_{1:N}^{(\theta)}) - F(\theta) \right|^m \nonumber \\
    &\leq 2^{m-1} \E_{x_{1:N}^{(\theta)} \sim \Pb_{\theta}} \left| \hat{F}_{\kq}(\theta; x_{1:N}^{(\theta)}) - F(\theta) \right|^m + 2^{m-1} \E_{x_{1:N}^{(\theta)} \sim \Pb_{\theta}} \left| \hat{F}_{\kq}(\theta; x_{1:N}^{(\theta)}) - F(\theta) \right|^m \nonumber \\
    &= 2^m \E_{x_{1:N}^{(\theta)} \sim \Pb_{\theta}} \left| \hat{F}_{\kq}(\theta; x_{1:N}^{(\theta)}) - F(\theta) \right|^m \nonumber \\
    &\leq 2^m S_4^m \E_{x_{1:N}^{(\theta)} \sim \Pb_{\theta}} \left| \hat{J}_{\kq}(\theta; x_{1:N}^{(\theta)}) - J(\theta) \right|^m \nonumber \\
    &\stackrel{(ii)}{\leq} 2^m m! S_4^m S_1^m \mathfrak{C}^m N^{-m\frac{s_\calX}{d_\calX}} (\log N)^{m\frac{s_\calX+1}{d_\calX}} .
\end{align}
In the above chain of derivations, $(i)$ holds because $(a+ b)^m \leq 2^{m-1}(a^m + b^m)$. $(ii)$ holds because we know from \eqref{eq:F_L_g} and \eqref{eq:high_prob_hat_g_g} that $| \hat{J}_{\kq}(\theta; x_{1:N}^{(\theta)}) - J(\theta) | \leq \mathfrak{C} \tau N^{-\frac{s_\calX}{d_\calX}} (\log N)^{\frac{s_\calX+1}{d_\calX}} S_1$ holds with probability at least $1 - 4 e^{-\tau}$, and so $\E_{x_{1:N}^{(\theta)} \sim \Pb_{\theta}} | \hat{J}_{\kq}(\theta; x_{1:N}^{(\theta)}) - J(\theta)|^m$ can be bounded via \Cref{lem:prob_to_expectation}. Therefore, by comparing \eqref{eq:bernstein_noise} with \eqref{eq:mom}, we can see that the observation noise $r$ indeed satisfy the Bernstein noise moment condition with 
\begin{align*}
    \sigma = L = 2 S_4 S_1 \mathfrak{C} N^{-\frac{s_\calX}{d_\calX}}(\log N)^{\frac{s_\calX+1}{d_\calX}} = C_7 N^{-\frac{s_\calX}{d_\calX}} (\log N)^{\frac{s_\calX+1}{d_\calX}},
\end{align*} 
for $C_7 := 2 S_4 S_1 \mathfrak{C}$ a constant independent of $N,T$.
Before we employ \Cref{prop:krr_all}, we need to check the Assumptions \ref{as:kernel}---\ref{as:noise}. Assumption \ref{as:kernel} is satisfied for our choice of kernel $k_\Theta$. Assumption \ref{as:density} is satisfied due to Assumption \ref{as:equivalence}. 
Assumption \ref{as:bayes_predictor} is satisfied due to \eqref{eq:bar_F_norm}. Assumption \ref{as:noise} is satisfied for the Bernstein noise moment condition verified above.
Next, we compute all the constants in \Cref{prop:krr_all} in the current context. 
$\calN(\lambda_\Theta)$ is the effective dimension defined in \Cref{lem:dof} upper bounded by $D_\Theta \lambda_\Theta^{- d_\Theta / 2s_\Theta}$, $k_{\alpha}$ with $\alpha = \frac{2s_\Theta}{d_\Theta}$ defined in \Cref{lem:embedding} is upper bounded by a constant $M_\Theta$, 
$\| \Sigma_{\Qb}\|$ is the norm of the covariance operator defined in \eqref{eq:covariance_operator}.  Hence 
\begin{align*}
    g_{\lambda_\Theta} &:=\log \left(2 e \mathcal{N}(\lambda_\Theta) \frac{ \| \Sigma_{\Qb}\| + \lambda_\Theta }{ \| \Sigma_{\Qb}\| } \right), \qquad A_{\lambda_\Theta, \tau} := 8 k_{\alpha}^2 \tau g_{\lambda_\Theta} \lambda_\Theta^{-\frac{d_\Theta}{2s_\Theta} },  \\
    L_{\lambda_\Theta} &:= \max \left \{ L, \lambda_\Theta^{\frac{1}{2} - \frac{d_\Theta}{4s_\Theta} } \left( \| \bar{F}_{\kq} \|_{L_\infty(\Qb)} + k_{\alpha} \| \bar{F}_{\kq} \|_{s_\Theta, 2} \right) \right\} .
\end{align*}
Applying \Cref{prop:krr_all} shows that, for $T > A_{\lambda_\Theta, \tau}$, 
\begin{align}\label{eq:stage_ii_1}
     & \left\|\bar{F}_{\kq}-k\left(\cdot, \theta_{1: T}\right)\left(k_{\Theta}\left(\theta_{1: T}, \theta_{1: T}\right) + T \lambda_{\Theta} \Id_T\right)^{-1} \hat{F}_{\kq}\left(\theta_{1: T}\right)\right\|_{L_2(\Qb)}^2 \nonumber \\
     &\quad \leq \frac{576 \tau^2}{T} \left( L^2 D_\Theta \lambda_\Theta^{ -\frac{d_\Theta}{2s_\Theta} } + M_\Theta^2 \lambda_\Theta^{1 - \frac{d_\Theta}{2s_\Theta}} \left\| \bar{F}_{\kq} \right \|_{s_\Theta, 2}^2  + 2 M_\Theta^2 \frac{L_{\lambda_\Theta}^2}{T} \lambda_\Theta^{-\frac{d_\Theta}{2s_\Theta}} \right) + \left\| \bar{F}_{\kq} \right\|_{s_\Theta, 2}^2 \lambda_\Theta ,
\end{align}
holds with probability at least $1 - 4 e^{-\tau}$.
We take $\lambda_\Theta \asymp T^{-2 \frac{s_\Theta}{d_\Theta}} (\log T)^{\frac{2s_\Theta+2}{d_\Theta}}$, then similar to the derivations from \eqref{eq:ratio_N_A},
\begin{align}\label{eq:T_large_enough}
    \lim_{T \to \infty} \frac{A_{\lambda_\Theta, \tau}}{T} \leq \lim_{T \to \infty} 16 (\log T)^{- \frac{s_\Theta+1}{s_\Theta}} k_{\alpha}^2 \tau \log \left( T\right) = 0 .
\end{align}
It means there exists a finite $T_0 > 0$ such that $T > A_{\lambda_\Theta, \tau}$ holds for any $T > T_0$. Notice that, with probability at least $1-4e^{-\tau}$,
\begin{align}\label{eq:C6_use}
    \| \bar{F}_{\kq} \|_{L_\infty(\Qb)} = \| \bar{F}_{\kq} \|_{L_\infty(\Theta)} \leq R_\Theta \| \bar{F}_{\kq} \|_{s_\Theta, 2} \leq R_\Theta C_6
\end{align} 
based on \eqref{eq:bar_F_norm} and the fact that $W_2^{s_\Theta}(\Theta) \hookrightarrow L_\infty(\Theta)$ with $\|W_2^{s_\Theta}(\Theta) \hookrightarrow L_\infty(\Theta)\| \leq R_\Theta$, we have
\begin{align*}
    L_{\lambda_\Theta} &\leq \max \{ L, T^{-\frac{s_\Theta}{d_\Theta} + \frac{1}{2}} (\log T)^{\frac{s_\Theta+1}{2s_\Theta} \frac{2s_\Theta-d_\Theta}{d_\Theta}} \left( R_\Theta + M_\Theta \right) C_6  \} \\
    &= \max \{ C_7 N^{-\frac{s_\calX}{d_\calX}} (\log N)^{\frac{s_\calX+1}{d_\calX}}, T^{-\frac{s_\Theta}{d_\Theta} + \frac{1}{2}} (\log T)^{\frac{s_\Theta+1}{2s_\Theta} \frac{2s_\Theta-d_\Theta}{d_\Theta}} \left( R_\Theta + M_\Theta \right) C_6 \}.
\end{align*}
So the above \eqref{eq:stage_ii_1} can be further upper bounded by 
\begin{align*}
    &\leq \frac{576 \tau^2}{T} \left( C_7^2 N^{- 2 \frac{s_\calX}{d_\calX}} (\log N)^{\frac{2s_\calX+2}{d_\calX}} D_\Theta T (\log T)^{-\frac{s_\Theta+1}{s_\Theta}} + M_\Theta^2 T^{- \frac{ 2 s_\Theta}{d_\Theta} + 1} (\log T)^{\frac{s_\Theta+1}{s_\Theta} \frac{2s_\Theta-d_\Theta}{d_\Theta}} C_6^2 \right) \\
    &\quad\quad + \frac{576 \tau^2}{T} \cdot 2 M_\Theta^2 \frac{\max \left \{ C_7^2 N^{-2 \frac{s_\calX}{d_\calX}} (\log N)^{\frac{2s_\calX+2}{d_\calX}} , T^{-2 \frac{s_\Theta}{d_\Theta} + 1} (\log T)^{\frac{s_\Theta+1}{s_\Theta} \frac{2s_\Theta-d_\Theta}{d_\Theta}}  \left( R_\Theta + M_\Theta\right)^2 C_6^2 \right\}}{T} T (\log T)^{-\frac{s_\Theta+1}{s_\Theta}} \\
    &\qquad\qquad\qquad + C_6^2  T^{- 2 \frac{ s_\Theta}{d_\Theta}} (\log T)^{\frac{2s_\Theta+2}{d_\Theta}} \\
    &= 576 \tau^2 \left( C_7^2 N^{- 2 \frac{s_\calX}{d_\calX}} (\log N)^{\frac{2s_\calX+2}{d_\calX}} D_\Theta (\log T)^{-\frac{s_\Theta+1}{s_\Theta}} + M_\Theta^2 T^{- \frac{ 2 s_\Theta}{d_\Theta}} (\log T)^{\frac{s_\Theta+1}{s_\Theta} \frac{2s_\Theta-d_\Theta}{d_\Theta}} C_6^2 \right) \\
    &\quad\quad + 576 \tau^2 \cdot 2 M_\Theta^2 \max \left \{ C_7^2 N^{-2 \frac{s_\calX}{d_\calX}} (\log N)^{\frac{2s_\calX+2}{d_\calX}} T^{-1}, T^{-\frac{2s_\Theta}{d_\Theta}} (\log T)^{\frac{s_\Theta+1}{s_\Theta} \frac{2s_\Theta-d_\Theta}{d_\Theta}} \left( R_\Theta + M_\Theta\right)^2  C_6^2 \right\} \cdot (\log T)^{-\frac{s_\Theta+1}{s_\Theta}} \\
    &\qquad\qquad+ C_6^2  T^{- 2 \frac{ s_\Theta}{d_\Theta}} (\log T)^{\frac{2s_\Theta+2}{d_\Theta}} \\
    &\stackrel{(i)}{\leq} 576 \tau^2 C_7^2 N^{- 2 \frac{s_\calX}{d_\calX}} (\log N)^{\frac{2s_\calX+2}{d_\calX}} D_\Theta + 576 \tau^2 M_\Theta^2 T^{- \frac{ 2 s_\Theta}{d_\Theta}} (\log T)^{\frac{2s_\Theta+2}{d_\Theta}} C_6^2 \\
    &\quad\quad + 576 \tau^2 \cdot 2 M_\Theta^2 C_7^2 N^{-2 \frac{s_\calX}{d_\calX}} (\log N)^{\frac{2s_\calX+2}{d_\calX}} + 576 \tau^2 \cdot 2 M_\Theta^2 T^{-\frac{2s_\Theta}{d_\Theta}} (\log T)^{\frac{2s_\Theta+2}{d_\Theta}} \left( R_\Theta + M_\Theta\right)^2  C_6^2 + C_6^2  T^{- 2 \frac{ s_\Theta}{d_\Theta}} (\log T)^{\frac{2s_\Theta+2}{d_\Theta}}\\
    &=: \tau^2 \left( C_8^2 N^{- \frac{ 2 s_\calX}{d_\calX}} (\log N)^{\frac{2s_\calX+2}{d_\calX}} + C_9^2 T^{- \frac{ 2 s_\Theta}{d_\Theta}} (\log T)^{\frac{2s_\Theta+2}{d_\Theta}} \right).
\end{align*}
$C_8, C_9$ are two constants independent of $N,T$.
In $(i)$, we use $\max\{ a_1, a_2\}\leq a_1 + a_2$, we also use the following
\begin{align*}
    (\log T)^{\frac{s_\Theta+1}{s_\Theta} \frac{2s_\Theta-d_\Theta}{d_\Theta}} \leq (\log T)^{\frac{2s_\Theta+2}{d_\Theta}} , \quad (\log T)^{-\frac{s_\Theta+1}{s_\Theta}} \leq 1 .
\end{align*}
Therefore, we have that, 
\begin{align}\label{eq:stage_2}
    \text{Stage II error} &:= \left\| \bar{F}_{\kq} - k(\cdot, \theta_{1:T}) (k_\Theta(\theta_{1:T}, \theta_{1:T}) + T \lambda_\Theta \Id_T )^{-1} 
    \hat{F}_{\kq}(\theta_{1:T}) \right\|_{L_2( \Qb ) } \nonumber \\
    &\leq \tau \left( C_8 N^{- \frac{ s_\calX}{d_\calX}} (\log N)^{\frac{s_\calX+1}{d_\calX}} + C_9 T^{- \frac{ s_\Theta}{d_\Theta}} (\log T)^{\frac{s_\Theta+1}{d_\Theta}} \right) ,
\end{align}
holds with probability at least $1 - 8 e^{-\tau}$.
\paragraph{Combine stage I and stage II error}
Combining the stage I error of \eqref{eq:stage_1} and the stage II error of \eqref{eq:stage_2}, we obtain
\begin{align*}
    \left| I - \hat{I}_{\nkq} \right| &\leq \text{Stage I error} + \text{Stage II error} \\
    &\leq C_3 N^{-\frac{ s_\calX}{d_\calX}} (\log N)^{\frac{s_\calX+1}{d_\calX}} + \tau \left( C_8 N^{- \frac{ s_\calX}{d_\calX} } (\log N)^{\frac{s_\calX+1}{d_\calX}} + C_9 T^{- \frac{ s_\Theta}{d_\Theta}} (\log T)^{\frac{s_\Theta+1}{d_\Theta}} \right) \\
    &\leq \tau \left( (C_8 + C_3) N^{- \frac{ s_\calX}{d_\calX} } (\log N)^{\frac{s_\calX+1}{d_\calX}} + C_9 T^{- \frac{ s_\Theta}{d_\Theta} }  (\log T)^{\frac{s_\Theta+1}{d_\Theta}} \right) \\
    &=: \tau \left( C_1 N^{- \frac{ s_\calX}{d_\calX} } (\log N)^{\frac{s_\calX+1}{d_\calX}} + C_2 T^{- \frac{ s_\Theta}{d_\Theta} } (\log T)^{\frac{s_\Theta+1}{d_\Theta}} \right), 
\end{align*}
holds with probability at least $1 - 8 e^{-\tau}$.
Here $C_1, C_2$ are two constants independent of $N,T$ so the proof concludes here.

\section{Multi-Level Nested Kernel Quadrature}
In this section, we are going to introduce a novel method that combines nested kernel quadrature (NKQ) with multi-level construction as mentioned in \Cref{sec:theory}.

\subsection{Multi-Level Monte Carlo for Nested Expectation}
First, we briefly review multi-level Monte Carlo (MLMC) applied to nested expectations $I = \E_{\theta \sim \Qb} [ f(\E_{X \sim \Pb_\theta} [ g(X, \theta) ] ) ]$ introduced in Section 9 of \citet{Giles2015} and \citet{giles2019decision}.
At each level $\ell$, we are given $T_\ell$ samples $\theta_{1:T_\ell}$ sampled i.i.d from $\Qb$ and we have $N_\ell$ samples $x_{1:N_\ell}^{(\theta_t)}$ sampled i.i.d from $\Pb_{\theta_t}$ for each $t = 1, \ldots, T_\ell$. 
The MLMC implementation is to construct an estimator $P_\ell$ at each level $\ell$ such that $I$ can be decomposed into the sum of $P_\ell$.
\begin{align*}
    I \approx \E_{\theta \sim \Qb}[P_L] = \E_{\theta \sim \Qb}\left[P_0\right] + \sum_{\ell=1}^L \E_{\theta \sim \Qb}\left[P_{\ell}-P_{\ell-1}\right], \qquad P_{\ell} \coloneq f \left( \frac{1}{N_{\ell}} \sum_{n=1}^{N_\ell} g\left(x_n^{(\theta)}, \theta \right) \right) .
\end{align*}
The estimator $Y_\ell$ for $\E_{\theta \sim \Qb}[P_\ell - P_{\ell-1}]$ is
\begin{align*}
    Y_{\ell} &= \frac{1}{T_\ell} \sum_{t=1}^{T_{\ell}} \left\{ f \left( \frac{1}{N_\ell} \sum_{n=1}^{N_\ell} g\left(x_n^{(t)}, \theta_t\right)\right) - \frac{1}{2} f\left( \frac{1}{N_{\ell-1}} \sum_{n=1}^{N_{\ell-1}} g\left(x_n^{(t)}, \theta_t\right)\right) - \frac{1}{2} f\left( \frac{1}{N_{\ell-1}} \sum_{n=N_{\ell-1}+1}^{N_\ell} g\left(x_n^{(t)}, \theta_t\right)\right)\right\}, \\
    Y_0 &\coloneq \frac{1}{T_0} \sum_{t=1}^{T_0} f \left( \frac{1}{N_0} \sum_{n=1}^{N_0} g \left(x_n^{(t)}, \theta_t \right) \right) .
\end{align*}
Compared with \eqref{eq:mlmc} in the main text, notice that here we use the `antithetic' approach which further improves the performance of MLMC~\citep[Section 9]{Giles2015}.
The MLMC estimator for nested expectation can be written as 
\begin{align}
    \hat{I}_{\text{MLMC}} \coloneq \sum_{\ell=0}^L Y_\ell .
\end{align}
At each level $\ell$, the cost of $Y_\ell$ is $\calO(N_\ell \times T_\ell)$ and the expected squared error $\E[(Y_\ell - \E_{\theta \sim \Qb}[P_\ell - P_{\ell-1}])^2] = \calO(N_\ell^{-2} \times T_\ell^{-1})$ provided that $f$ has bounded second order derivative~\citep[Section 9]{Giles2015}\footnote{Section 9 of \cite{Giles2015} uses variance $\E[Y_\ell^2]$, which is equivalent to the expected square error since $Y_\ell$ is an unbiased estimate of $\E_{\theta \sim \Qb}[P_\ell - P_{\ell-1}]$.}. 
Here the expectation is taken over the randomness of samples. So the total cost and expected absolute error of MLMC for nested expectation can be written as
\begin{align}\label{eq:cost_error_mlmc}
    \text{Cost} = \calO\left( \sum_{\ell=0}^L N_\ell \times T_\ell \right), \qquad \E|I - \hat{I}_{\mlmc}| = \calO\left( \sum_{\ell=0}^L N_\ell^{-1} \times T_\ell^{-\frac{1}{2}} \right).
\end{align}
Theorem 1 of \cite{Giles2015} shows that, in order to reach error threshold $\Delta$, one can take $N_\ell \propto 2^\ell$ and $T_\ell \propto 2^{-2 \ell} \Delta^{-2}$. Therefore, one has $\E |I - \hat{I}_{\mlmc}| = \calO(\Delta)$ along with $\text{Cost} = \calO(\Delta^{-2})$.

\subsection{Multi-Level Kernel Quadrature for Nested Expectation (MLKQ)}\label{sec:mlnkq}
In this section, we present \emph{multi-level kernel quadrature} applied to nested expectation (MLKQ).
Note that MLKQ is different from the multi-level Bayesian quadrature proposed in \citet{li2023multilevel} because our MLKQ is designed specifically for nested expectations. 
At each level $\ell$, we have $T_\ell$ samples $\theta_{1:T_\ell}$ sampled i.i.d from $\Qb$ and we have $N_\ell$ samples $x_{1:N_\ell}^{(\theta_t)}$ sampled i.i.d from $\Pb_{\theta_t}$ for each $t = 1, \ldots, T_\ell$. 
Different from MLMC above, we define 
\begin{align*}
    I \approx \E_{\theta \sim \Qb}[P_{\nkq, L}] = \E_{\theta \sim \Qb}\left[P_{\nkq, 0}\right] + \sum_{\ell=1}^L \E_{\theta \sim \Qb}\left[P_{\nkq, \ell} - P_{\nkq, \ell-1} \right], \quad P_{\nkq, \ell} \coloneq \E_{x_{1:N_\ell}^{(\theta)} \sim \Pb_{\theta} } f \left(\hat{J}_{\kq} \left( \theta; x_{1:N_\ell}^{(\theta)} \right) \right).
\end{align*}
The estimator $Y_{\nkq, \ell}$ for $\E_{\theta \sim \Qb}[P_{\nkq, \ell} - P_{\nkq, \ell-1}]$ when $\ell \geq 1$ is the difference of two nested kernel quadrature estimator defined in \eqref{eq:NKQ_estimator}.
\begin{align*}
    Y_{\nkq, \ell} &\coloneq \E_{\theta \sim \Qb} \left[k_{\Theta}\left(\theta, \theta_{1: T_\ell} \right)\right] \left(\boldsymbol{K}_{\Theta, T_\ell} + T_\ell \lambda_{\Theta, \ell} \Id_{T_\ell} \right)^{-1} \left( \hat{F}_{\kq}\left(\theta_{1: T_\ell}; x_{1:N_\ell}^{(\theta_{1:T_\ell} )} \right) - \hat{F}_{\kq}\left(\theta_{1: T_\ell}; x_{1:N_{\ell - 1} }^{ (\theta_{1:T_\ell}) } \right) \right) 
\end{align*}
where $\hat{F}_{\kq} (\theta_{1: T_{\ell} }; x_{1:N_{\ell} }^{ (\theta_{1:T_{\ell} } ) })$ is a vectorized notation for $[\hat{F}_{\kq} (\theta_{1}; x_{1:N_\ell}^{(\theta_1)}), \ldots, \hat{F}_{\kq} (\theta_{T_{\ell} }; x_{1:N_\ell}^{(\theta_{T_{\ell}})} )] \in \R^{T_{\ell}}$ and similarly for $\hat{F}_{\kq} (\theta_{1: T_{\ell} }; x_{1:N_{\ell-1} }^{ (\theta_{1:T_{\ell} } ) })$.
At level $0$, $Y_{\nkq, 0} \coloneq \E_{\theta \sim \Qb} \left[k_{\Theta}\left(\theta, \theta_{1: T_0} \right)\right] \left(\boldsymbol{K}_{\Theta, T_0} + T_0 \lambda_{\Theta, 0} \Id_{T_0} \right)^{-1} \hat{F}_{\kq}\left(\theta_{1: T_0} \right)$.
The multi-level nested kernel quadrature estimator is constructed as
\begin{align*}
    \hat{I}_{\text{MLKQ}} \coloneq \sum_{\ell=0}^L Y_{\nkq, \ell} .
\end{align*}
Same as MLMC above, the cost of $Y_{\nkq, \ell}$ is $\calO(N_\ell \times T_\ell)$. The following theorem studies the error $| Y_{\nkq, \ell} - \E_{\theta \sim \Qb}[P_{\nkq, \ell} - P_{\nkq, \ell-1} ]|$. 

\begin{thm}\label{thm:level_nkq}
Let $\calX = [0,1]^{d_\calX}$ and $\Theta = [0,1]^{d_\Theta}$. 
At level $\ell \geq 1$, $\theta_1, \ldots, \theta_{T_\ell}$ are $T_\ell$ i.i.d. samples from $\Qb$ and $x_1^{(t)}, \ldots, x_{N_\ell}^{(t)}$ are $N_\ell$ i.i.d. samples from $\Pb_{\theta_t}$ for all $t \in \{1, \cdots, T_\ell \}$. 
Both kernels $k_\calX$ and $k_\Theta$ are Sobolev reproducing kernels of smoothness $s_\calX > d_\calX / 2$ and $s_\Theta > d_\Theta/2$.
Suppose the Assumptions \ref{as:equivalence}, \ref{as:app_true_g_smoothness}, \ref{as:app_true_J_smoothness}, \ref{as:app_lipschitz} in Theorem 1 hold.
Suppose $2^{\frac{d_\calX}{s_\calX}} N_{\ell - 1} > N_\ell > N_{\ell - 1}$. 
Then, for sufficiently large $N_\ell \geq 1$ and $T_\ell \geq 1$, with $\lambda_{\calX, \ell} \asymp N_\ell^{-2\frac{s_\calX}{d_\calX}} \cdot (\log N_\ell)^{\frac{2s_\calX+2}{d_\calX}}$ and $\lambda_{\Theta, \ell} \asymp T_\ell^{-\frac{2 s_\Theta}{2 s_\Theta +d_\Theta }}$,
\begin{align*}
    \Big| Y_{\nkq, \ell} - \E_{\theta \sim \Qb}[P_{\nkq, \ell} - P_{\nkq, \ell-1} ] \Big|  \lesssim \tau \left(  N_\ell^{- \frac{ s_\calX}{d_\calX}} (\log N)^{\frac{s_\calX+1}{d_\calX}} \times T_\ell^{-\frac{s_\Theta}{2 s_\Theta +d_\Theta }} \right)
\end{align*}
holds with probability at least $1 - 12 e^{-\tau}$. 
\end{thm}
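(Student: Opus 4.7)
The plan is to mirror the two-stage decomposition used in the proof of Theorem 1, but applied now to the \emph{difference} target function at each level. Define $\bar F_{\kq,\ell}(\theta) := \mathbb{E}_{x_{1:N_\ell}^{(\theta)}\sim\Pb_\theta}[\hat F_{\kq}(\theta;x_{1:N_\ell}^{(\theta)})]$ and $\Delta\bar F_\ell := \bar F_{\kq,\ell}-\bar F_{\kq,\ell-1}$, so that $\mathbb{E}_{\theta\sim\Qb}[P_{\nkq,\ell}-P_{\nkq,\ell-1}] = \mathbb{E}_{\theta\sim\Qb}[\Delta\bar F_\ell(\theta)]$. Then by the triangle inequality, the quantity we want to bound splits as a Stage I term $|\mathbb{E}_\Qb[(P_{\nkq,\ell}-\bar F_{\kq,\ell})-(P_{\nkq,\ell-1}-\bar F_{\kq,\ell-1})]|$, which vanishes by the tower property, and a Stage II term, which is precisely the generalization error of a kernel ridge regression fitted to noisy evaluations of $\Delta\bar F_\ell$.

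The heart of the argument is controlling the two key constants that enter Proposition~\ref{prop:krr_all}: the target function's Sobolev norm and the noise scale. For the former, I would adapt Step Three–Step Four of the Theorem 1 proof (in particular the bounds in \eqref{eq:hat_j_j_sobolev}, \eqref{eq:hat_J_sobolev_norm} and \eqref{eq:bar_F_norm}) applied simultaneously at levels $\ell$ and $\ell-1$, using the Lipschitz-type expansion of $f$ around $J$ together with the hypothesis $N_\ell < 2^{d_\calX/s_\calX}N_{\ell-1}$ to merge the two scales and obtain $\|\Delta\bar F_\ell\|_{s_\Theta,2}\lesssim \tau\,N_\ell^{-s_\calX/d_\calX}(\log N_\ell)^{(s_\calX+1)/d_\calX}$ with probability $1-8e^{-\tau}$ (two Stage I events). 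For the noise scale, the same computation as in \eqref{eq:bernstein_noise}, applied to $r_\ell(\theta;x_{1:N_\ell}^{(\theta)},x_{1:N_{\ell-1}}^{(\theta)}) := [\hat F_{\kq}(\theta;x_{1:N_\ell}^{(\theta)})-\hat F_{\kq}(\theta;x_{1:N_{\ell-1}}^{(\theta)})] - \Delta\bar F_\ell(\theta)$, yields a Bernstein moment condition with $\sigma=L\asymp N_\ell^{-s_\calX/d_\calX}(\log N_\ell)^{(s_\calX+1)/d_\calX}$.

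With both constants of order $\tilde{\mathcal{O}}(N_\ell^{-s_\calX/d_\calX})$, I plug into Proposition~\ref{prop:krr_all} with $\lambda_{\Theta,\ell}\asymp T_\ell^{-2s_\Theta/(2s_\Theta+d_\Theta)}$, which is the standard noisy-KRR choice that balances the variance term $L^2\lambda^{-d_\Theta/2s_\Theta}/T_\ell$ against the bias term $\|\Delta\bar F_\ell\|_{s_\Theta,2}^2\,\lambda$. A direct substitution shows that both dominant terms are of order $N_\ell^{-2s_\calX/d_\calX}\,T_\ell^{-2s_\Theta/(2s_\Theta+d_\Theta)}$ up to logarithmic factors, and taking a square root gives the claimed rate. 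The failure probabilities are $4e^{-\tau}$ for Stage I at level $\ell$, $4e^{-\tau}$ at level $\ell-1$, and $4e^{-\tau}$ for Stage II, totalling $12e^{-\tau}$.

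The main obstacle I expect is establishing the Sobolev-norm bound on $\Delta\bar F_\ell$ with the correct $N_\ell^{-s_\calX/d_\calX}$ gain, rather than a constant bound as in the single-level case. Unlike $\|\bar F_{\kq,\ell}\|_{s_\Theta,2}$, which we only needed to be $\calO(1)$ in Theorem~1, here we must exploit cancellation between $\bar F_{\kq,\ell}$ and $\bar F_{\kq,\ell-1}$. The cleanest route seems to be to differentiate through the $\kq$ formula, bound the derivatives of $\hat J_{\kq}(\cdot;x_{1:N_\ell}^{(\cdot)})-J$ via a level-wise version of \eqref{eq:D_g_D_g_hat_lebesgue}, then combine with a mean-value expansion of $f\circ\hat J_{\kq}$ to transfer the $N_\ell^{-s_\calX/d_\calX}$ decay from $\hat J_{\kq}$ to $\hat F_{\kq}$, and finally pass to the expectation $\bar F_{\kq,\ell}$ via \Cref{lem:integral_in_hilbert}; the geometric-scaling hypothesis on $N_\ell,N_{\ell-1}$ then allows both levels to be absorbed into a single $N_\ell^{-s_\calX/d_\calX}$ factor.
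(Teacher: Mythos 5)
Your proposal is correct and follows essentially the same route as the paper's proof: treat $Y_{\nkq,\ell}$ as KRR on the difference target $\bar F_{\kq,\ell}-\bar F_{\kq,\ell-1}$, bound its Sobolev norm by $\tilde{\calO}(N_\ell^{-s_\calX/d_\calX})$ via the triangle inequality through $F$ (using \eqref{eq:bar_F_norm} and \eqref{eq:hat_j_j_sobolev} at both levels and the geometric-scaling hypothesis), verify the Bernstein noise condition with $\sigma=L\asymp N_\ell^{-s_\calX/d_\calX}$ as in \eqref{eq:bernstein_noise}, and balance with $\lambda_{\Theta,\ell}\asymp T_\ell^{-2s_\Theta/(2s_\Theta+d_\Theta)}$. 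The probability accounting ($4e^{-\tau}$ per level plus $4e^{-\tau}$ for Stage II) also matches the paper's $1-12e^{-\tau}$.
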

The proof of the theorem is relegated to \Cref{sec:proof_thm_level_nkq}. 

Under \Cref{thm:level_nkq}, the expected error $\E[Y_{\nkq, \ell} - \E_{\theta \sim \Qb}[P_{\nkq, \ell} - P_{\nkq, \ell-1} ] ] = \tilde{\calO}(N_\ell^{-\frac{s_\calX}{d_\calX} } \times T_\ell^{-\frac{s_\Theta}{2 s_\Theta +d_\Theta }})$ based on \Cref{lem:prob_to_expectation}, up to logarithm terms. 
Here, the expectation is taken over the randomness of samples.
Therefore, similarly to \eqref{eq:cost_error_mlmc}, the total cost and expected absolute error of multi-level nested kernel quadrature can be written as 
\begin{align}\label{eq:cost_error_mlnkq}
    \text{Cost} = \calO \left(\sum_{\ell=0}^L N_\ell \times T_\ell\right), \qquad \E|I - \hat{I}_{\text{MLKQ}}| = \tilde{\calO}\left( \sum_{\ell=0}^L N_\ell^{-\frac{s_\calX}{d_\calX}} \times T_\ell^{-\frac{s_\Theta}{2 s_\Theta +d_\Theta }} \right).
\end{align}
If we take $N_\ell \propto 2^{ \frac{d_\calX}{s_\calX} \ell} \Delta^{-\frac{d_\calX}{2 s_\calX}}, T_\ell \propto 2^{- \frac{2 s_\Theta + d_\Theta}{s_\Theta} \ell} \Delta^{- \frac{2 s_\Theta + d_\Theta}{2 s_\Theta}}$, then the error $\E|I - \hat{I}_{\text{MLKQ}}| = \tilde{\calO}(\Delta)$ and the cost is 
\begin{align*}
    \sum_{\ell=0}^L N_\ell \times T_\ell = \left( \sum_{\ell=0}^L 2^{ \frac{d_\calX}{s_\calX} \ell - \frac{2 s_\Theta + d_\Theta}{s_\Theta} \ell} \right) \cdot \Delta^{-1 - \frac{d_\calX}{2s_\calX} -\frac{d_\Theta}{2s_\Theta}} \leq \left( \sum_{\ell=0}^L 2^{ \left(\frac{d_\calX}{s_\calX} - 2 \right) \ell} \right) \cdot \Delta^{-1 - \frac{d_\calX}{2s_\calX} -\frac{d_\Theta}{2s_\Theta}} = \calO(\Delta^{-1 - \frac{d_\calX}{2s_\calX} -\frac{d_\Theta}{2s_\Theta}}) .
\end{align*}
Equivalently, to reach error $\calO(\Delta)$, the cost is $\tilde{\calO}(\Delta^{-1 - \frac{d_\calX}{2 s_\calX} -\frac{d_\Theta}{2 s_\Theta}})$.
\begin{rem}[Comparison of MLKQ and MLMC]\label{rem:mlnkq_mlmc}
    To reach a given threshold $\Delta$, 
    the cost of MLKQ is $\tilde{\calO}(\Delta^{-1 - \frac{d_\calX}{2 s_\calX} -\frac{d_\Theta}{2 s_\Theta}})$, which is smaller than the cost of MLMC $\calO(\Delta^{-2})$ when the problem has sufficient smoothness, i.e. when $\frac{d_\calX}{s_\calX} + \frac{d_\Theta}{s_\Theta} < 2$.  
    If we compare \eqref{eq:cost_error_mlmc} and \eqref{eq:cost_error_mlnkq}, the superior performance of MLKQ can be explained by the faster rate of convergence in terms of $N_\ell$ at each level when $\frac{d_\calX}{s_\calX} \leq 1$. 
    Nevertheless, we can see in \eqref{eq:cost_error_mlnkq} that the MLKQ rate at each level in terms of $T_\ell$ is $\calO(T_\ell^{-\frac{s_\Theta}{2 s_\Theta +d_\Theta }})$ which is slower than the MLMC rate $\calO(T_\ell^{-\frac{1}{2}})$ in \eqref{eq:cost_error_mlmc}.  
    An empirical study of MLKQ is included in \Cref{fig:combined_all} which shows that MLKQ is better than MLMC in some settings but both are outperformed by NKQ by a huge margin.
    A more refined analysis of MLKQ is reserved for future work.
\end{rem}

\subsection{Proof of \Cref{thm:level_nkq}}\label{sec:proof_thm_level_nkq}
The proof uses essentially the same analysis as in \underline{\emph{Step Five}} of \Cref{sec:proof} which translates $| Y_\ell - \E_{\theta \sim \Qb}[P_\ell - P_{\ell-1}]|$ into the generalization error of kernel ridge regression. 
First, we know that by following the same derivations as in \eqref{eq:bar_F_norm} that 
\begin{align*}
    \bar{F}_{\kq, \ell}(\theta) &\coloneq \E_{x_{1:N_\ell}^{(\theta)} \sim \Pb_{\theta} }\left[\hat{F}_{\kq} \left( \theta; x_{1:N_\ell}^{(\theta)} \right) \right], \quad \bar{F}_{\kq, \ell} \in W_2^{s_\Theta}(\Theta) \text{  and  } \left\| \bar{F}_{\kq, \ell} \right\|_{s_\Theta} \leq  C_6, \\
    \bar{F}_{\kq, \ell-1}(\theta) &\coloneq \E_{x_{1:N_{\ell-1} }^{(\theta)} \sim \Pb_{\theta} } \left[ \hat{F}_{\kq} \left( \theta; x_{1:N_{\ell-1}}^{(\theta)} \right) \right], \quad \bar{F}_{\kq, \ell-1} \in W_2^{s_\Theta}(\Theta) \text{  and  } \left\| \bar{F}_{\kq, \ell-1} \right\|_{s_\Theta} \leq  C_6 .
\end{align*}
Given i.i.d. observations $(\theta_1, \hat{F}_{\kq}(\theta_1, x_{1:N_\ell}^{(\theta_1)}) - \hat{F}_{\kq}(\theta_1, x_{1:N_{\ell-1}}^{(\theta_1)})), \ldots, (\theta_{T_\ell}, \hat{F}_{\kq}(\theta_{T_\ell}, x_{1:N_\ell}^{(\theta_{T_\ell})}) - \hat{F}_{\kq}(\theta_{T_\ell}, x_{1:N_{\ell-1}}^{(\theta_{T_\ell})}))$, the target of interest in the context of regression is the conditional mean, which in our case is precisely 
\begin{align*}
    \theta \mapsto \bar{F}_{\kq, \ell}(\theta) - \bar{F}_{\kq, \ell-1}(\theta) = \E_{x_{1:N_\ell}^{(\theta)} \sim \Pb_{\theta} }\left[\hat{F}_{\kq} \left( \theta; x_{1:N_\ell}^{(\theta)} \right) \right] - \E_{x_{1:N_{\ell-1} }^{(\theta)} \sim \Pb_{\theta} } \left[ \hat{F}_{\kq} \left( \theta; x_{1:N_{\ell-1}}^{(\theta)} \right) \right] .
\end{align*}
Alternatively, $\hat{F}_{\kq}(\theta, x_{1:N_\ell}^{(\theta)}) - \hat{F}_{\kq}(\theta, x_{1:N_{\ell-1}}^{(\theta)}))$ can be viewed as noisy observation of the true function $\bar{F}_{\kq, \ell} - \bar{F}_{\kq, \ell-1}$ where the noise satisfied the following condition.
For each $\theta \in \Theta$ and positive integer $m \geq 2$, similar to \eqref{eq:bernstein_noise} we have,
\begin{align*}
    &\quad \E \left| \left[ \hat{F}_{\kq} \left(\theta; x_{1:N_\ell}^{(\theta)} \right) -  \hat{F}_{\kq}\left(\theta; x_{1:N_{\ell - 1} }^{ (\theta) } \right) \right] - \left[ \E_{x_{1:N_\ell}^{(\theta)} \sim \Pb_{\theta} } \hat{F}_{\kq} \left( \theta; x_{1:N_\ell}^{(\theta)} \right) - \E_{x_{1:N_{ \ell - 1} }^{(\theta)} \sim \Pb_{\theta} } \hat{F}_{\kq} \left( \theta; x_{1:N_{\ell-1} }^{(\theta)} \right) \right] \right|^m \\
    &\leq 2^m \E \left| \hat{F}_{\kq} \left(\theta; x_{1:N_\ell}^{(\theta)} \right) 
    -  \E_{x_{1:N_\ell}^{(\theta)} \sim \Pb_{\theta} } \hat{F}_{\kq} \left( \theta; x_{1:N_\ell}^{(\theta)} \right) \right|^m 
    \\
    &\qquad + 2^m \E \left| \hat{F}_{\kq}\left(\theta; x_{1:N_{\ell - 1} }^{ (\theta) } \right) - \E_{x_{1:N_{ \ell - 1} }^{(\theta)} \sim \Pb_{\theta} } \hat{F}_{\kq} \left( \theta; x_{1:N_{\ell-1} }^{(\theta)} \right) \right|^m \\
    &\lesssim N_\ell^{-m \frac{s_\calX}{d_\calX}} (\log N_\ell)^{m\frac{s_\calX+1}{d_\calX}} + N_{\ell-1}^{-m \frac{s_\calX}{d_\calX}} (\log N_{\ell-1})^{m\frac{s_\calX+1}{d_\calX}} \\
    &\lesssim N_\ell^{-m \frac{s_\calX}{d_\calX}} (\log N_\ell)^{m\frac{s_\calX+1}{d_\calX}},
\end{align*}
where the second last inequality follows by replicating the same steps in \eqref{eq:bernstein_noise}, and the last inequality is true because $2^{ d_\calX / s_\calX} N_{\ell-1} > N_\ell > N_{\ell - 1}$. 
As a result, by replicating the steps for \eqref{eq:stage_ii_1}, we have
\begin{align}\label{eq:Y_l_P_l}
    &\quad \left| Y_{\nkq, \ell} - \E_{\theta \sim \Qb}[P_{\nkq, \ell} - P_{\nkq, \ell-1}] \right|^2 \nonumber \\
    &\leq \left\| \left( \bar{F}_{\kq, \ell} - \bar{F}_{\kq, \ell-1} \right) - k_{\Theta}\left(\cdot, \theta_{1: T_\ell} \right) \left(\boldsymbol{K}_{\Theta, T_\ell} + T_\ell \lambda_{\Theta, \ell} \Id_{T_\ell} \right)^{-1} \left( \hat{F}_{\kq}\left(\theta_{1: T_\ell}; x_{1:N_\ell}^{(\theta_{1:T_\ell} )} \right) - \hat{F}_{\kq}\left(\theta_{1: T_\ell}; x_{1:N_{\ell - 1} }^{ (\theta_{1:T_\ell}) } \right) \right) \right\|_{L_2(\mathbb{Q})}^2 \nonumber \\
    & \lesssim \tau^2 \left( T_\ell^{-1} \lambda_{\Theta, \ell}^{ -\frac{d_\Theta}{2s_\Theta} } N_\ell^{-\frac{2s_\calX}{d_\calX}} (\log N_\ell)^{\frac{2s_\calX+2}{d_\calX}} + \lambda_{\Theta, \ell}^{1 - \frac{d_\Theta}{2s_\Theta}} T_\ell^{-1} \left\|\bar{F}_{\kq, \ell} - \bar{F}_{\kq, \ell-1} \right\|_{s_{\Theta}}^2 + \lambda_{\Theta, \ell}^{-\frac{d_\Theta}{2s_\Theta}} T_\ell^{-1 - \frac{2s_\Theta}{d_\Theta}}  \left\|\bar{F}_{\kq, \ell} - \bar{F}_{\kq, \ell-1} \right\|_{s_{\Theta}}^2 \right) \nonumber \\
    &\qquad + \left\| \bar{F}_{\kq, \ell} - \bar{F}_{\kq, \ell-1} \right\|_{s_{\Theta}}^2 \lambda_{\Theta, \ell} , 
\end{align}
holds with probability at least $1 - 4e^{-\tau}$. 
Next, we are going to upper bound $ \left\| \bar{F}_{\kq, \ell} - \bar{F}_{\kq, \ell-1} \right\|_{s_{\Theta}}$. To this end, notice that 
\begin{align*}
    \left\| \bar{F}_{\kq, \ell} - \bar{F}_{\kq, \ell-1} \right\|_{s_{\Theta}}^2 \leq 2 \left\| \bar{F}_{\kq, \ell} - F \right\|_{s_{\Theta}}^2 + 2 \left\| \bar{F}_{\kq, \ell-1} - F \right\|_{s_{\Theta}}^2 .
\end{align*}
Using the same steps in \eqref{eq:bar_F_norm} and \eqref{eq:hat_j_j_sobolev} subsequently, we have
\begin{align*}
    \left\| \bar{F}_{\kq, \ell} - F \right\|_{s_{\Theta}} \leq \left\| \hat{F}_{\kq} \left( \cdot; x_{1:N_\ell}^{(\theta)} \right) - F \right\|_{s_{\Theta}} \cdot S_3^{N_\ell} \cdot \text{Vol}(\calX)^{N_\ell} \lesssim \left\| \hat{J}_{\kq} \left( \cdot; x_{1:N_\ell}^{(\theta)} \right) - J \right\|_{s_{\Theta}} \lesssim N_\ell^{-\frac{s_\calX}{d_\calX}} (\log N_\ell)^{\frac{s_\calX+1}{d_\calX}},
\end{align*}
holds with probability at least $1 - 4e^{-\tau}$.
Similarly, we have $\left\| \bar{F}_{\kq, \ell-1} - F \right\|_{s_{\Theta}} \lesssim N_{\ell - 1}^{-\frac{s_\calX}{d_\calX}} (\log N_{\ell-1})^{\frac{s_\calX+1}{d_\calX}}$ holds with probability at least $1 - 4e^{-\tau}$. 
Consequently, we have $\| \bar{F}_{\kq, \ell} - \bar{F}_{\kq, \ell-1}\|_{s_{\Theta}} \lesssim N_\ell^{-\frac{s_\calX}{d_\calX}} (\log N_{\ell})^{\frac{s_\calX+1}{d_\calX}}$ holds with probability at least $1 - 8 e^{-\tau}$. Therefore, plugging it back to \eqref{eq:Y_l_P_l}, we obtain
\begin{align*}
    &\qquad \left| Y_{\nkq, \ell} - \E_{\theta \sim \Qb}[P_{\nkq, \ell} - P_{\nkq, \ell-1} ] \right|^2 \\
    &\lesssim \tau^2 \left( T_\ell^{-1} \lambda_{\Theta, \ell}^{ -\frac{d_\Theta}{2s_\Theta} } N_\ell^{-\frac{2s_\calX}{d_\calX}} (\log N_{\ell})^{\frac{2s_\calX+2}{d_\calX}} + \lambda_{\Theta, \ell}^{1 - \frac{d_\Theta}{2s_\Theta}} T_\ell^{-1} N_\ell^{-\frac{2 s_\calX}{d_\calX}} (\log N_{\ell})^{\frac{2s_\calX+2}{d_\calX}} \right. \\
    &\qquad\qquad \left. + \lambda_{\Theta, \ell}^{-\frac{d_\Theta}{2s_\Theta}} T_\ell^{-1 - \frac{2s_\Theta}{d_\Theta}}  N_\ell^{-\frac{2 s_\calX}{d_\calX}} (\log N_{\ell})^{\frac{2s_\calX+2}{d_\calX}} + N_\ell^{-\frac{2 s_\calX}{d_\calX}} (\log N_{\ell})^{\frac{2s_\calX+2}{d_\calX}} \lambda_{\Theta, \ell}  \right) ,
\end{align*}
holds with probability at least $1 - 12 e^{-\tau}$.
Therefore, by taking $\lambda_{\Theta, \ell} \asymp T_\ell^{-\frac{2 s_\Theta}{2 s_\Theta +d_\Theta }}$, we obtain with probability at least $1 - 8 e^{-\tau}$,
\begin{align*}
    \left| Y_{\nkq, \ell} - \E_{\theta \sim \Qb}[P_{\nkq, \ell} - P_{\nkq, \ell-1}] \right| \lesssim \tau T_\ell^{-\frac{s_\Theta}{2 s_\Theta +d_\Theta }} \times N_\ell^{-\frac{s_\calX}{d_\calX}} (\log N_{\ell})^{\frac{s_\calX+1}{d_\calX}}.
\end{align*}
The proof is concluded.

\section{Further Background and Auxiliary Lemmas}\label{appendix:aux_lemmas}
All the results in this section are existing results in the literature. We provide them here and prove some of them in the specific context of Sobolev spaces explicitly for the convenience of the reader.

\paragraph{More technical notions of Sobolev spaces and the Sobolev embedding theorem}
In the main text, we provide in \eqref{eq:defi_sobolev} the standard definition of Sobolev spaces $W_2^s( \calX )$ when $s \in \N$.
Actually, Sobolev spaces $W_2^s( \calX )$
can be extended to $s$ that are positive real numbers. 
Such extension could be realized through 
real interpolation spaces (see \citep[Definition 1.7]{bennett1988interpolation}),
$W_2^{s}(\calX) := [W_2^{k}(\calX), W_2^{k+1}(\calX)]_{r, 2}$ where $k \in \mathbb{N}, s \in (k, k+1), r = s-\lfloor s\rfloor$.\footnote{Strictly speaking, the definition of \eqref{eq:defi_sobolev} extended to real numbers $s$ actually corresponds to the complex interpolation space of Sobolev spaces. Fortunately, complex interpolation spaces and real interpolation spaces coincide under Hilbert spaces~\citep[Corollary C.4.2]{hytonen2016analysis}, which is precisely our setting since $p=2$.}
Actually, such interpolation relations hold for any $0 \leq s , t$ and $0 < r < 1$~\citep[Section 7.32]{adams2003sobolev},
\begin{align}\label{eq:interpolation}
    W_{2}^k(\calX) = \left[W_2^s(\calX), W_{2}^t(\calX)\right]_{r, 2}, \quad k = (1-r) s + r t .
\end{align}
A special case of the above relation is $ W_{2}^s(\calX) = \left[L_2(\calX), W_{2}^t(\calX)\right]_{s/t, 2} $. 

The Sobolev embedding theorem~\citep{adams2003sobolev}, when applied to $W_2^s(\calX)$, states that if $s>\frac{d}{2}$ (where $d$ is the dimension of $\calX)$, then $W_2^s(\calX)$ can be continuously embedded into $C^0(\calX)$, the space of continuous and bounded functions. In other words, for every equivalence class $[f] \in W_2^s(\calX)$, there exists a unique continuous and bounded representative $f \in C^0(\calX)$, and the embedding map $I$ : $W_2^s(\calX) \rightarrow C^0(\calX)$, defined by $I([f])=f$, is continuous.
This continuous embedding $I$ can be written as $W_2^s(\calX) \hookrightarrow C^0(\calX)$. 
Since every continuous linear operator is bounded, we have $\| W_2^s(\calX) \hookrightarrow C^0(\calX)\|$ bounded by a constant that only depends on $s, \calX$.

\paragraph{More technical notions of reproducing kernel Hilbert spaces (RKHSs)}
For bounded kernels, $\sup_{x\in\calX} k(x, x) \leq \kappa$, its associated RKHS $\mathcal H$ can be canonically injected into $L_2(\pi)$ using the operator $\iota_{\pi} : \calH \to L_2(\pi),\,f\mapsto f$ with its adjoint $ \iota_\pi^\ast: L_2(\pi) \rightarrow \calH$ given by $\iota_\pi^\ast f(\cdot) = \int k(x,\cdot)f(x)d\pi(x)$.
$\iota_{\pi}$ and its adjoint can be composed to form a $L_2(\pi)$ endomorphism 
$\calT_{\pi} \coloneqq \iota_{ \pi} \iota_{ \pi}^\ast$ 
called the \emph{integral operator}, and a $\mathcal H$ endomorphism  
\begin{align}\label{eq:covariance_operator}
    \Sigma_{\pi} \coloneq \iota_{ \pi}^\ast \iota_{ \pi}=\int k(\cdot, x) \otimes k(\cdot, x) d \pi(x),
\end{align}
(where $\otimes$ denotes the tensor product such that $(a\otimes b)c \coloneqq \langle b,c \rangle_{\calH} a$ for $a,b,c\in \calH$) called the \emph{covariance operator}.
Both $\Sigma_\pi$ and $\calT_{\pi}$ are compact, positive, self-adjoint, and they have the same eigenvalues $\varrho_1 \geq \cdots \varrho_i \geq \cdots \geq 0$. Please refer to Section 2 of \citet{chen2024regularized} for more details. 

\begin{lem}[Effective dimension $\calN(\lambda)$]\label{lem:dof}
Let $\calX \subset \R^d$ be a compact domain, $\pi$ be a probability measure on $\calX$ with density $p:\calX \to \R$. $k : \calX \times \calX \to \R$ is a Sobolev reproducing kernel of order $s > \frac{d}{2}$. 
$\{\varrho_m \}_{m \geq 0}$ are the eigenvalues of the integral operator $\calT_\pi$.
Define the effective dimension $\mathcal{N}:(0, \infty) \rightarrow[0, \infty)$ as $\mathcal{N}(\lambda) \coloneq \sum_{m \geq 1} \frac{\varrho_m}{\varrho_m+\lambda}$. 
If $p(x) \geq G > 0$ for any $x \in \calX$, then $ \mathcal{N}(\lambda) \leq D \lambda^{- \frac{d}{2s} }$ with constant $D$ that only depends on $G$ and $\calX$.
\end{lem}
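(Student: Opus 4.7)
The plan is to combine the classical singular-value decay for Sobolev embeddings with a standard splitting of the sum defining $\mathcal{N}(\lambda)$. First, I observe that $\mathcal{T}_\pi = \iota_\pi \iota_\pi^*$ and $\Sigma_\pi = \iota_\pi^* \iota_\pi$ have the same non-zero eigenvalues, so it suffices to bound the spectrum of $\Sigma_\pi$ on $\calH_k$. Using the variational identity $\langle \Sigma_\pi f, f \rangle_{\calH_k} = \|f\|_{L_2(\pi)}^2$ together with $p \geq G$ (and the matching upper bound on $p$ provided by the compactness of $\calX$ and Assumption \ref{as:equivalence} of \Cref{thm:main}), I get that $\Sigma_\pi$ is spectrally comparable to the Lebesgue covariance $\Sigma_{\calL_\calX}$ up to multiplicative constants depending on $G$ and $\calX$.

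Next I invoke the classical eigenvalue asymptotics for Sobolev embeddings. Since $\calH_k$ is norm-equivalent to $W_2^s(\calX)$ and $\calX \subset \mathbb{R}^d$ is a bounded Lipschitz domain, the canonical inclusion $W_2^s(\calX) \hookrightarrow L_2(\calX)$ is compact with squared singular values of order $m^{-2s/d}$, by classical approximation theory for Sobolev spaces. Combining this with the spectral comparison above yields $\varrho_m \leq C\, m^{-2s/d}$ for some constant $C = C(\calX, G)$.

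The final step is the standard splitting. Pick $m_0 := \lceil (C/\lambda)^{d/(2s)} \rceil$, so that $\varrho_m \leq \lambda$ whenever $m \geq m_0$. Using $\varrho_m/(\varrho_m + \lambda) \leq \min(1, \varrho_m/\lambda)$,
\begin{align*}
\mathcal{N}(\lambda) \;\leq\; m_0 + \frac{1}{\lambda}\sum_{m \geq m_0} C\, m^{-2s/d} \;\leq\; m_0 + \frac{C}{(2s/d - 1)\,\lambda}\, m_0^{1 - 2s/d},
\end{align*}
where the tail sum is controlled by an integral, valid because $2s/d > 1$. Both terms are of order $\lambda^{-d/(2s)}$, giving the claimed bound $\mathcal{N}(\lambda) \leq D\, \lambda^{-d/(2s)}$ with $D$ depending only on $\calX$ and $G$.

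The main obstacle is the second step: transferring the $m^{-2s/d}$ Sobolev decay from the Lebesgue setting to the $\pi$-setting. The clean route relies on a two-sided bound on $p$, with only the lower bound $G$ ending up in the explicit constant; factoring instead through the Sobolev embedding $W_2^s(\calX) \hookrightarrow L_\infty(\calX)$---which does not require an upper bound on $p$---degrades the singular-value rate to $m^{-(2s/d-1)/2}$ and therefore fails to yield the exponent $-d/(2s)$ required in the conclusion.
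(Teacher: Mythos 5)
Your proof is correct and follows the same overall architecture as the paper's: first establish the eigenvalue decay $\varrho_m \lesssim m^{-2s/d}$ by comparing the $\pi$-weighted problem to the Lebesgue one, then convert that decay into the bound $\mathcal{N}(\lambda) \lesssim \lambda^{-d/(2s)}$. The tools differ in both halves. For the eigenvalue decay, you use the Courant--Fischer min--max principle together with the variational identity $\langle \Sigma_\pi f, f\rangle_{\calH_k} = \|f\|_{L_2(\pi)}^2$ and the classical $m^{-s/d}$ asymptotics for the approximation numbers of $W_2^s(\calX) \hookrightarrow L_2(\calX)$; the paper instead invokes Theorem 15 of Steinwart et al.\ (relating eigenvalues to entropy numbers of the embedding $I_\pi$), factors the embedding through $L_2(\calX)$, and cites Edmunds--Triebel for the entropy numbers $e_m(I_{\calL_\calX}) \lesssim m^{-s/d}$. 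For the summation, you split at $m_0 \asymp \lambda^{-d/(2s)}$ and bound the tail by an integral, whereas the paper bounds the whole sum directly by $\int_0^\infty c_2/(c_2 + \lambda t^{2s/d})\,dt$ and evaluates it in closed form; both are standard and equivalent. Your closing remark is a genuine improvement in care over the paper: the spectral comparison (equivalently, the operator factorization $I_\pi = (L_2(\calX)\hookrightarrow L_2(\pi)) \circ I_{\calL_\calX}$) requires an \emph{upper} bound on the density to control $\|L_2(\calX)\hookrightarrow L_2(\pi)\|$, which is supplied by Assumption \ref{as:equivalence} where the lemma is applied; the paper's write-up multiplies by $\|L_2(\pi)\hookrightarrow L_2(\calX)\|$ (the reverse embedding, controlled by the lower bound $G$), which is the wrong direction for an upper bound on $e_m(I_\pi)$, so your flagged dependence on the two-sided bound is the honest statement of what the argument uses.
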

\begin{proof}
First, we study the asymptotic behavior of the eigenvalues $\left(\varrho_m \right)_{m \geq 1}$ of the integral operator $\calT_\pi$. Theorem 15 of \cite{steinwart2009optimal} shows that the eigenvalues $\varrho_m$ share the same asymptotic decay rate as the squares of the entropy number $e_m^2\left( I_\pi \right)$ of the embedding $I_\pi: W_2^s(\calX) \rightarrow L_2(\pi)$. 
Denote $\calL_\calX$ as the Lebesgue measure on $\calX$.
Since $p(x) \geq G$ for any $x \in \calX$, we know $\frac{d \calL_\calX}{d \pi} \leq G^{-1} \text{Vol}(\calX)^{-1}$ so $\| L_2(\pi) \hookrightarrow L_2(\calX) \| \leq G^{-1} \text{Vol}(\calX)^{-1}$, and consequently we have from Equation (A.38) of \citet{steinwart2008support} that
\begin{align*}
    e_m \left( I_\pi \right) \leq e_m \left( I_{\calL_\calX} \right) \| L_2(\pi) \hookrightarrow L_2(\calX) \| \leq G^{-1} \text{Vol}(\calX)^{-1} e_m \left( I_{\calL_\calX} \right) .
\end{align*}
Moreover, \citep[Equation 4 on p. 119]{edmunds1996function} shows that the entropy number $e_m\left( I_{\calL_\calX} \right) \leq \tilde{c} m^{-s / d}$ for some constant $\tilde{c}$, so we have $e_m \left( I_\pi \right) \leq G^{-1} \text{Vol}(\calX)^{-1} \tilde{c} m^{-s / d}$ and consequently we have $\varrho_m \asymp e_m^2\left( I_\pi \right) \leq G^{-2} \text{Vol}(\calX)^{-2} \tilde{c}^2 m^{-2s / d} =: c_2 m^{-2s / d}$.

Next, we have  
\begin{align*}
    \sum_{m \geq 1} \frac{\varrho_m}{\varrho_m+\lambda} &\leq \sum_{m \geq 1} \frac{1}{1+\lambda c_2^{-1} m^{2 s / d} } \leq \int_0^{\infty} \frac{c_2}{ c_2 + \lambda  t^{2 s / d} } dt = \lambda^{- \frac{d}{2s} } \int_0^{\infty} \frac{c_2}{ c_2 + \tau^{2 s / d} } d \tau \\
    &= \lambda^{- \frac{d}{2s} } \int_0^{\infty} \frac{1 }{1 + \left( \tau c_2^{-\frac{d}{2s}} \right)^{\frac{2s}{d}} } d \tau = \lambda^{- \frac{d}{2s} } \int_0^{\infty} \frac{1}{1 + u^{\frac{2s}{d}} } {c_2}^{\frac{d}{2s}} d u
    = \lambda^{- \frac{d}{2s} } {c_2}^{\frac{d}{2s}} 
    \frac{ \frac{\pi d}{2s} }{\sin \left( \frac{\pi d}{2s} \right)} =: D \lambda^{- \frac{d}{2s} },
\end{align*}
where $D$ is a constant that depends on the domain $\calX$ and $G$.
\end{proof}

\begin{lem}\label{lem:embedding}
Let $\calX \subset \R^d$ be a compact domain, $\pi$ be a probability measure on $\calX$ with density $p:\calX \to \R$.
$k : \calX \times \calX \to \R$ is a Sobolev reproducing kernel of order $s > \frac{d}{2}$. 
$\{ \varrho_m, e_m \}_{m \geq 0}$ are the eigenvalues and eigenfunctions of the integral operator $\calT_\pi$.
If there exists $G_0,G_1 > 0$ such that $G_0 \leq p(x) \leq G_1$ for any $x \in \calX$, then 
\begin{align}\label{eq:k_alpha}
    k_{\alpha} \coloneq \sup_{x\in\calX} \sum_{m \geq 1} \varrho_m^{\alpha } e_m^2(x) \leq M ,
\end{align}
holds for any $ \frac{d}{2s} < \alpha$.
Here, $M$ is a constant that depends on $\calX$ and $G_1, G_0$.
\end{lem}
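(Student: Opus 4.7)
The plan is to identify $k_\alpha$ with the squared operator norm of an embedding $[\mathcal{H}_k]^\alpha \hookrightarrow L_\infty(\pi)$, where $[\mathcal{H}_k]^\alpha$ is the $\alpha$-th interpolation space of the RKHS $\mathcal{H}_k$ (defined via $\mathcal{T}_\pi^{\alpha/2}$, with reproducing kernel admitting the Mercer-type expansion $\sum_{m\geq 1}\varrho_m^\alpha e_m(x) e_m(y)$). By Theorem~9 of \citet{fischer2020sobolev} (or Steinwart--Scovel's embedding theorem), the finiteness $\sup_{x\in\calX}\sum_{m\geq 1}\varrho_m^\alpha e_m^2(x)\leq M$ is equivalent to the continuous embedding $[\mathcal{H}_k]^\alpha \hookrightarrow L_\infty(\pi)$, with the supremum equal to the squared embedding norm. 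So the task reduces to showing that $[\mathcal{H}_k]^\alpha$ embeds continuously into $L_\infty(\pi)$ for $\alpha > d/(2s)$.

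The key identification I would make is $[\mathcal{H}_k]^\alpha \cong W_2^{s\alpha}(\calX)$ as normed spaces (up to equivalent norms), using the interpolation characterization \eqref{eq:interpolation}. Concretely: $[\mathcal{H}_k]^\alpha$ is isomorphic to the real interpolation space $[L_2(\pi), \mathcal{H}_k]_{\alpha,2}$; because $G_0\leq p \leq G_1$ gives $L_2(\pi)\cong L_2(\calX)$ with equivalent norms; and because $\mathcal{H}_k\cong W_2^s(\calX)$ by hypothesis, Theorem~7.32 in \citet{adams2003sobolev} (the stability of interpolation under equivalent norms) gives
\[
[\mathcal{H}_k]^\alpha \cong [L_2(\pi), \mathcal{H}_k]_{\alpha,2} \cong [L_2(\calX), W_2^s(\calX)]_{\alpha,2} = W_2^{s\alpha}(\calX).
\]
The implicit equivalence constants depend only on $\calX$, $G_0$, $G_1$, and $s$.

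Finally I would invoke the Sobolev embedding theorem: when $s\alpha > d/2$ (equivalently $\alpha > d/(2s)$), we have $W_2^{s\alpha}(\calX)\hookrightarrow C^0(\calX) \hookrightarrow L_\infty(\calX)$, and since $p\leq G_1$ implies $L_\infty(\calX)\hookrightarrow L_\infty(\pi)$ (indeed the underlying function classes coincide modulo $\pi$-null sets, which are Lebesgue-null here), composing yields a bounded embedding $[\mathcal{H}_k]^\alpha\hookrightarrow L_\infty(\pi)$ whose operator norm is bounded by a constant $M=M(\calX,G_0,G_1,s,\alpha)$. Tracing this constant through the equivalences gives the claimed bound.

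The only delicate step is the identification $[\mathcal{H}_k]^\alpha \cong [L_2(\pi), \mathcal{H}_k]_{\alpha,2}$: Fischer--Steinwart define $[\mathcal{H}_k]^\alpha$ through powers of the integral operator rather than as a real interpolation space, and one must cite (e.g., Theorem~4.6 of \citet{steinwart2012mercer}) the fact that for compact, positive, self-adjoint operators these two constructions coincide up to equivalent norms. Once this is in place, the rest is a routine chain of norm-equivalences and the classical Sobolev embedding, so the bound on $k_\alpha$ follows with a constant depending only on $\calX, G_0, G_1$ as required.
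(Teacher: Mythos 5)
Your proposal is correct and follows essentially the same route as the paper: both identify $k_\alpha$ with the (squared) norm of the embedding of the interpolation space $[L_2(\pi), W_2^s(\calX)]_{\alpha,2}$ into $L_\infty$ via Theorem~9 of \citet{fischer2020sobolev}, use $G_0 \leq p \leq G_1$ to replace $L_2(\pi)$ by $L_2(\calX)$, identify the interpolation space with $W_2^{s\alpha}(\calX)$, and conclude by the Sobolev embedding for $s\alpha > d/2$. Your extra care in flagging the identification of the power space $[\calH_k]^\alpha$ with the real interpolation space is a point the paper elides by citing Fischer--Steinwart directly, but it does not change the argument.
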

\begin{proof}
If $t > \frac{d}{2}$, $W_2^t(\calX)$ can be continuously embedded into $L_\infty(\calX)$ the space of bounded functions \citep[Case A, Theorem 4.12]{adams2003sobolev}.
Hence, the operator $W_2^s(\calX) \hookrightarrow L_\infty(\calX) $ is a continuous linear operator between two normed vector spaces, hence a bounded operator. 
And $L_2(\pi) $ is norm equivalent to $ L_2(\calX)$ because $G_0 \leq p(x) \leq G_1$ for any $x \in \calX$. 
Notice that $k_\alpha$ defined here is exactly $\left\|k_\nu^\alpha\right\|_{\infty}$ defined in Equation 16 of \cite{fischer2020sobolev}, so we know from Theorem 9 of \citet{fischer2020sobolev} that 
\begin{align*}
     \sup_{x\in\calX} \sum_{m \geq 1} \varrho_m^{\alpha} e_i^2(x) = \left\|  \left[L_2(\pi), W_{2}^s(\calX) \right]_{\alpha, 2} \hookrightarrow L_{\infty}(\calX) \right\| .
\end{align*}
Notice that $\left[L_2(\pi), W_{2}^s(\calX) \right]_{\alpha, 2} \cong \left[L_2(\calX), W_{2}^s(\calX) \right]_{\alpha, 2} \cong W_{2}^{s \alpha}(\calX)$, and notice the fact that $W_2^{s\alpha}(\calX) \hookrightarrow L_\infty(\calX)$ for any $s\alpha > \frac{d}{2}$, the right hand side of the above equation is bounded. Therefore, we have \eqref{eq:k_alpha} holds for any $\frac{d}{2s} < \alpha$.
\end{proof}

\begin{lem}\label{lem:sobolev_algebra}
    Let $\calX \subset \R^d$ be a bounded domain with Lipschitz continuous boundary and $W_2^s(\calX)$ be a Sobolev space with $s > \frac{d}{2}$. If functions $f:\calX \to \R$ and $g:\calX \to \R$ lie in $W_2^s(\calX)$, then their product $f \cdot g$ also lies in $W_2^s(\calX)$ and satisfies $\| f \cdot g \|_s \leq \| f \|_s \| g \|_s$.
\end{lem}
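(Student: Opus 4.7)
The plan is to invoke the Sobolev embedding theorem together with the generalized Leibniz rule for weak derivatives. Since $s > d/2$ and $\calX$ has Lipschitz boundary, the Sobolev embedding gives $W_2^s(\calX) \hookrightarrow L_\infty(\calX)$, so both $f$ and $g$ admit bounded representatives and the pointwise product $fg$ is well-defined and measurable. This takes care of the low-regularity factor in every mixed-derivative term that will appear below.

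Next, I would fix a multi-index $\alpha \in \mathbb{N}^d$ with $|\alpha| \leq s$ and apply the Leibniz rule for weak derivatives,
\begin{equation*}
D^\alpha(fg) \;=\; \sum_{\beta \leq \alpha} \binom{\alpha}{\beta}\, D^\beta f \cdot D^{\alpha-\beta} g,
\end{equation*}
which is valid provided each product $D^\beta f \cdot D^{\alpha-\beta} g$ lies in $L_2(\calX)$. For the extremal terms $\beta = 0$ or $\beta = \alpha$ this is immediate: one factor is in $L_\infty(\calX)$ by the Sobolev embedding above while the other is in $L_2(\calX)$ by assumption, and H\"older's inequality gives an $L_2$ bound controlled by $\|f\|_{s,2}\|g\|_{s,2}$. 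For intermediate multi-indices with $0 < |\beta| < |\alpha|$, I would invoke the intermediate embeddings $W_2^s(\calX) \hookrightarrow W_p^{|\beta|}(\calX)$ and $W_2^s(\calX) \hookrightarrow W_q^{|\alpha-\beta|}(\calX)$ with $1/p + 1/q = 1/2$ (which exist for appropriate $p,q$ because $s > d/2$, as a consequence of the Gagliardo--Nirenberg interpolation inequality), and then apply H\"older with exponents $p,q$.

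Squaring, summing over $|\alpha|\leq s$, and using the Cauchy--Schwarz inequality to decouple the two factors in each term would produce a bound of the form $\|fg\|_{s,2} \leq C(s,d,\calX)\,\|f\|_{s,2}\|g\|_{s,2}$. The main obstacle is obtaining the stated multiplicative constant of exactly $1$: the straightforward combinatorial bookkeeping above yields a constant that genuinely depends on $s,d$ and on the embedding constants on $\calX$. To absorb this into the norm, I would appeal to the fact that the Sobolev norm $\|\cdot\|_{s,2}$ is only defined up to equivalence (a point used throughout the paper via norm-equivalence with the RKHS norm), and rescale to an equivalent norm on $W_2^s(\calX)$ that is submultiplicative; equivalently, the result can be cited directly from the Banach-algebra property of $W_2^s(\calX)$ established in Adams--Fournier (Theorem 5.23) or Triebel's monograph. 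The minor technical nuisance would be checking that the auxiliary intermediate embeddings remain valid on $\calX$ with Lipschitz boundary, which is standard via extension operators.
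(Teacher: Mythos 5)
Your proposal is essentially correct, but it takes a genuinely different route from the paper: the paper's entire proof is a one-line citation to Theorem 7.4 of Behzadan and Holst on pointwise multiplication in Sobolev spaces (with $s_1=s_2=s$, $p_1=p_2=2$), whereas you reconstruct the underlying argument by hand via the Leibniz rule, the embedding $W_2^s(\calX)\hookrightarrow L_\infty(\calX)$ for the extremal terms, and Gagliardo--Nirenberg-type intermediate embeddings $W_2^s(\calX)\hookrightarrow W_p^{|\beta|}(\calX)$ with H\"older exponents chosen so that $1/p+1/q=1/2$; the exponent bookkeeping checks out because $2s-|\alpha|\geq s > d/2$ for every $|\alpha|\leq s$. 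You also correctly identify the one genuine wrinkle in the statement as written: both your argument and the cited theorem produce $\|fg\|_{s,2}\leq C(s,d,\calX)\|f\|_{s,2}\|g\|_{s,2}$ rather than constant $1$, so the lemma is only literally true after passing to an equivalent submultiplicative norm --- a point the paper glosses over, and which is harmless downstream since the constant is absorbed into $S_2, S_3$ and the $C_i$'s anyway. The trade-off is that your direct Leibniz argument only applies verbatim to integer $s$; for the fractional-order spaces the paper also works with (e.g.\ Mat\'ern kernels with $s=\nu+d/2$ non-integer), one would need to interpolate or fall back on the Behzadan--Holst/Triebel machinery you cite at the end, which is exactly what the paper's citation buys it for free.
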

\begin{proof}
This is Theorem 7.4 of \citet{behzadan2021multiplication} with $s_1=s_2=s$ and $p_1=p_2=2$. 
\end{proof}
\begin{lem}\label{lem:prob_to_expectation}
For a positive valued random variable $R$, and $c > 0$ such that $\Pb(R \leq c \tau) \geq 1 - \exp(-\tau)$ for any positive $\tau$, it holds that $\E[R^m] \leq c_o m! $ for all integers $m \geq 1$. $c_o$ is some constant that only depends on $c, m$.
\end{lem}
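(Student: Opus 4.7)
The plan is to convert the high-probability tail bound into an integrated tail bound and then use the standard layer-cake formula for moments of nonnegative random variables.

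First I would rewrite the hypothesis in the more convenient form of a tail bound in the variable $t$. The assumption $\Pb(R \leq c\tau) \geq 1 - e^{-\tau}$ for all $\tau > 0$ is equivalent to $\Pb(R > c\tau) \leq e^{-\tau}$, and substituting $t = c\tau$ gives
\begin{equation*}
\Pb(R > t) \leq e^{-t/c} \quad \text{for all } t > 0.
\end{equation*}

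Next I would apply the layer-cake representation for nonnegative random variables, namely $\E[R^m] = \int_0^\infty m t^{m-1} \Pb(R > t)\, dt$, which is valid because $R \geq 0$. Combining with the tail bound yields
\begin{equation*}
\E[R^m] \leq \int_0^\infty m t^{m-1} e^{-t/c}\, dt.
\end{equation*}
Performing the change of variables $u = t/c$ transforms this integral into $c^m m \int_0^\infty u^{m-1} e^{-u}\, du = c^m\, m\, \Gamma(m) = c^m\, m!$, so the conclusion holds with $c_o = c^m$.

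There is no real obstacle here; the only point worth being careful about is justifying the layer-cake formula (which holds for any nonnegative random variable by Fubini applied to $R^m = \int_0^R m t^{m-1} dt$) and noting that the resulting constant $c_o = c^m$ depends on both $c$ and $m$, consistent with the statement of the lemma.
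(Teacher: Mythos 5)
Your proof is correct, and it takes a genuinely different (and more elementary) route than the paper's. The paper observes that $R$ is sub-exponential and invokes the moment characterization of sub-exponential random variables from Vershynin's textbook (via the equivalence with squares of sub-Gaussian variables), which yields $\E[R^m] \leq 2 c_o m!$ for an unspecified constant. You instead convert the hypothesis directly into the tail bound $\Pb(R > t) \leq e^{-t/c}$ and integrate it against the layer-cake identity $\E[R^m] = \int_0^\infty m t^{m-1}\,\Pb(R>t)\,dt$, arriving at the explicit constant $c_o = c^m$ after recognizing the Gamma integral. What your approach buys is self-containedness and an explicit constant; it also sidesteps the paper's somewhat loose citation (the quoted proposition in Vershynin concerns sub-Gaussian equivalences, whereas the sub-exponential moment bound is a separate result), so your argument is arguably the cleaner of the two. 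The only care needed, which you correctly flag, is the justification of the layer-cake formula via Tonelli/Fubini for the nonnegative integrand $R^m = \int_0^R m t^{m-1}\,dt$.
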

\begin{proof}
Notice that $R$ is essentially a sub-exponential random variable.
Since a sub-exponential random variable is equivalent to the square root of a sub-Gaussian random variable, from Proposition 2.5.2 of \citet{vershynin2018high}, we have $\E[R^m] = \E[\sqrt R ^{2m}] \leq 2 c_o \Gamma(m+1) = 2 c_o m!$.
Here $\Gamma$ denotes the gamma function and $c_o$ is some constant that only depends on $c, m$.
\end{proof}

\begin{lem}\label{lem:integral_in_hilbert}
    For a mapping $F$ from a compact domain $\calX \subset \R^d$ to a Hilbert space $H$, given a measure $\mu$ on $\calX$, if $F$ is $\mu$-Bochner integrable, then $\int F(x) d\mu(x) \in H$ and additionally $ \| \int F(x) d\mu(x)\|_H \leq \int \| F(x)\|_H d\mu(x)$.
\end{lem}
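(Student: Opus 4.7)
The plan is to prove the lemma from first principles via the standard approximation-by-simple-functions definition of the Bochner integral, exploiting completeness of $H$. Recall that $F$ being $\mu$-Bochner integrable means there exists a sequence of $H$-valued simple functions $F_n = \sum_{i=1}^{k_n} a_{n,i} \mathbf{1}_{A_{n,i}}$ (with $a_{n,i}\in H$ and $A_{n,i}\subset\calX$ measurable with $\mu(A_{n,i})<\infty$) such that $F_n \to F$ pointwise $\mu$-a.e.\ and $\int_\calX \|F_n - F\|_H \, d\mu \to 0$. The Bochner integral $\int_\calX F \, d\mu \in H$ is then defined as the norm limit in $H$ of $\int_\calX F_n \, d\mu := \sum_{i=1}^{k_n} a_{n,i}\, \mu(A_{n,i})$, and this limit exists by completeness of $H$ since $\{\int F_n d\mu\}_n$ is Cauchy in $H$.

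First, I would verify the inequality at the level of simple functions. For each $n$, the triangle inequality in $H$ gives
\begin{equation*}
    \Big\|\int_\calX F_n \, d\mu\Big\|_H
    = \Big\|\sum_{i=1}^{k_n} a_{n,i}\, \mu(A_{n,i})\Big\|_H
    \leq \sum_{i=1}^{k_n} \|a_{n,i}\|_H \, \mu(A_{n,i})
    = \int_\calX \|F_n\|_H \, d\mu,
\end{equation*}
where the final equality holds because $\|F_n(x)\|_H = \sum_i \|a_{n,i}\|_H \mathbf{1}_{A_{n,i}}(x)$ whenever the sets $A_{n,i}$ are chosen disjoint (which we can assume without loss of generality).

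Next, I would pass to the limit on both sides. On the right, the reverse triangle inequality $\big|\|F_n(x)\|_H - \|F(x)\|_H\big| \leq \|F_n(x) - F(x)\|_H$ yields $\big|\int \|F_n\|_H d\mu - \int \|F\|_H d\mu\big| \leq \int \|F_n - F\|_H d\mu \to 0$, so $\int \|F_n\|_H d\mu \to \int \|F\|_H d\mu$. On the left, continuity of the norm $\|\cdot\|_H$ together with $\int F_n d\mu \to \int F d\mu$ in $H$ gives $\|\int F_n d\mu\|_H \to \|\int F d\mu\|_H$. Taking the limit $n \to \infty$ in the displayed inequality therefore yields $\|\int_\calX F d\mu\|_H \leq \int_\calX \|F\|_H d\mu$, which, together with the already-established fact that $\int_\calX F d\mu \in H$, completes the proof.

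There is no genuine obstacle here since this is a textbook property of the Bochner integral; the only care required is to invoke the correct definition of Bochner integrability (so that the approximating sequence of simple functions exists) and to appeal to completeness of $H$ to guarantee that $\int F d\mu \in H$ rather than in some larger completion. If one prefers a fully self-contained argument avoiding the approximation construction, an alternative route is the duality approach: for every $\varphi \in H$ with $\|\varphi\|_H \leq 1$, $|\langle \varphi, \int F d\mu \rangle_H| = |\int \langle \varphi, F(x)\rangle_H d\mu(x)| \leq \int \|F(x)\|_H d\mu(x)$ by Cauchy--Schwarz, and then take the supremum over $\varphi$ using the Riesz representation theorem; I would mention this as a shorter alternative but the simple-function proof is the most elementary.
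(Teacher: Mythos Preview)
Your proof is correct and complete; both the simple-function approximation argument and the duality alternative are valid. However, the paper does not actually prove this lemma: it simply cites Definition~A.5.20 of Steinwart and Christmann's \emph{Support Vector Machines} textbook and moves on. So you have taken a genuinely different route---a self-contained proof from first principles versus a bare citation. What your approach buys is that the reader need not chase an external reference for what is ultimately a two-line argument; what the paper's approach buys is brevity, which is appropriate since this is a standard textbook fact playing only an auxiliary role. Either is acceptable; your duality alternative is perhaps the cleanest to include if space is a concern.
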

\begin{proof}
    This is Definition A.5.20 of \citet{steinwart2008support}.
\end{proof}

\section{Additional Experimental Details}

\subsection{``Change of Variable'' Trick for Kernel Quadrature}\label{sec:reparam}
In the main text, we have shown that the two major bottlenecks of KQ/NKQ are: 
\begin{itemize}
    \item The closed-form KME $\E_{X \sim \Pb}[k(X, x)]$. 
    \item The $\calO(N^3)$ computational cost of inverting the Gram matrix $k(x_{1:N}, x_{1:N})$.
\end{itemize}
Fortunately, both two challenges can be solved with the ``change of variable'' trick. Here, we only present the idea for KQ but the same holds for NKQ in both stages. 

The integral of interest is $I = \int_\calX h(x) \Pb(dx)$. 
Suppose we can find a continuous transformation $\Phi$ such that $X = \Phi(U)$, where $U \sim \Ub$ is another random variable which is easy to sample from. 
Then the integral $I$ can be equivalently expressed as $I = \int_\calU h(\Phi(u)) d\Ub(u)$, by a direct application of change of variables theorem (Section 8.2 of \cite{stirzaker2003elementary}. 
Now the integrand changes from $h: \calX \to \R$ to $h \circ \Phi: \calU \to \R$ and the kernel quadrature estimator becomes
\begin{align*}
    \hat{I}_{\kq} = \E_{U \sim \Ub} [k_\calU(U, u_{1:N})] \left( k_\calU(u_{1:N}, u_{1:N}) + N \lambda \Id_N \right)^{-1} (h \circ \Phi)(u_{1:N}) .
\end{align*}
Here $k_\calU$ is a reproducing kernel on $\calU$.
Since $\Ub$ is a simple probability distribution, we can find its closed-form KME in Table 1 in \citet{Briol2019PI} or the \texttt{ProbNum} package \citep{Wenger2021}, which addresses the first challenge.
Additionally, notice that both the Gram matrix $k(u_{1:N}, u_{1:N})$ and the KME $\E_{U \sim \Ub} [k(U, u_{1:N})]$ are independent of $h$ and $\Phi$, so the KQ weights $w_{1:N}^{\kq} = \E_{U \sim \Ub} [k(U, u_{1:N})] \left( k(u_{1:N}, u_{1:N}) + N \lambda \Id_N \right)^{-1}$ can be pre-computed and stored. 
As a result, KQ becomes a simple weighted average of function evaluations $\sum_{i=1}^N w_i^{\kq} h(x_i)$. Therefore, the computational cost reduces to linear cost $\calO(N)$ and hence the second challenge is addressed. 
The downside of the ``change of variable'' trick is that the Sobolev smoothness of $h \circ \Phi: \calU \to \R$ is unclear when $\Phi$ is not smooth, so we lose the theoretical convergence rate from \Cref{thm:main}.

\subsection{Synthetic Experiment}\label{sec:appendix_toy}
\paragraph{Assumptions from \Cref{thm:main}}
We would like to check whether the assumptions made in Theorem 1 hold in this synthetic experiment. Recall that we use both $k_\calX$ and $k_\Theta$ to be Mat\'{e}rn-3/2 kernels so we need to verify Assumptions \ref{as:equivalence} --- \ref{as:app_lipschitz} with $s_\Theta=s_\calX=2$.
\begin{enumerate}
    \item Both distributions $\Pb_\theta$ and $\Qb$ are uniform distributions over $[0,1]$, so Assumption \ref{as:equivalence} is satisfied.
    \item $D_\theta^{\beta} g(\cdot, \theta) \in W_2^2(\calX)$ and $D_\theta^{\beta} p(\cdot, \theta) \in L_2(\calX)$ for $\beta = 0, 1, 2$ so Assumption \ref{as:app_true_g_smoothness} is satisfied.
    \item Both $g(x, \cdot), p(x, \cdot) \in W_2^{2}(\Theta)$ so Assumption \ref{as:app_true_J_smoothness} is satisfied.
    \item $f \in C^{3}(\R)$ so Assumption \ref{as:app_lipschitz} is satisfied.
\end{enumerate}

\begin{figure}[t]
    \centering
    \includegraphics[width=1.0\linewidth]{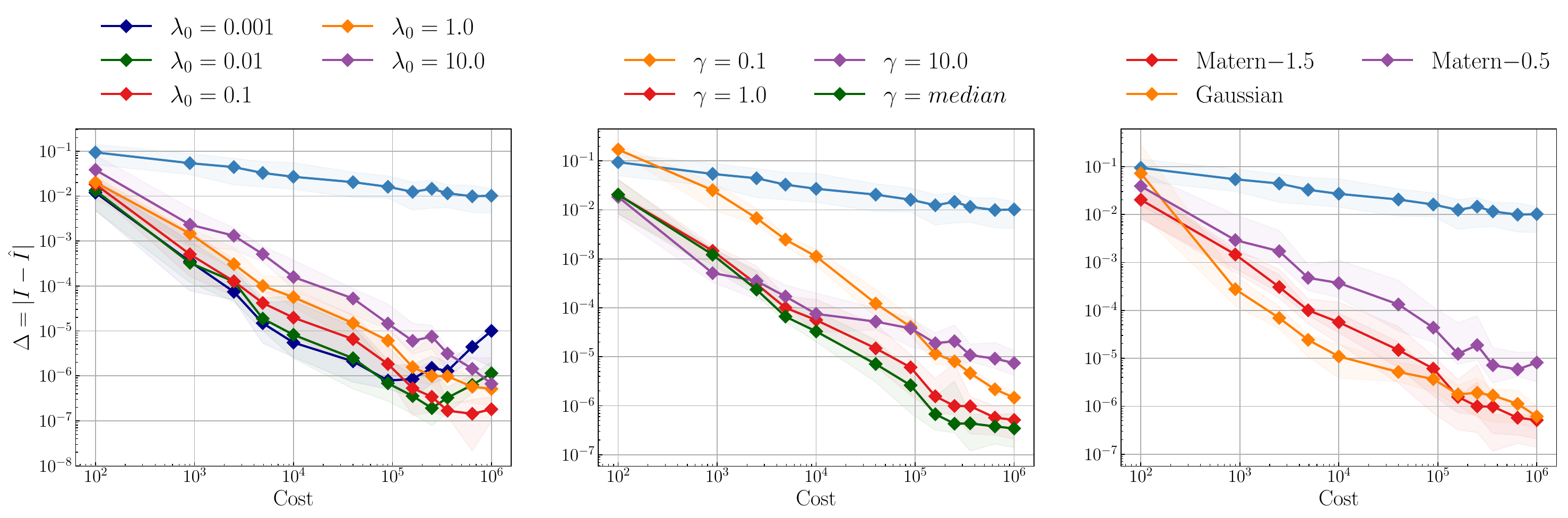}
    \caption{Further ablation studies in the synthetic experiment. \textbf{Left:} NKQ with different proportionality coefficients $\lambda_0$ for regularization parameter $\lambda_\calX, \lambda_\Theta$. 
    \textbf{Middle:} NKQ with different kernel lengthscales $\gamma$ in both stages. 
    \textbf{Right:} NKQ with different kernels in both stages. The nested Monte Carlo (NMC) in blue is presented as a benchmark in all figures.}
    \label{fig:toy-ablation}
    \vspace{-10pt}
\end{figure}

The synthetic problem can be modified to have higher dimensions $d$. In this synthetic experiment, we set both $d_\calX = d_\Theta =d$.
For $a = [a_1 ,\ldots, a_d]^\top \in \R^{d}$, define $\|a\|_{b} = ( \sum_{i=1}^{d} a_i^b)^{1/b}$.
\begin{align}\label{eq:toy_high_d}
    & x \sim \operatorname{U}[0,1]^{d}, \quad \theta \sim \operatorname{U}[0,1]^{d}, \quad g(x, \theta) = \|x\|_{2s}^{2.5} +  \| \theta \|_{2}^{2.5}, \quad f( z) = z^2,
\end{align}
The true value of the nested expectation can be computed in closed-form: $I = \frac{16}{49}d^2 + \frac{25}{294} d$. In \Cref{fig:toy_experiment}, we study the mean absolute error of NMC and NKQ as dimension $d$ grows. 
We see that NKQ outperforms NMC by a huge margin in low dimensions, but the performance gap closes down in higher dimensions, which is expected because the rate proved in \Cref{cor:nkq} is $\calO(\Delta^{- \frac{ d_\calX}{s_\calX} - \frac{d_\Theta}{s_\Theta}})$ which becomes larger when dimension increases yet the smoothness of the problem remains the same.

In \Cref{fig:toy-ablation}, we conduct a series of ablation studies on the hyperparameter of NKQ in the synthetic experiment. 
Although \Cref{thm:main} suggests choosing the regularization parameters $\lambda_\calX, \lambda_\Theta $ that are proportionate to $N^{-2\frac{s_\calX}{d_\calX}}$ and $T^{-2\frac{s_\Theta}{d_\Theta}}$ respectively, it is unclear in practice how to pick the exact proportionality coefficients $\lambda_0$. 
\Cref{fig:toy-ablation} \textbf{Left} shows that $\lambda_0 = 1.0$ and $\lambda_0 = 0.1$ give the best performances, while using $\lambda_0$ too big ($\lambda_0 = 10.0$) suffers from slower convergence rate and using $\lambda_0$ too small ($\lambda_0 = 0.01, 0.001$) causes numerical issues when $N, T$ become large.
\Cref{fig:toy-ablation} \textbf{Middle} shows that kernel lengthscale, if too big ($\gamma=10.0$) or too small ($\gamma=1.0$), would result in worse performance for NKQ and that the widely-used median heuristic is good enough to select a satisfying lengthscale. 
\Cref{fig:toy-ablation} \textbf{Right} shows that NKQ with Matérn-3/2 kernels has better performance than with Matérn-1/2 kernels, which agrees with \Cref{thm:main} indicating that it is preferable to use Sobolev kernels with the highest permissible orders of smoothness.
Interestingly, we see that NKQ with Gaussian kernels has similar performance as with Matérn-3/2 kernels. 
Similar phenomenon have been shown both theoretically and empirically that kernel ridge regression with Gaussian kernels are optimal in learning Sobolev space functions when the lengthscales are chosen appropriately~\citep{hang2021optimal, eberts2013optimal}.

\subsection{Risk Management in Finance}\label{sec:finance}
In this experiment, we consider specifically an asset whose price $S({\tau})$ at time $\tau$ follows the Black-Scholes formula $S(\tau) = S_0 \exp \left(\sigma W(\tau) - \sigma^2 \tau/2 \right)$ for $\tau \geq 0$, where $\sigma$ is the underlying volatility, $S_0$ is the initial price and $W$ is the standard Brownian motion.
The financial derivative we are interested in is a butterfly call option whose payoff at time $\tau$ can be expressed as $\psi(S({\tau}))=\max (S(\tau)-K_1, 0) + \max (S(\tau)-K_2, 0) - 2\max (S(\tau) - (K_1+K_2)/2, 0)$.
We follow the setting in \cite{alfonsi2021multilevel, alfonsi2022many, chen2024conditional} assuming that a shock occur at time $\eta$, at which time the option price is $S(\eta)=\theta$, and this shock multiplies the option price by $1 + s$. The option price at maturity time $\zeta$ is denoted as $S(\zeta) = x$. To summarize, the expected loss caused by the shock can be expressed as the following nested expectation:
\begin{align*}
    I = \E [f(J(\theta))], \quad f(J(\theta)) = \max(J(\theta), 0), \quad J(\theta) = \int_0^\infty g(x) \Pb_\theta(dx), \quad g(x) = \psi(x)-\psi((1+s)x).
\end{align*}
Following the setting in \cite{alfonsi2021multilevel, alfonsi2022many, chen2024conditional}, we consider the initial price $S_0 = 100$, the volatility $\sigma = 0.3$, the strikes $K_1 = 50, K_2 = 150$, the option maturity $\zeta=2$ and the shock happens at $\eta=1$ with strength $s = 0.2$. 
The option price at which the shock occurs are $\theta_{1:T}$ sampled from the log normal distribution deduced from the Black-Scholes formula $\theta_{1:T} \sim \Qb = \operatorname{Lognormal}( \log S_0 - \frac{\sigma^2}{2} \eta, \sigma^2 \eta)$. 
Then $x^{(t)}_{1:N}$ are sampled from another log normal distribution also deduced from the Black-Scholes formula $x^{(t)}_{1:N} \sim \Pb_{\theta_t} = \operatorname{Lognormal}( \log \theta_t - \frac{\sigma^2}{2} (\zeta - \eta), \sigma^2 (\zeta - \eta))$ for $t = 1, \ldots, T$.

In this experimental setting, although both  $g$ only depends on $x$ and it is a combination of piece-wise linear functions so $g \in W_2^{1}(\calX)$. The probability density function of $\Pb_{\theta}$ is infinitely times differentiable 

Notice that log normal distribution $\operatorname{LogNormal}(\bar{m}, \bar{\sigma}^2) $ can be expressed as the following transformation from uniform distribution over $[0,1]$. 
\begin{align*}
    u \sim U[0,1], \quad \exp(\Psi^{-1}(u)\bar{\sigma} + \bar{m}) \sim \operatorname{LogNormal}(\bar{m}, \bar{\sigma}^2) .
\end{align*}
Here, $\Psi^{-1}$ is the inverse cumulative distribution function of a standard normal distribution.
Therefore, we can use the ``change of variables'' trick from \Cref{sec:reparam} such that we have closed-form KME against uniform distribution from \textit{Probnum}~\cite{Wenger2021}, and also the computational complexity of NKQ becomes $\calO(N \times T)$. 
Although $\Psi^{-1}$ is infinitely times differentiable, we still use Mat\'{e}rn-1/2 kernels in both stages to be conservative of the smoothness of the integrand after applying the ``change of variables'' trick.

\begin{figure*}[t]
    \centering
    \begin{minipage}{\textwidth}
    \begin{subfigure}[b]{1.0\linewidth}
        \centering
        \includegraphics[width=0.4\linewidth]{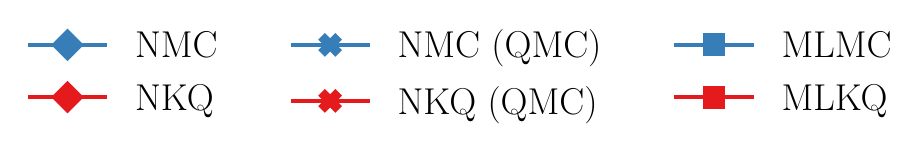}
    \end{subfigure}
    \vspace{-17pt}
    \end{minipage}
    
    \begin{subfigure}[b]{0.32\linewidth}
        \centering
        \includegraphics[width=\linewidth]{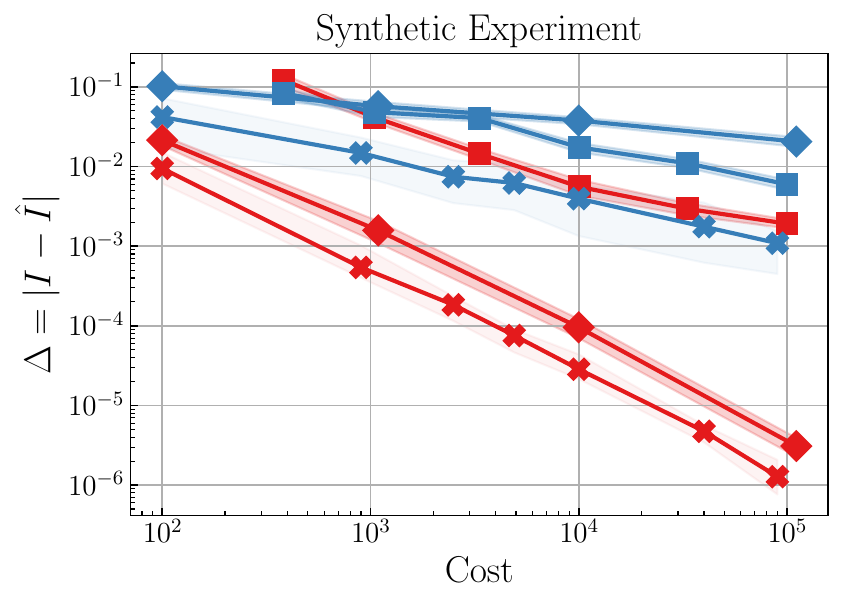}
    \end{subfigure}
    \hfill
    \begin{subfigure}[b]{0.32\linewidth}
        \centering
        \includegraphics[width=\linewidth]{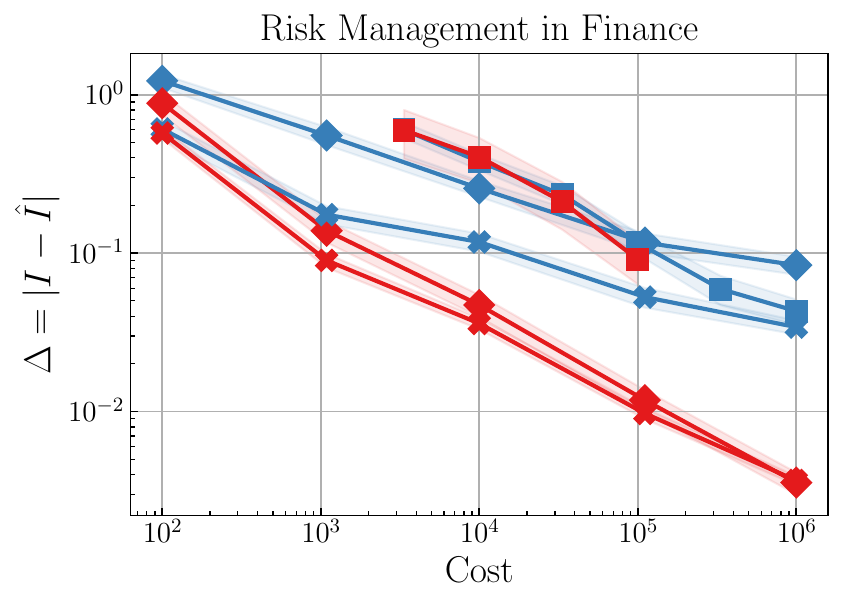}
    \end{subfigure}
    \hfill
    \begin{subfigure}[b]{0.32\linewidth}
        \centering
        \includegraphics[width=\linewidth]{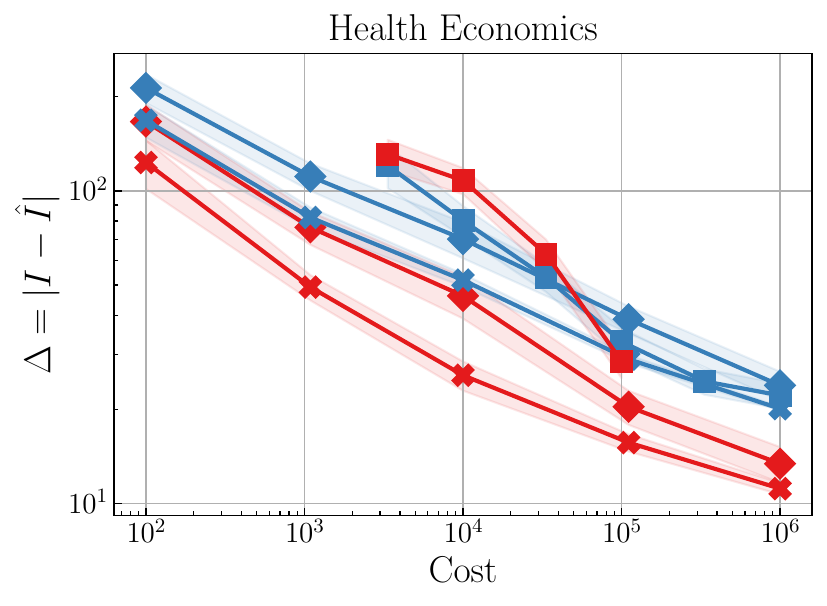}
    \end{subfigure}
    \vspace{-10pt}
    \caption{Comparison of all the methods including MLKQ on the synthetic experiment (\textbf{Left}), risk management in finance (\textbf{Middle}) and health economics (\textbf{Right}). }
    \label{fig:combined_all}
\end{figure*}

\subsection{Health Economics}\label{sec:decision}
In the medical world, it is important to compare the cost and the relative advantages of conducting extra medical experiments. 
The expected value of partial perfect information (EVPPI) quantifies the expected gain from conducting extra experiments to obtain precise knowledge of some unknown variables \citep{brennan2007calculating}:
\begin{align*}
    \text{EVPPI} = \E \Bigl[\max_c J_c(\theta) \Bigr] - \max_c \E \Bigl[J_c(\theta) \Bigr], \quad J_c(\theta) = \int_{\calX} g_c(x, \theta) \Pb_\theta(dx)
\end{align*}
where $c \in \mathcal{C}$ is a set of potential treatments and $g_c$ measures the potential outcome of treatment $c$. EVPPI consists of $|\mathcal{C}| + 1$ nested expectations.

We adopt the same experimental setup as delineated in \cite{Giles2019}, wherein $x$ and $\theta$ have a joint 19-dimensional Gaussian distribution, meaning that the conditional distribution $\Pb_\theta$ is also Gaussian. 
The specific meanings of all $x$ and $\theta$ are outlined in \Cref{tab:mytable}.
All these variables are independent except that $\theta_1, \theta_2, x_6, x_{14}$ are pairwise correlated with a correlation coefficient $0.6$.
We are interested in estimating the EVPPI of a binary decision-making problem ($\calC = \{1, 2\}$) with $g_1(x, \theta)=10^4 (\theta_1 x_5 x_6 + x_7 x_8 x_{9})-(x_1 + x_2 x_3 x_4)$ and $g_2(x, \theta) = 10^4 (\theta_2 x_{13} x_{14} + x_{15} x_{16} x_{17})-(x_{10} + x_{11} x_{12} x_4)$. 
The ground truth EVPPI under this setting is $I=538$ provided in \cite{giles2019decision}.

For estimating $I_1$ with NKQ, we select $k_\calX$ to be Gaussian kernel and $k_\Theta$ to be Mat\'ern-1/2 kernel, because $I_1$ contains a \textit{max} function which breaks the smoothness so we use Mat\'ern-1/2 kernel to be conservative. 
For estimating $I_{2,c}$ with NKQ, we select both to be Gaussian kernels because both $g_1, g_2$ and the probability densities are all infinitely times continuously differentiable. 
We have access to the closed-form KME for both Mat\'ern-1/2 and Gaussian kernels under a Gaussian distribution from \textit{Probnum}~\cite{Wenger2021}.

\subsection{Bayesian Optimization}\label{sec:bo_more}
For NKQ, we use the change of variable trick such that the Gaussian distribution of $f_{\mid \calD_\calS}(z_1, z_2)$ after $\calS$ iterations can be expressed as the pushforward of a fixed uniform distribution $\mathbb{U}$ over $[0,1]^2$ through a continuous mapping $\Phi_\calS$. 
As a result, the KQ weights $\E_{U \sim \Ub} [k_\calU(U, u_{1:N})] \left( k_\calU(u_{1:N}, u_{1:N}) + N \lambda \Id_N \right)^{-1}$ become independent of $\calS$, and can therefore be precomputed and stored in advance. We apply this change-of-variable trick to both Stage I and Stage II KQ steps in our NKQ algorithm, resulting in an overall $\calO(N\times T)$ computational cost at each iteration, matching that of NMC. 

The formulas of the synthetic \texttt{Dropwave}, \texttt{Ackley}, and \texttt{Cosine8} functions are provided below:
\begin{align*}
    &f_{\text {Dropwave }}(x, y) = -\frac{1+\cos \left(12 \sqrt{x^2+y^2}\right)}{0.5\left(x^2+y^2\right)+2}, \quad (x, y)\in [-5.12, 5.12]^2, \\
    &f_{\text {Ackley }}(x) = -20 \exp \left(-0.2 \|x\| \right)-\exp \left(\frac{1}{2} \sum_{i=1}^2 \cos \left(2\pi x_i\right)\right)+20+\exp (1) , \quad x \in [-32.768, 32.768]^2 \\
    &f_{\text {Cosine } 8}(x) = \sum_{i=1}^8 \cos \left(5 \pi x_i\right), \quad x\in[-1,1]^8 . 
\end{align*}

\begin{table}[t]
\centering
\begin{tabular}{
>{\centering\arraybackslash}p{1.5cm}
>{\centering\arraybackslash}p{1cm}
>{\centering\arraybackslash}p{1cm}
>{\centering\arraybackslash}p{5cm}}
\toprule
Variables & Mean & Std & Meaning \\
\midrule
$X_1$ & 1000 & 1.0 & Cost of treatment \\
$X_2$ & 0.1 & 0.02 & Probability of admissions \\
$X_3$ & 5.2 & 1.0 & Days of hospital \\
$X_4$ & 400 & 200 & Cost per day \\
$X_5$ & 0.3 & 0.1 & Utility change if response \\
$X_6$ & 3.0 & 0.5 & Duration of response \\
$X_7$ & 0.25 & 0.1 & Probability of side effects \\
$X_8$ & -0.1 & 0.02 & Change in utility if side effect \\
$X_{9}$ & 0.5 & 0.2 & Duration of side effects \\
$X_{10}$ & 1500 & 1.0 & Cost of treatment \\
$X_{11}$ & 0.08 & 0.02 & Probability of admissions \\
$X_{12}$ & 6.1 & 1.0 & Days of hospital \\
$X_{13}$ & 0.3 & 0.05 & Utility change if response \\
$X_{14}$ & 3.0 & 1.0 & Duration of response \\
$X_{15}$ & 0.2 & 0.05 & Probability of side effects \\
$X_{16}$ & -0.1 & 0.02 & Change in utility if side effect \\
$X_{17}$ & 0.5 & 0.2 & Duration of side effects \\
$\theta_1$ & 0.7 & 0.1 & Probability of responding \\
$\theta_2$ & 0.8 & 0.1 & Probability of responding \\
\bottomrule

\end{tabular}
\vspace{5pt}
\caption{Variables in the health economics experiment.}
\label{tab:mytable}
\end{table}

\end{appendices}

\newpage
\newpage
\end{document}